\newtheorem{definition}{Definition}
\newtheorem{theorem}{Theorem}
\newtheorem{proposition}{Proposition}
\newtheorem{corollary}{Corollary}
\title{GraphMaster: Automated Graph Synthesis via LLM Agents in Data-Limited Environments}
\author{ \href{https://orcid.org/0009-0006-1448-9945}{\includegraphics[scale=0.06]{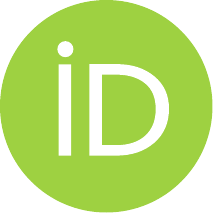}\hspace{1mm}Enjun Du} \\
	Department of Cyberspace Science and Technology\\
	Beijing Institude of Technology\\
	Beijing 100081, China \\
	\texttt{EnjunDu.cs@gmail.com} \\
    \And
	Xunkai Li \\
	Department of Computer Science and Technology\\
	Beijing Institude of Technology\\
	Beijing 100081, China \\
	\texttt{cs.xunkai.li@gmail.com} \\
	%% examples of more authors
	\And
        Tian Jin \\
	Pittsburgh Institute\\
	Sichuan University\\
	Chengdu  610207 China \\
	\texttt{2022141520260@stu.scu.edu.cn} \\
    \And
        Zhihan Zhang \\
	Department of Computer Science and Technology\\
	Beijing Institude of Technology\\
	Beijing 100081, China \\
	\texttt{3220241443@bit.edu.cn} 
    \\
    \And
	\href{https://orcid.org/0000-0001-8658-6599}{\includegraphics[scale=0.06]{orcid.pdf}\hspace{1mm}Rong-Hua Li\thanks{Corresponding author.}} \\
	Department of Computer Science and Technology\\
	Beijing Institude of Technology\\
	Beijing 100081, China \\
    \texttt{lironghuabit@126.com} 
        \\
    \And
        Guoren Wang \\
	Department of Computer Science and Technology\\
	Beijing Institude of Technology\\
	Beijing 100081, China \\
	\texttt{wanggrbit@gmail.com} 
    \\
	%% \AND
	%% Coauthor \\
	%% Affiliation \\
	%% Address \\
	%% \texttt{email} \\
	%% \And
	%% Coauthor \\
	%% Affiliation \\
	%% Address \\
	%% \texttt{email} \\
	%% \And
	%% Coauthor \\
	%% Affiliation \\
	%% Address \\
	%% \texttt{email} \\
}
\begin{document}

\maketitle

\begin{abstract}
The era of foundation models has revolutionized AI research, yet Graph Foundation Models (GFMs) remain constrained by the scarcity of large-scale graph corpora. Traditional graph data synthesis techniques primarily focus on simplistic structural operations, lacking the capacity to generate semantically rich nodes with meaningful textual attributes—a critical limitation for real-world applications. While large language models (LLMs) demonstrate exceptional text generation capabilities, their direct application to graph synthesis is impeded by context window limitations, hallucination phenomena, and structural consistency challenges. To address these issues, we introduce \textbf{GraphMaster}—the first multi-agent framework specifically designed for graph data synthesis in data-limited environments. GraphMaster orchestrates four specialized LLM agents (Manager, Perception, Enhancement, and Evaluation) that collaboratively optimize the synthesis process through iterative refinement, ensuring both semantic coherence and structural integrity. To rigorously evaluate our approach, we create new data-limited “Sub” variants of six standard graph benchmarks, specifically designed to test synthesis capabilities under realistic constraints. Additionally, we develop a novel interpretability assessment framework that combines human evaluation with a principled Grassmannian manifold-based analysis, providing both qualitative and quantitative measures of semantic coherence. Experimental results demonstrate that GraphMaster significantly outperforms traditional synthesis methods across multiple datasets, establishing a strong foundation for advancing GFMs in data-scarce environments.
\end{abstract}

\section{Introduction}

In the era of foundation models, unprecedented advances in natural language processing and computer vision have been enabled by massive training corpora~\cite{brown2020language,dosovitskiy2020image}. Graph Foundation Models (GFMs)~\cite{mao2024position,liu2024graph} represent a promising frontier for AI in graph-structured data, yet their development faces a critical bottleneck: the scarcity of large-scale, diverse graph datasets.Unlike text and image domains where data collection is relatively straightforward, gathering and annotating graph data often requires specialized expertise and significant resources. This data quantity constraint has become the primary challenge for training robust GFMs, particularly as model size increases and demands exponentially more training examples for optimal performance.

Graph data synthesis offers a strategic solution to this fundamental constraint by automatically generating new graph samples that maintain both semantic richness and structural validity. Existing synthesis approaches, however, face substantial limitations. Edge-level operations~\cite{zhao2021data,zhou2020data} manipulate existing connections but cannot create novel nodes or patterns. Node-level mixing techniques like GraphMixup~\cite{zhang2018mixup} generate synthetic nodes by interpolating features but often produce semantically inconsistent attributes, particularly with textual features. Graph-level synthesis methods such as G-Mixup~\cite{han2022g} create entirely new graphs but struggle to balance global structure with local semantic coherence. The core limitation across these traditional methods is their inability to simultaneously preserve meaningful semantics while generating structurally valid expansions—a deficiency particularly pronounced when handling text-attributed graphs (TAGs) where both connectivity patterns and textual node features must remain coherent.

Large language models have demonstrated remarkable capabilities in understanding and generating text~\cite{hu2020gptgnn,feng2025demograph}, suggesting potential for synthesizing text-attributed graphs. However, directly applying LLMs encounters several critical challenges: standard context windows cannot process entire graphs with numerous textual nodes~\cite{chuang2024lookback}; LLMs excel at semantic understanding but struggle to maintain structural consistency~\cite{gao2023autotransfer}; and without proper coordination, they tend to produce inconsistent or hallucinated content that fails to capture the intricate balance between topology and semantics~\cite{mckenna2023sources}. Furthermore, in realistic scenarios with limited available data, LLMs have insufficient examples to learn complex graph patterns~\cite{liu2024multifs}.

To address these challenges, we propose \textbf{GraphMaster}, a novel multi-agent framework specifically designed for graph synthesis in data-limited environments. GraphMaster decomposes the complex synthesis task into specialized sub-tasks handled by four collaborative LLM-powered agents, each targeting specific challenges: (1) The Manager Agent coordinates the overall process and determines optimal synthesis strategies based on current graph characteristics, orchestrating the complex synthesis workflow; (2) The Perception Agent analyzes graph structure and employs advanced sampling to identify representative subgraphs processable within LLM context constraints, directly addressing the context window limitations; (3) The Enhancement Agent generates new nodes and edges with consistent semantics and structure, mitigating hallucination by maintaining coherence with existing graph elements; and (4) The Evaluation Agent assesses quality based on both semantic coherence and structural integrity, providing feedback for iterative improvement to ensure structural and semantic consistency. This decomposition enables targeted solutions for each challenge that a single-pass LLM approach cannot address.

Through this collaborative, iterative process, these specialized agents overcome the limitations of both traditional methods and direct LLM applications. The multi-agent architecture enables GraphMaster to effectively balance semantic richness with structural validity—producing high-quality synthetic graph data even with limited training examples. By introducing modular reasoning (through task decomposition), semantic control (via specialized agent expertise), and iterative optimization (through feedback cycles), GraphMaster achieves synthesis capabilities beyond what single-pass approaches can deliver.

Our contributions can be summerized as follows:
\begin{itemize}
    \item \textbf{New perspective for LLM-based TAG Synthesis}: we first propose a novel multi-agent framework from the RAG perspective to synthesize TAG under data-limited environment. By integrating context retrieval with iterative feedback, this new perspective enables both semantic richness and structural fidelity.
    \item \textbf{Groundbreaking Benchmark}: We introduce a standardized “Data-limited” variant testbed for text-attributed graph synthesis and develop a dual-perspective interpretability assessment—combining expert human evaluation with Grassmann manifold-based analysis—to provide reproducible comparisons and deep semantic-structural insights.
    \item \textbf{State-of-the-Art Performance}: Extensive experiments on multiple datasets and GNN architectures demonstrate that our method consistently outperforms existing baselines, setting a new benchmark for data-limited TAG synthesis.
\end{itemize}

\section{Background Methods}
\subsection{Classic Graph Data Synthesis Methods}

Traditional graph data synthesis methods~\cite{ding2023data} address data scarcity through various approaches. Edge-level operations~\cite{zhao2021data} modify topology by adding or deleting connections. Node-level techniques like GraphSMOTE~\cite{zhao2021graphsmote} generate new nodes through minority class interpolation. Graph-level methods such as G-Mixup~\cite{han2022g} create entirely new graph instances via graphon interpolation. Interpolation-based approaches~\cite{wang2020nodeaug, verma2019manifold} combine hidden representations to enhance model robustness. Despite their diversity, these methods primarily focus on structural manipulations without generating semantically meaningful textual attributes.

\subsection{LLM-based Multi-Agent Systems for Data Generation}

Recent LLM-powered multi-agent frameworks demonstrate capabilities for complex data generation tasks. General collaboration systems like Self-Instruct~\cite{Wang2023SelfInstruct} and distributed simulation platforms~\cite{Pan2024MultiAgent} establish architectures for coordinated AI systems. In graph contexts, approaches like GoG~\cite{Xu2024GoG} and LLM-based social simulations~\cite{Ji2025Arxiv} leverage semantic understanding for graph-related tasks. However, specific applications for text-attributed graph synthesis in data-limited environments remain largely unexplored.

\subsection{Problem Formulation: Graph Data Synthesis}

\textbf{Text-Attributed Graphs.} We formally define a text-attributed graph (TAG) as $\mathcal{G} = (\mathcal{V},\, \mathcal{E},\, \mathcal{X}, \mathcal{Y})$, 
where $\mathcal{V} = \{v_1, v_2, \ldots, v_{N}\}$ is a set of $N$ nodes, $\mathcal{E} \subseteq \mathcal{V} \times \mathcal{V}$ is the set of edges with corresponding adjacency matrix $\mathcal{A} \in \{0, 1\}^{N \times N}$, $\mathcal{X} = \{x_1, x_2, \ldots, x_{N}\}$ is the set of textual features with each $x_i$ corresponding to node $v_i \in \mathcal{V}$, and $\mathcal{Y} = \{y_1, y_2, \ldots, y_{N}\}$ represents the set of node labels where $y_i \in \{1,2,\ldots,C\}$ for $C$ distinct classes.

\textbf{Knowledge Extraction.} Given the context length constraints of LLMs, we define a knowledge extraction function $\Phi: \mathcal{G} \rightarrow \mathcal{K}$ that samples a representative subgraph as:
\begin{equation}
    \mathcal{K} = \Phi(\mathcal{G}) = (\mathcal{V}_k, \mathcal{E}_k, \mathcal{X}_k, \mathcal{Y}_k),
\end{equation}
where $\mathcal{V}_k \subset \mathcal{V}$, $\mathcal{E}_k = \{(v_i,v_j) \in \mathcal{E} \mid v_i,v_j \in \mathcal{V}_k\}$, and $\mathcal{X}_k, \mathcal{Y}_k$ are the corresponding text attributes and labels. The extraction function $\Phi$ employs specialized sampling strategies to ensure $\mathcal{K}$ captures both structural and semantic characteristics of $\mathcal{G}$ while remaining within LLM context limits.

\textbf{Graph Synthesis Process.} The graph synthesis process is formalized as a function $\Psi: \mathcal{K} \rightarrow \mathcal{G}_s$ that generates new graph elements based on the extracted knowledge:
\begin{equation}
    \mathcal {G}_s = \Psi(\mathcal{K}) = (\mathcal{V}_s, \mathcal{E}_s, \mathcal{X}_s, \mathcal{Y}_s),
\label{eq:synthesis}
\end{equation}

where $\mathcal{G}_s$ represents the synthesized graph components. Function $\Psi$ is implemented through our framework that encompasses both semantic understanding and structural pattern recognition.

\textbf{Graph Synthesis.} The final enhanced graph merges the original and synthesized components:
\begin{equation}
\mathcal{G}_{\text{new}} = \mathcal{G} \oplus \mathcal{G}_s = (\mathcal{V} \cup \mathcal{V}_s, \mathcal{E} \cup \mathcal{E}_s \cup \mathcal{E}_c, \mathcal{X} \cup \mathcal{X}_s, \mathcal{Y} \cup \mathcal{Y}_s),
\end{equation}

where $\mathcal{E}_c = \{(v_i,v_j) \mid v_i \in \mathcal{V}, v_j \in \mathcal{V}_s\}$ represents newly created edges connecting original and synthetic nodes. The quality of $\mathcal{G}_{\text{new}}$ is evaluated based on both semantic coherence (how well $\mathcal{X}_s$ aligns with original textual patterns) and structural fidelity (how well $\mathcal{E}_s$ and $\mathcal{E}_c$ preserve the topological characteristics of $\mathcal{G}$).

\section{The Proposed Method}

\begin{figure*}[t]
    \centering
    \includegraphics[width=\linewidth]{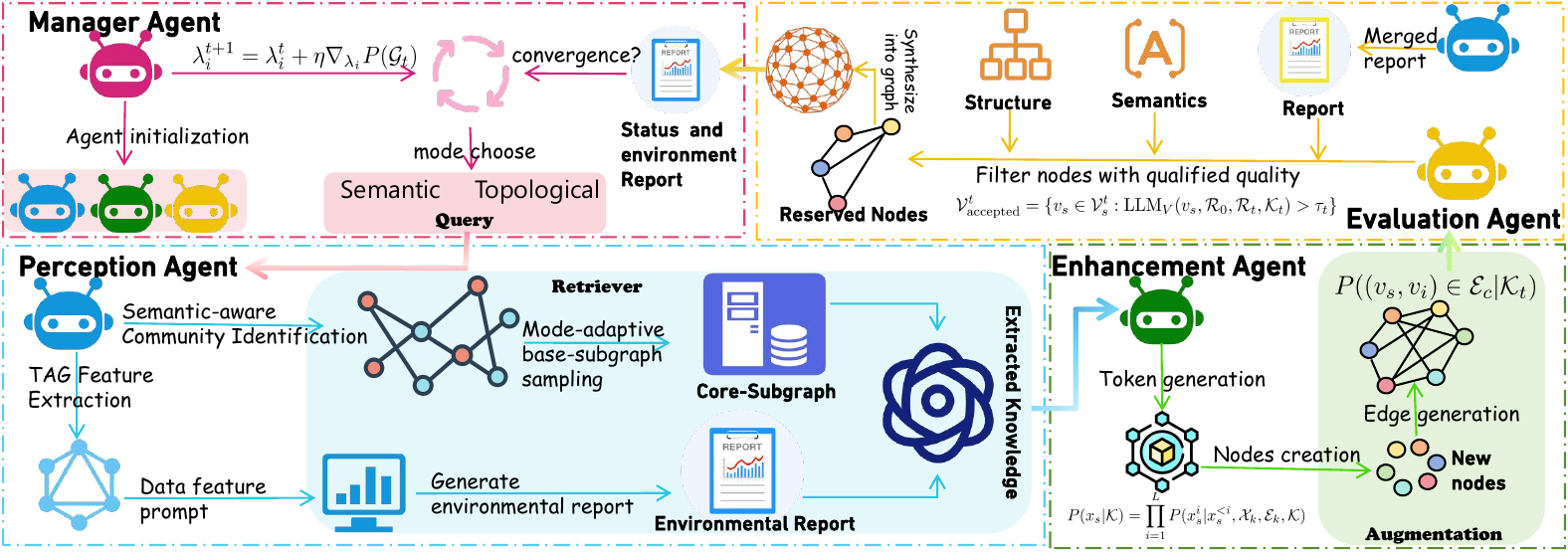}
    \caption{GraphMaster: A hierarchical multi-agent framework for text-attributed graph synthesis.}
    \label{fig:main_picture}
\end{figure*}

We present GraphMaster, a multi-agent framework conceptualized through the lens of Retrieval-Augmented Generation (RAG)~\cite{peng2024graphrag,zhang2025customizedgraphrag} to address the challenges of graph synthesis in data-constrained environments. As illustrated in Figure~\ref{fig:main_picture}, GraphMaster implements a hierarchical RAG paradigm wherein four specialized LLM-powered agents operate collaboratively in a recursive optimization loop to generate semantically rich and structurally coherent graph extensions.

\vspace{-4pt}
\subsection{Framework Overview: RAG-Based Multi-Agent Architecture}
\vspace{-4pt}

GraphMaster formalizes graph synthesis as an iterative RAG process, operating through specialized agents in a closed-loop optimization system. While a single LLM might possess the theoretical capability to understand graph structures, the inherent complexity of generating coherent graph data necessitates a specialized multi-agent approach for three critical reasons. First, real-world graphs substantially exceed typical LLM context windows, requiring strategic sampling and knowledge extraction. Second, maintaining structural consistency across generated elements demands focused attention on connectivity patterns that single-pass generation cannot guarantee. Third, controlling hallucination requires continuous evaluation and refinement through iterative feedback. A collaborative multi-agent architecture effectively addresses these challenges through specialization and integration:

\begin{equation}
\mathcal{G}_{\text{new}} = \Psi_{\text{RAG}}(\mathcal{G}, \mathcal{Q}, \mathcal{R}, \mathcal{A}_{\text{retrieve}}, \mathcal{A}_{\text{generate}}, \mathcal{A}_{\text{evaluate}})
\end{equation}

where $\mathcal{G}$ is the original graph, $\mathcal{Q}$ represents the query formulation (enhancement mode), $\mathcal{R}$ denotes the retrieval strategy, and $\mathcal{A}_{\text{retrieve}}$, $\mathcal{A}_{\text{generate}}$, and $\mathcal{A}_{\text{evaluate}}$ correspond to the agent-specific functions for retrieval, generation, and evaluation, respectively. In each iteration, the Manager Agent formulates the query to guide the synthesis process, the Perception Agent retrieves relevant context to overcome context window limitations, the Enhancement Agent generates new content while maintaining structural consistency, and the Evaluation Agent assesses quality and addresses potential hallucinations--with this cycle continuing until convergence. This collaborative framework enables each agent to focus on a specific challenge while collectively producing coherent graph extensions that a single-pass approach cannot achieve.

\vspace{-4pt}
\subsection{Manager Agent: Query Optimization and Control Mechanism}
\vspace{-4pt}

The Manager Agent serves as the meta-cognitive controller that formulates the synthesis query $\mathcal{Q}_t$ at iteration $t$ based on a comprehensive environmental status report $\mathcal{R}_t = \text{LLM}_P(\mathcal{G}_t)$ generated by the Perception Agent. The mode selection function is formalized as:
$
M_t = \text{LLM}_M(\mathcal{R}_t) \in \{\text{semantic}, \text{topological}\}
$
where $\text{LLM}_M$ represents the Manager Agent's reasoning process that analyzes community structures and label distributions captured in $\mathcal{R}_t$. This query formulation implements an adaptive mechanism where the Manager optimizes a multi-objective utility function:

\begin{equation}
\omega^*_t = \underset{\omega \in \Omega}{\arg\max} \left[\lambda_1 U_{\text{sem}}(\omega, \mathcal{G}_t) + \lambda_2 U_{\text{struct}}(\omega, \mathcal{G}_t) + \lambda_3 U_{\text{bal}}(\omega, \mathcal{G}_t)\right]
\end{equation}

where $\Omega$ is the strategy space, $U_{\text{sem}}$, $U_{\text{struct}}$, and $U_{\text{bal}}$ represent semantic coherence, structural integrity, and class balance utilities respectively, with adaptive weights $\lambda_i$ that evolve according to:
$\lambda_i^{t+1} = \lambda_i^t + \eta \nabla_{\lambda_i} P(\mathcal{G}_t)$
where $P(\mathcal{G}_t)$ measures synthesis progress and $\eta$ is a learning rate. The Manager orchestrates state transitions, modeled as $s_{t+1} = T(s_t, a_t, M_t)$, where states $s_t$ reflect graph composition and actions $a_t \in \{a_P, a_E, a_V\}$ correspond to agent invocations for Perception, Enhancement, and Evaluation respectively.

\vspace{-4pt}
\subsection{Perception Agent: Context-Aware Corpus Retrieval}
\vspace{-4pt}

The Perception Agent implements the retrieval component of the RAG paradigm, extracting a relevant subgraph from the input graph $\mathcal{G}_t$ based on the query $\mathcal{Q}_t=M_t$. This retrieval process is formalized as:
$
\mathcal{K}_t = \mathscr R (\mathcal{G}_t, \mathcal{Q}_t) = (\mathcal{V}_k, \mathcal{E}_k, \mathcal{X}_k, \mathcal{Y}_k)
$
where $\mathcal{K}_t$ represents the retrieved knowledge capsule. The retrieval function $\mathscr R$ operates through three sequential stages:

\textbf{Semantic-aware Community Identification:} The agent employs a semantic-enriched modularity maximization algorithm to calculate the community distribution of semantic associations for TAG:

\begin{equation}
Q_{\text{sem}} = \frac{1}{2m} \sum_{i,j} \left[\mathcal{A}_{ij} - \gamma \frac{k_i k_j}{2m} - (1-\gamma)\frac{d_{\text{sem}}(x_i, x_j)}{\sum_{l,m} d_{\text{sem}}(x_l, x_m)}\right] \delta(c_i, c_j)
\end{equation}

where $d_{\text{sem}}(x_i, x_j)=\frac{\mathbf{x}_i \cdot \mathbf{x}_j}{\|\mathbf{x}_i\| \|\mathbf{x}_j\|}$ computes semantic similarity between node attributes, $k_i$ and $k_j$ represent the degrees of nodes $i$ and $j$, and $\gamma$ balances topological and semantic factors.

\textbf{Mode-Adaptive Seed Selection Strategy:} Based on the enhancement mode $M_t$, the agent selects an optimal seed community $\mathcal{C}_b$:

\begin{equation}
\mathcal{C}_b = 
\begin{cases}
\underset{i}{\arg\min}\ |\mathcal{C}_i| \cdot \left(1 + \mu \cdot \text{Var}(\{x_j : v_j \in \mathcal{C}_i\})\right), & \text{if } M_t = \text{semantic}, \\
\{v_j \in \mathcal{V}_{\text{train}} : y_j = \underset{c}{\arg\max}\ \phi_{\text{imbal}}(c)\}, & \text{if } M_t = \text{topological}
\end{cases}
\end{equation}

where $\phi_{\text{imbal}}(c) = \max_{c'} |\mathcal{V}_{c'}|/|\mathcal{V}_c|$ quantifies class imbalance, Var denotes the variance of node textual features within a community and $\mu$ weights semantic variance importance. For semantic synthesis, the smaller communities with low internal semantic variance are prefer to be selected to establish a cohesive foundation. For topological synthesis, nodes from minority classes are prioritized.

\textbf{Hierarchical Stochastic Diffusion Sampling:} The agent employs a mode-conditional Personalized PageRank (PPR) algorithm, defined as \( \bm{\pi}^{(k+1)} = \alpha \bm{v} + (1-\alpha)W^T \bm{\pi}^{(k)} \), where the teleportation vector \( \bm{v} \) varies by enhancement mode: \( v_i = \frac{1}{|\mathcal{C}_b|} \) if \( v_i \in \mathcal{C}_b \) and \( M_t = \text{semantic} \), or \( v_i = \frac{\mathbb{I}[y_i = \hat{y}]}{\sum_j \mathbb{I}[y_j = \hat{y}]} \) if \( v_i \in \mathcal{V}_{\text{train}} \) and \( M_t = \text{topological} \), where \( \hat{y} = \underset{c}{\arg\min}|\{v_j \in \mathcal{V}_{\text{train}} : y_j = c\}| \) identifies the label with minimal representation.
Following diffusion convergence, the final knowledge subgraph is selected as:

\begin{equation}
\mathcal{K}_t = \{v_i \in \mathcal{S}_{\text{top-}K\%} : r_i < \min(1, \beta \cdot \bm{\pi}_i/\max_j \bm{\pi}_j)\} \cap \mathcal{S}_{\text{diverse}}
\end{equation}

where $|\mathcal{K}_t|=N$ is constrained by the LLM context window, $\mathcal{S}_{\text{top-}K\%}$ contains the top $K\%$ nodes by PPR score, $r_i \sim \text{Uniform}(0,1)$ introduces controlled stochasticity, and $\mathcal{S}_{\text{diverse}}$ ensures community coverage. The environmental status report $\mathcal{R}_t$ encapsulates multi-scale graph properties:

\begin{equation}
\mathcal{R}_t = \left(\rho_{\text{global}}, \{\rho_{\text{class}}^c\}_{c=1}^C, \{\rho_{\text{comm}}^i\}_{i=1}^{|\mathcal{C}|}, \mathcal{D}_{\text{struct}}, \mathcal{D}_{\text{sem}}\right)
\end{equation}

where $\rho_{\text{global}}$ captures global statistics, $\rho_{\text{class}}^c$ and $\rho_{\text{comm}}^i$ encode class-level and community-level properties, while $\mathcal{D}_{\text{struct}}$ and $\mathcal{D}_{\text{sem}}$ represent structural and semantic distributions.

\vspace{-4pt}
\subsection{Enhancement Agent: Context-Conditioned Generation}
\vspace{-4pt}

The Enhancement Agent implements the generation component of the RAG paradigm, synthesizing new graph elements (no more than M\% of the knowledge subgraph) based on the retrieved knowledge and environmental report. The synthesis process follows Eq.~\eqref{eq:synthesis} where $\mathcal{K}=(\mathcal{K}_t,\mathcal{R}_t,M_t)$. For semantic mode, the LLM generates node attributes using a conditional autoregressive model:

\begin{equation}
P(x_s | \mathcal{K}) = \prod_{i=1}^{L} P(x_s^i | x_s^{<i}, \mathcal{X}_k, \mathcal{E}_k, \mathcal{K})
\end{equation}

where $x_s^i$ is the $i$-th token of attribute $x_s$, and $L$ is the sequence length. This formulation enables the LLM to generate coherent textual attributes that maintain consistency with the knowledge subgraph while introducing appropriate variations.

Crucially, regardless of the current enhancement mode, the agent always generates both node attributes and their connections. For topological mode, the LLM models edge connections between new node $v_s$ and existing nodes by estimating the probability:

\begin{equation}
P((v_s, v_i) \in \mathcal{E}_c|\mathcal{K}) = \sigma\left(\theta_1 \cdot \text{sim}(x_s, x_i) + \theta_2 \cdot \frac{|\mathcal{N}(v_i) \cap \mathcal{N}_K(v_s)|}{|\mathcal{N}_K(v_s)|} + \theta_3 \cdot \frac{k_i}{\max_j k_j}\right)
\end{equation}

where $\mathcal{N}(v_i)$ is the neighborhood of $v_i$, $\mathcal{N}_K(v_s)$ represents neighbors of $v_s$ in the knowledge subgraph, and $\sigma$ is the sigmoid function. The coefficients $\{\theta_j\}_{j=1}^3$ are dynamically adjusted based on the query mode $M_t$. This dual-mode generation enables GraphMaster to adaptively emphasize either semantic coherence or structural fidelity while maintaining integrity across both dimensions.

\subsection{Evaluation Agent: Multi-dimensional Quality Assessment}

The Evaluation Agent implements a comprehensive verification mechanism that integrates four critical information sources:

\begin{equation}
\mathcal{Q}_t = \text{LLM}_V(\mathcal{R}_0, \mathcal{R}_t, \mathcal{K}_t, \mathcal{G}_s^t)
\end{equation}

where $\mathcal{Q}_t$ represents the quality assessment outcome, $\mathcal{R}_0$ is the initial environmental report serving as a baseline, $\mathcal{R}_t$ is the current environmental report, $\mathcal{K}_t$ is the retrieved knowledge, and $\mathcal{G}_s^t$ is the newly synthesized data. The Evaluation Agent simultaneously assesses two key dimensions: \textbf{(i) Semantic Coherence:} Evaluates whether the generated textual attributes are contextually appropriate, domain-consistent, and meaningful within the graph's thematic scope. \textbf{(ii) Structural Integrity:} Assesses whether the new edges form logical connections that preserve the original graph's topological patterns while addressing structural gaps.

For each generated node $v_s \in \mathcal{V}_s^t$, the LLM computes a composite quality score, with the final accepted node set defined as:

\begin{equation}
\mathcal{V}_{\text{accepted}}^t = \{v_s \in \mathcal{V}_s^t : \text{LLM}_V(v_s, \mathcal{R}_0, \mathcal{R}_t, \mathcal{K}_t) > \tau_t\}
\end{equation}

where the threshold $\tau_t$ is adaptively updated:
$\tau_t = \tau_{t-1} + \zeta(\bar{\mathcal{F}}_t(\omega^*_t) - \bar{\mathcal{F}}_{t-1}(\omega^*_{t-1}))$
with $\bar{\mathcal{F}}_t(\omega^*_t)$ representing the average quality score at iteration $t$. The convergence determination employs a temporal quality gradient analysis:

\begin{equation}
\text{Converged}_t = \mathbb{I}\left(\max_{j \in \{1,...,k\}} |\bar{\mathcal{F}}_t(\omega^*_t) - \bar{\mathcal{F}}_{t-j}(\omega^*_{t-j})| < \epsilon \land \text{LLM}_{\text{goal}}(\mathcal{R}_0, \mathcal{R}_t) = \text{True}\right)
\end{equation}

where $\mathbb{I}(\cdot)$ is the indicator function and $\text{LLM}_{\text{goal}}$ assesses whether synthesis objectives have been achieved. If convergence is detected, the Evaluation Agent signals task completion to the Manager Agent; otherwise, it triggers another iteration of the synthesis process\footnote{We compares GraphMaster with recent remarkable graph data synthsis methods in Appendix~\ref{Detailed comparison between GraphMaster and GAG}. Theoretical analysis of agent capabilities are given in Appendix~\ref{Theoretical Analysis of Agent Capabilities}.}.

\vspace{-4pt}
\subsection{Time Complexity Analysis}
\vspace{-4pt}

The time complexity of GraphMaster is dominated by three operations: (1) community detection and PPR computation in the Perception Agent, which run in near-linear time $O(|V_t| + |E_t|)$ on the current graph; (2) LLM inference for node attribute generation and edge probability estimation, which scales with the size $N$ of the retrieved subgraph rather than the full graph; and (3) quality assessment, which evaluates a fixed number of newly generated nodes against predetermined criteria. Since $N$ is constrained by the LLM context window and typically small relative to $|V_t|$, the LLM operations remain efficient regardless of overall graph size. If the iterative process runs for $T$ iterations before convergence (generally small due to the Evaluation Agent’s stringent criteria), the overall complexity is $T$ times the per-iteration cost. This architecture enables GraphMaster to scale effectively by leveraging LLMs for semantic generation on bounded contexts while using efficient graph algorithms for structural computations.

\vspace{-9pt}
\section{Experiment}
\vspace{-4pt}

To evaluate GraphMaster comprehensively, we formulate four research questions: \textbf{(RQ1)}: Can GraphMaster generate high-quality text-attributed graph data in data-limited environment? \textbf{(RQ2)}: Can the graph data synthesized by GraphMaster retain the original graph features well? \textbf{(RQ3)}: Can GraphMaster maintain interpretability well? \textbf{(RQ4)}: What is the relative contribution of each component in GraphMaster to the overall synthesis quality?

\vspace{-4pt}
\subsection{Overall Performance (RQ1)}
\vspace{-4pt}

We evaluate GraphMaster's ability to synthesize high-quality graph data by applying it to enhance the data-limited datasets we created and assessing whether the enhanced datasets improve downstream model performance. We employ standard metrics including Accuracy and F1 Score as evaluation criteria, with higher values indicating superior performance.

\vspace{-4pt}
\paragraph{Baselines and Datasets.}
The comparative baselines are categorized into five groups: (1) original TAG training without data synthesis; (2) Classic data augmentation methods: GAugO~\cite{zhao2021data}; (3) LLM-based data aigmentation methods:  GraphEdit~\cite{guo2025graphedit} and LLM4RGNN~\cite{zhang2025can}; (4) Classic data synthesis methods: GraphSmote~\cite{zhao2021graphsmote}, G-Mixup~\cite{han2022g}, IntraMix~\cite{zheng2024intramix}, GraphAdasyn~\cite{lu2025graphadasyn}, FG-SMOTE~\cite{wang2025fgsmote}, and AGMixup~\cite{lu2025agmixup}; (5) LLM-based data synthesis methods: GAG~\cite{ji2025gag} and LLM4NG~\cite{yu2025llm4ng}, noting that there are very limited baselines for TAG synthesis using LLM, and we created these two additional baselines named Mixed-LLM and Synthesis-LLM, whose implementations can be found in the Appendix~\ref{The implement details of newly created baseline}.
Our experiments utilize six widely recognized text-attributed graph datasets: Cora~\cite{mccallum2000automating}, Citeseer~\cite{giles1998citeseer}, Wikics~\cite{dwivedi2020benchmarkgnns}, Arxiv2023~\cite{Rigoux2024Harnessing}, and History and Children~\cite{yan2023comprehensive}. It is worth noting that in order to better simulate the data-limited environment to test the effect of data synthesis, we created 6 data-limited datasets, namely SubCora, SubCiteseer, SubWikics, SubHistory, SubArxiv2023, and SubChildren (details are given in Appendix~\ref{Data-limited Datasets Creation}). In this article, unless otherwise specified, we assume that the augmentation-based method uses the original dataset, while the synthesis-based method uses the data-limited dataset we created. For downstream task evaluation, we implement four established graph neural network architectures: GCN~\cite{kipf2017semi}, JKNET~\cite{xu2018jumping}, GraphSage~\cite{hamilton2017inductive} and GAT~\cite{velickovic2018graph}. 

\vspace{-4pt}
\paragraph{Implement Details.} 
We ran the entire experiment on an 80G A100 GPU, using the QwQ-32B model~\cite{qwq32b} as the base LLM and enabling it to assume different agent roles through iterative calls. For the background knowledge nodes, we set 
$N=30$, and for the newly generated nodes, we configured 
$M\%=15\%$ (The hyperparameter selection analysis are given in Appendix~\ref{Hyperparameter Selection Analysis}). In training the GNN model, we first initialized the text attributes with Sentence-BERT~\cite{reimers-gurevych-2019-sentence} to generate the initial features before proceeding with training. To ensure the robustness of our experiments, we repeated each experiment 50 times and reported the mean and standard deviation of the results. 

\begin{table*}[ht]
\vspace{-3pt}
\centering
\caption{Comparison of GraphMaster with other TAG synthesis methods in GCN model. Best performance is indicated by the \textbf{bold} face numbers, and the \underline{underline} means the second best. `Acc' and `F1' are short for Accuracy and F1 Score, respectively.}
\vspace{-4pt}
\resizebox{\textwidth}{!}{
\renewcommand{\arraystretch}{1.03}
\begin{tabular}{lc|cc|cc|cc|cc|cc|cc}
\hline
\multirow{2}{*}{Type} & \multirow{2}{*}{Model} & \multicolumn{2}{c|}{Cora} & \multicolumn{2}{c|}{Citeseer} & \multicolumn{2}{c|}{Wikics} & \multicolumn{2}{c|}{History} & \multicolumn{2}{c|}{Arxiv2023} & \multicolumn{2}{c}{Children}\\
 & & Acc & F1 & Acc & F1 & Acc & F1 & Acc & F1 & Acc & F1 & Acc & F1\\
\hline
\multirow{1}{*}{Original}
& Original Model & 88.9$\pm$1.1 & 88.5$\pm$1.5 & 78.1$\pm$1.1 & 75.0$\pm$0.3 & 79.7$\pm$0.8 & 77.8$\pm$0.3 & 84.2$\pm$0.6 & 43.1$\pm$0.3 & 76.3$\pm$1.0 & 54.9$\pm$0.5 & 52.6$\pm$0.7 & 32.3$\pm$1.5 \\
\cline{1-14}
\multirow{1}{*}{Classic-Aug}
& GAugO & 88.9$\pm$0.8 & 88.0$\pm$1.0 & 78.1$\pm$0.7 & 77.2$\pm$0.3 & 79.9$\pm$0.9 & 77.7$\pm$0.7 & 84.6$\pm$1.5 & 44.7$\pm$0.6 & 76.8$\pm$1.4 & 53.0$\pm$0.3 & 51.8$\pm$0.6 & 33.6$\pm$0.7\\
\cline{1-14}
\multirow{2}{*}{LLM-Aug}
& GraphEdit & 91.0$\pm$0.9 & \underline{89.7$\pm$0.6} & 81.9$\pm$0.9 & 80.8$\pm$1.1 & 82.0$\pm$1.1 & 80.7$\pm$1.2 & 87.6$\pm$0.6 & 45.7$\pm$1.3 & 78.0$\pm$0.9 & 57.8$\pm$0.3 & 54.3$\pm$0.7 & 35.7$\pm$1.4\\
& LLM4RGNN & \underline{91.2$\pm$0.6} & 88.8$\pm$1.1 & 80.9$\pm$1.3 & 76.6$\pm$1.1 & 83.6$\pm$1.4 & 81.6$\pm$1.5 & 88.9$\pm$0.7 & 48.6$\pm$0.4 & 79.3$\pm$1.1 & 59.1$\pm$0.4 & 55.7$\pm$1.4 & 36.7$\pm$0.8\\
\cline{1-14}
\multirow{6}{*}{Classic-Syn}
& GraphSmote & 88.7$\pm$1.4 & 87.4$\pm$1.1 & 78.1$\pm$0.5 & 74.6$\pm$1.4 & 80.7$\pm$1.5 & 78.6$\pm$1.4 & 84.9$\pm$0.5 & 43.9$\pm$0.3 & 76.2$\pm$0.4 & 55.5$\pm$0.4 & 53.1$\pm$0.9 & 33.2$\pm$0.6\\
& G-Mixup & 87.4$\pm$1.1 & 87.0$\pm$0.7 & 78.2$\pm$0.9 & 76.8$\pm$1.1 & 79.7$\pm$0.4 & 78.0$\pm$0.8 & 84.6$\pm$0.6 & 43.6$\pm$0.6 & 76.6$\pm$0.4 & 56.5$\pm$0.3 & 53.0$\pm$0.6 & 33.0$\pm$0.3\\
& IntraMix & 80.9$\pm$0.6 & 82.8$\pm$0.7 & 71.3$\pm$0.8 & 70.7$\pm$0.5 & 73.7$\pm$1.0 & 74.5$\pm$0.4 & 82.4$\pm$1.5 & 42.7$\pm$0.6 & 72.4$\pm$1.1 & 53.9$\pm$0.8 & 45.2$\pm$0.9 & 30.1$\pm$0.7\\
& GraphAdasyn & 89.2$\pm$0.3 & 88.7$\pm$0.5 & 78.9$\pm$1.0 & 78.4$\pm$1.4 & 80.8$\pm$1.2 & 78.9$\pm$0.7 & 84.6$\pm$1.5 & 46.1$\pm$0.8 & 77.5$\pm$0.7 & 57.0$\pm$1.1 & 53.6$\pm$0.5 & 33.0$\pm$0.6\\
& FG-SMOTE & 88.9$\pm$1.5 & 87.6$\pm$1.0 & 78.7$\pm$0.8 & 74.7$\pm$1.4 & 81.0$\pm$0.8 & 79.0$\pm$1.0 & 85.0$\pm$1.4 & 44.0$\pm$0.9 & 76.4$\pm$1.2 & 55.8$\pm$1.4 & 53.1$\pm$1.5 & 33.3$\pm$0.9\\
& AGMixup & 84.7$\pm$0.4 & 86.6$\pm$0.4 & 71.7$\pm$0.9 & 73.2$\pm$1.1 & 78.8$\pm$0.9 & 76.6$\pm$1.1 & 81.8$\pm$0.9 & 42.9$\pm$0.3 & 76.8$\pm$1.1 & 53.7$\pm$0.5 & 53.6$\pm$1.0 & 32.6$\pm$0.8\\
\cline{1-14}
\multirow{5}{*}{LLM-Syn}
& GAG & 91.0$\pm$1.2 & 89.3$\pm$0.4 & 82.8$\pm$0.9 & 80.0$\pm$1.5 & 84.9$\pm$1.3 & 83.2$\pm$0.7 & 88.9$\pm$0.5 & 49.8$\pm$0.8 & 79.9$\pm$0.5 & 59.4$\pm$1.0 & 56.7$\pm$0.8 & 38.0$\pm$0.5\\
& LLM4NG & 85.9$\pm$0.3 & 84.0$\pm$0.2 & 73.9$\pm$0.2 & 72.0$\pm$0.3 & 72.8$\pm$0.1 & 72.9 $\pm$0.4 & 82.5 $\pm$0.5 & 48.4 $\pm$0.3 & 89.0 $\pm$0.2 & 81.2 $\pm$0.5 & 44.6 $\pm$0.2 & 27.2 $\pm$0.4\\
& Mixed-LLM & 89.9$\pm$0.5 & 89.3$\pm$0.4 & 83.5$\pm$0.9 & 81.3$\pm$0.8 & 84.9$\pm$0.9 & \underline{83.4$\pm$0.8} & 89.2$\pm$1.3 & 55.8$\pm$0.8 & 81.4$\pm$1.5 & 61.2$\pm$0.9 & 60.0$\pm$0.7 & 39.6$\pm$0.7\\
& Synthesis-LLM & 89.8$\pm$1.3 & 89.1$\pm$1.0 & \underline{84.5$\pm$1.0} & 82.7$\pm$1.1 & 84.8$\pm$0.4 & 83.2$\pm$0.5 & 89.4$\pm$1.3 & 53.4$\pm$1.3 & 81.0$\pm$1.2 & \underline{62.3$\pm$0.8} & 60.9$\pm$1.0 & 40.1$\pm$0.9\\
\cline{2-14}
\rowcolor{cyan!25}& \textbf{GraphMaster} & \textbf{93.7$\pm$1.0} & \textbf{92.5$\pm$1.0} & \textbf{88.3$\pm$0.9} & \textbf{87.7$\pm$1.1} & \textbf{87.9$\pm$0.8} & \textbf{86.8$\pm$0.9} & \textbf{92.6$\pm$1.3} & \textbf{63.4$\pm$1.4} & \textbf{87.9$\pm$1.3} & \textbf{66.3$\pm$1.5} & \textbf{68.8$\pm$1.4} & \textbf{47.8$\pm$1.3}\\
\hline
\end{tabular}}
\label{tab:main}
\end{table*}

\vspace{-4pt}
\paragraph{Results.} 
As shown in Table~\ref{tab:main} (Due to space limitation, other three models' results are given in Appendix~\ref{Detailed experimental results}.), GraphMaster consistently outperforms all baselines, demonstrating the superiority of our approach. Notably, we observed that some baseline methods even yield lower performance than the original dataset. This is primarily because traditional graph synthesis techniques fail to capture the semantic nuances of sentences; consequently, when using Sentence-BERT embeddings instead of bag-of-words representations, their effectiveness is significantly diminished. Moreover, the other LLM-based baselines we compared against mainly focus on anti-interference detection or data synthesis on other scenarios rather than TAG data synthesis, resulting in their performance being significantly lower than that of GraphMaster, which targets TAG data synthesis. Finally, the two LLM-based TAG synthesis baselines we developed show significant advantages over traditional baselines. However, since they cannot fully understand the semantics and topological structure of TAG, although they are higher than other baselines, they are still significantly lower than GraphMaster\footnote{Case study are given in Appendix~\ref{Case Study}.}. 

\vspace{-4pt}
\subsection{Synthetic Graph Feature Analysis (RQ2)} \label{Synthetic Graph Feature Analysis}
\vspace{-4pt}

\begin{figure}[h]
  \centering
  \begin{subfigure}[b]{0.32\textwidth}
    \centering
    \includegraphics[width=\textwidth,height=3.0cm]{./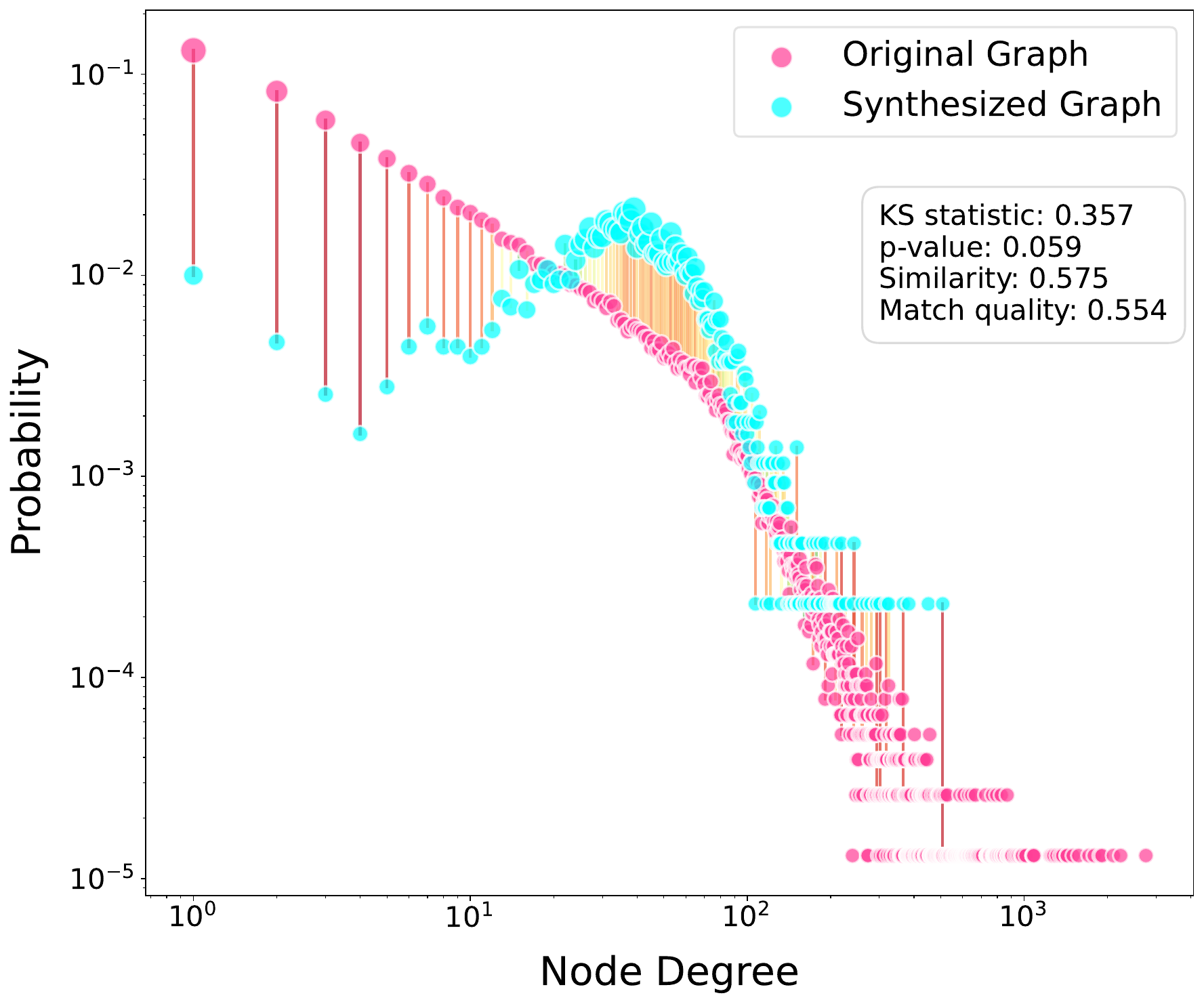}
    \caption{Degree distribution}
    \label{fig:Synthesis_Children_degree_distribution} 
  \end{subfigure}%
  \begin{subfigure}[b]{0.32\textwidth}
    \centering
    \includegraphics[width=\textwidth,height=3.0cm]{./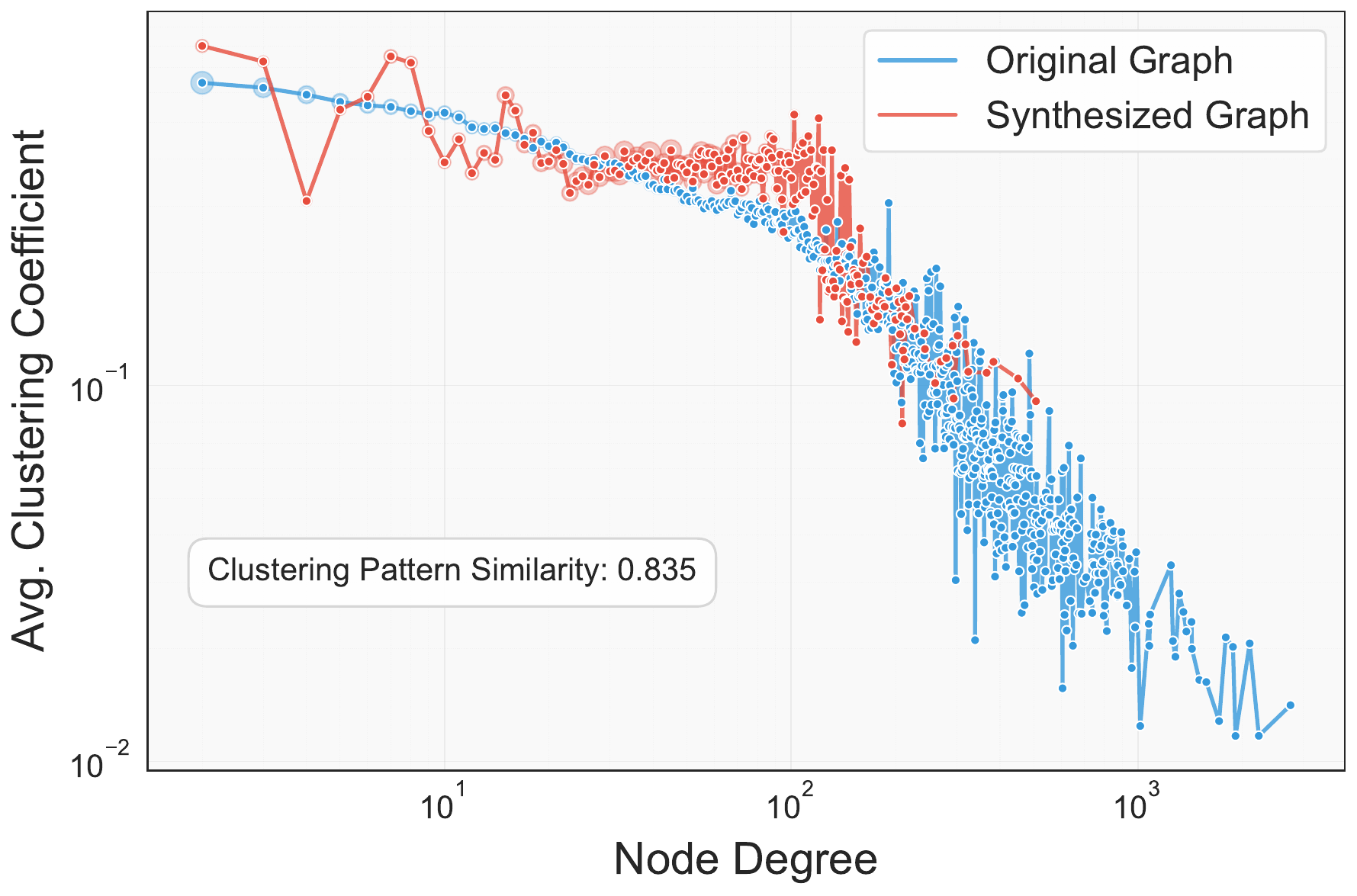}
    \caption{Clustering coeffiecient}
    \label{fig:Synthesis_Children_clustering_coefficient}
    \end{subfigure}%
  \begin{subfigure}[b]{0.32\textwidth}
    \centering
    \includegraphics[width=\textwidth,height=3.0cm]{./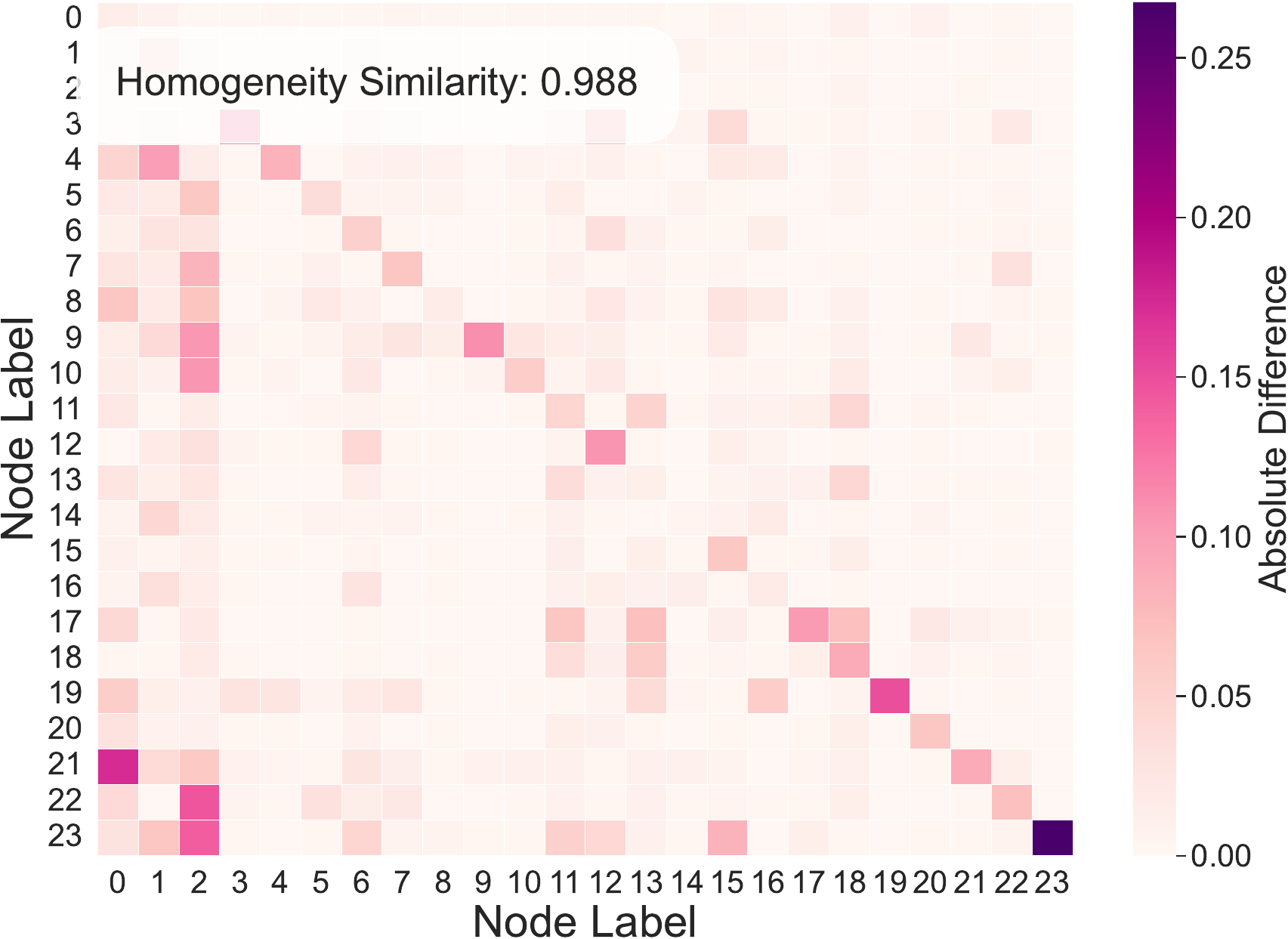}
    \caption{Label homogeneity}
    \label{fig:Synthesis_Children_label_homogeneity}
    \end{subfigure}%

  \vspace{-4pt}
  \caption{Graph feature analysis on Children dataset.}
  \label{fig:Graph_feature_analysis_on_Children_dataset}
\end{figure}

Our second research question examines whether the new graph data generated by GraphMaster in data-limited environments can maintain consistency with the original graph’s structural features. We conducted a comprehensive analysis across three dimensions: degree distribution, clustering coefficient, and label homogeneity. As shown in Figure~\ref{fig:Graph_feature_analysis_on_Children_dataset}, GraphMaster demonstrates excellent performance in preserving the network’s topological backbone. For example, the two-sample Kolmogorov–Smirnov test statistic between the degree distributions of the original and synthesized Children graphs is 0.357 ($p=0.059$), indicating no statistically significant difference. The clustering coefficient similarity score is 0.835, which represents a substantial improvement over the original data-limited Children graph (0.785). Concurrently, the label homogeneity similarity reaches an impressive 0.988 (the heatmap of label–label connection frequencies is almost identical for original vs. synthetic), indicating minimal differences in class mixing patterns. These characteristics show that GraphMaster can generate high-quality synthetic graphs that retain key structural properties of the original data. (Additional graph comparison figures are provided in Appendix~\ref{Detailed Graph Feature Analysis}.)

\vspace{-4pt}
\subsection{Interpretability Analysis (RQ3)}
\vspace{-4pt}

To evaluate the transparency of our GraphMaster model, we conduct both human-centered and algorithmic assessments of interpretability (theoretical details in Appendix~\ref{Theoretical Analysis of Interpretable Experiments}). For human evaluation, 50 expert reviewers rated 200 synthesis instances across three dimensions: process transparency, decision justification, and outcome predictability. The overall Traceability Score quantifies how well humans understand the generation process:

\begin{equation}
T_{\text{score}} = \frac{1}{R\cdot N} \sum_{r=1}^{R}\sum_{i=1}^{N} t_{r,i},
\end{equation}

where $t_{r,i}$ represents the score given by reviewer $r$ for instance $i$. In parallel, we leverage a Grassmann manifold-based approach to systematically assess semantic consistency of synthesized nodes. This mathematical framework provides a principled way to measure how well generated nodes align with the semantic direction of background knowledge, yielding coherence scores in the range $[0,1]$.

\begin{figure}[h]
  \centering
  \begin{subfigure}[b]{0.31\textwidth}
    \centering
    \includegraphics[width=\textwidth,height=3.2cm]{./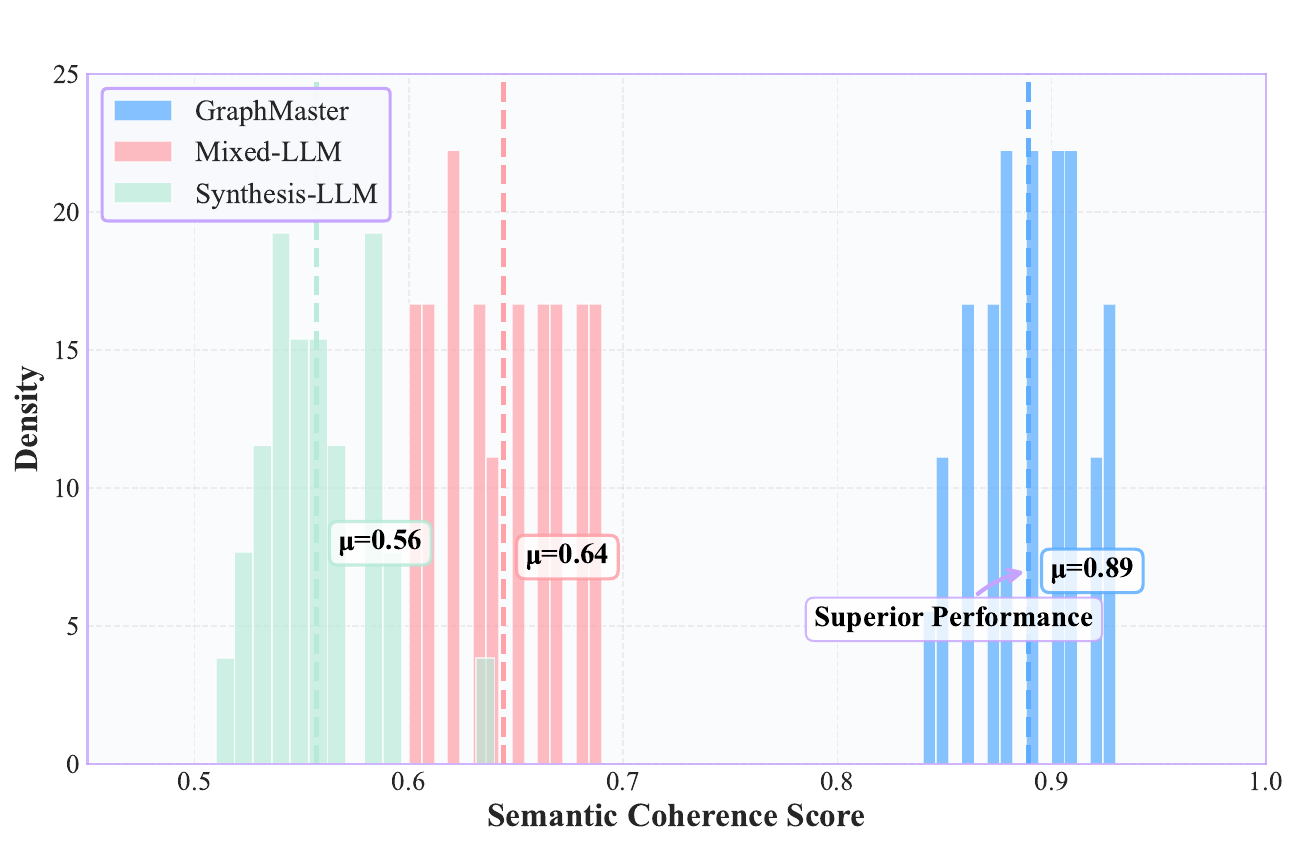}
    \caption{Semantic Score Distribution}
    \label{fig:Interpretability_a} 
  \end{subfigure}%
  \begin{subfigure}[b]{0.31\textwidth}
    \centering
    \includegraphics[width=\textwidth,height=3.2cm]{./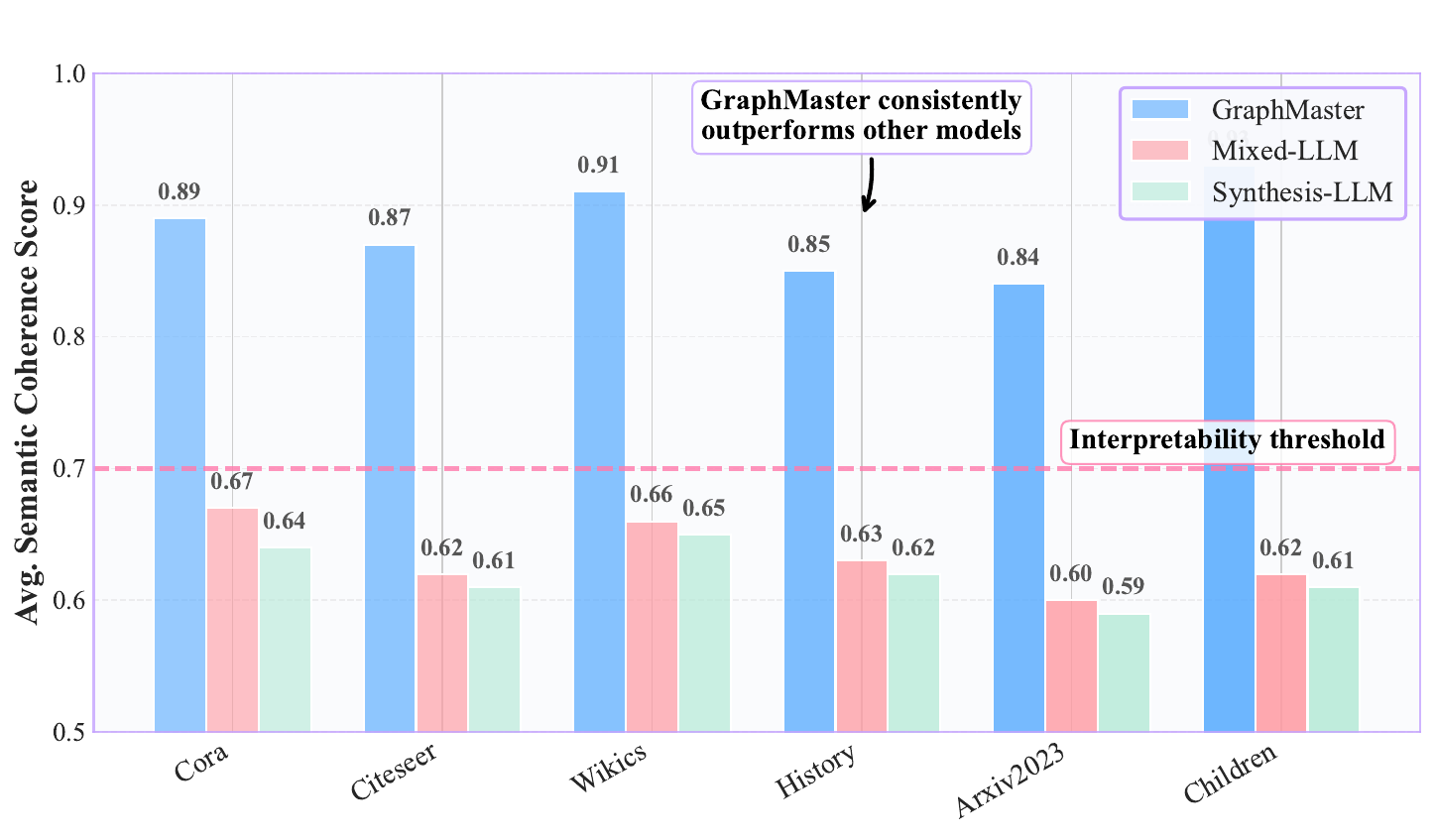}
    \caption{Cross-Dataset Comparison}
    \label{fig:Interpretability_b}
    \end{subfigure}%
  \begin{subfigure}[b]{0.31\textwidth}
    \centering
    \includegraphics[width=\textwidth,height=3.2cm]{./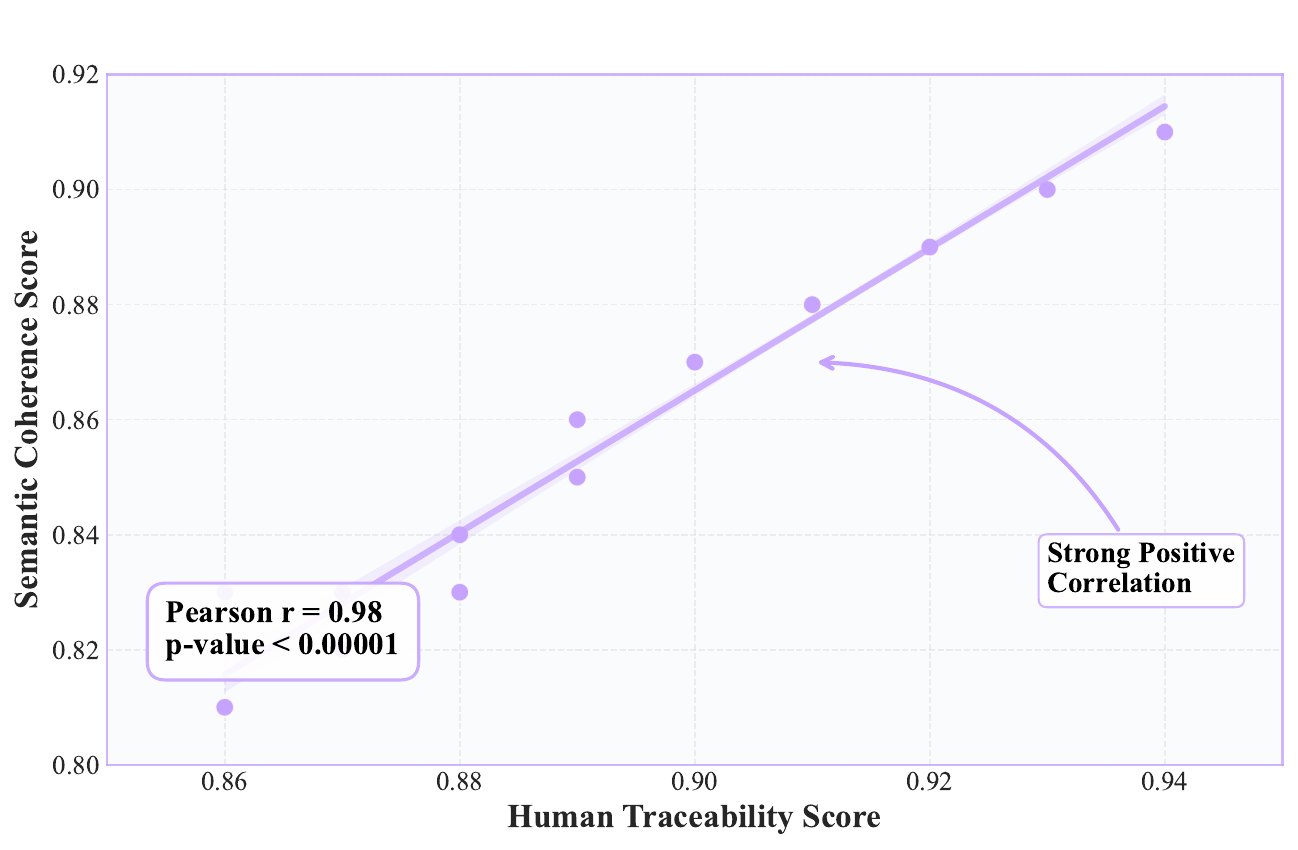}
    \caption{Human-Algorithm Agreement}
    \label{fig:Interpretability_c}
    \end{subfigure}%

  \vspace{-4pt}
  \caption{Interpretability Analysis of GraphMaster.}
  \label{fig:Interpretability}
\end{figure}

Our human evaluation results demonstrate that GraphMaster exhibits excellent interpretability, with an average traceability score of $T_{\text{score}} = 0.92$, significantly outperforming Mixed-LLM (0.66) and Synthesis-LLM (0.59). For the Grassmann manifold-based evaluation method, Figure~\ref{fig:Interpretability_a} shows that GraphMaster significantly outperforms comparative methods in terms of semantic coherence, indicating that GraphMaster can generate nodes highly aligned with the principal semantic direction. The score distribution more concentrated in higher value regions also indicates stronger consistency in the quality of generated nodes. Figure~\ref{fig:Interpretability_b} shows that GraphMaster maintains high performance across all datasets, consistently exceeding the interpretability threshold of 0.7. Figure~\ref{fig:Interpretability_c} demonstrates a strong correlation between human ratings and semantic coherence scores (r=0.78, p<0.00001), further validating our Grassmann manifold-based approach as an effective metric for measuring interpretability. This alignment between human judgment and geometric measures confirms the practical relevance and feasibility of our mathematical framework.

\vspace{-4pt}
\subsection{Ablation Study (RQ4)}
\vspace{-4pt}

\begin{table*}[ht]
\centering
\caption{Ablation experiment in GCN model.}
\resizebox{\textwidth}{!}{
\setlength\tabcolsep{2.5pt}
\begin{tabular}{c|cc|cc|cc|cc|cc|cc}
\hline
\multirow{2}{*}{model} & \multicolumn{2}{c|}{Cora} & \multicolumn{2}{c|}{Citeseer} & \multicolumn{2}{c}{Wikics} & \multicolumn{2}{c|}{History} & \multicolumn{2}{c|}{Arxiv2023} & \multicolumn{2}{c}{Children}  \\
          & Acc & F1 & Acc & F1 & Acc & F1 & Acc & F1 & Acc & F1 & Acc & F1 \\
\hline
 QwQ-32B   & $\textbf{93.7}\pm1.0$ & $\textbf{92.5}\pm1.0$ & $\textbf{88.3}\pm0.9$ & $\textbf{87.7}\pm1.1$ & $\textbf{87.9}\pm0.8$ & $\textbf{86.8}\pm0.9$ & $\textbf{92.6}\pm1.3$ & $\textbf{63.4}\pm1.4$ & $\textbf{87.9}\pm1.3$ & $\textbf{66.3}\pm1.5$ & $\textbf{68.8}\pm1.4$ & $\textbf{47.8}\pm1.3$  \\
 Qwen-32B   & $91.4\pm0.9$ & $90.2\pm0.9$ & $86.5\pm1.1$ & $85.4\pm1.0$ & $85.7\pm0.9$ & $84.2\pm0.9$ & $90.1\pm1.2$ & $60.8\pm0.9$ & $85.3\pm0.8$ & $64.2\pm1.1$ & $66.5\pm1.1$ & $45.9\pm1.0$ \\
 DeepSeek-R1-32B   & $91.8\pm0.8$ & $90.6\pm0.8$ & $86.8\pm1.0$ & $85.8\pm1.1$ & $85.9\pm0.8$ & $84.6\pm0.9$ & $90.5\pm1.2$ & $61.2\pm0.9$ & $85.6\pm1.0$ & $64.5\pm1.2$ & $66.9\pm1.2$ & $46.1\pm1.4$ \\
 LLaMA-33B   & $91.1\pm0.9$ & $90.0\pm1.1$ & $86.0\pm0.8$ & $85.0\pm1.2$ & $85.4\pm0.5$ & $84.0\pm0.6$ & $89.8\pm0.7$ & $60.5\pm0.9$ & $85.0\pm0.7$ & $63.8\pm0.8$ & $66.2\pm0.9$ & $45.7\pm0.9$ \\
 w.o Perception Agent   & $88.5\pm0.8$ & $87.3\pm0.7$ & $83.6\pm0.7$ & $82.5\pm1.1$ & $83.2\pm0.7$ & $82.0\pm0.6$ & $87.4\pm1.1$ & $57.9\pm0.9$ & $82.6\pm0.8$ & $61.5\pm0.9$ & $63.2\pm1.3$ & $43.4\pm0.7$ \\
 w.o Evaluation Agent   & $89.6\pm0.7$ & $88.5\pm0.7$ & $84.8\pm0.6$ & $83.9\pm0.9$ & $84.3\pm0.9$ & $83.1\pm0.6$ & $88.9\pm0.7$ & $59.2\pm0.7$ & $83.8\pm0.9$ & $62.7\pm1.1$ & $64.9\pm1.1$ & $44.6\pm1.3$ \\
 N=20    & $91.5\pm0.8$ & $90.3\pm0.5$ & $86.4\pm0.7$ & $85.3\pm0.8$ & $85.6\pm0.6$ & $84.3\pm0.7$ & $90.0\pm0.8$ & $60.7\pm0.8$ & $85.1\pm0.7$ & $63.9\pm0.7$ & $66.3\pm0.8$ & $45.7\pm0.6$ \\
 N=30    & $\textbf{93.7}\pm1.0$ & $\textbf{92.5}\pm1.0$ & $\textbf{88.3}\pm0.9$ & $\textbf{87.7}\pm1.1$ & $\textbf{87.9}\pm0.8$ & $\textbf{86.8}\pm0.9$ & $\textbf{92.6}\pm1.3$ & $\textbf{63.4}\pm1.4$ & $\textbf{87.9}\pm1.3$ & $\textbf{66.3}\pm1.5$ & $\textbf{68.8}\pm1.4$ & $\textbf{47.8}\pm1.3$  \\
 N=40    & $92.0\pm0.9$ & $90.8\pm0.9$ & $86.7\pm0.8$ & $85.7\pm1.0$ & $85.8\pm0.7$ & $84.6\pm0.7$ & $90.3\pm1.2$ & $61.0\pm1.3$ & $85.4\pm1.2$ & $64.1\pm1.4$ & $66.7\pm1.3$ & $46.0\pm1.2$ \\
 M=10\%    & $91.7\pm0.9$ & $90.5\pm0.9$ & $86.6\pm0.7$ & $85.5\pm0.9$ & $85.7\pm0.6$ & $84.4\pm0.7$ & $90.2\pm1.1$ & $60.8\pm1.2$ & $85.3\pm1.1$ & $64.0\pm1.3$ & $66.6\pm1.2$ & $45.8\pm1.1$ \\
 M=15\%    & $\textbf{93.7}\pm1.0$ & $\textbf{92.5}\pm1.0$ & $\textbf{88.3}\pm0.9$ & $\textbf{87.7}\pm1.1$ & $\textbf{87.9}\pm0.8$ & $\textbf{86.8}\pm0.9$ & $\textbf{92.6}\pm1.3$ & $\textbf{63.4}\pm1.4$ & $\textbf{87.9}\pm1.3$ & $\textbf{66.3}\pm1.5$ & $\textbf{68.8}\pm1.4$ & $\textbf{47.8}\pm1.3$  \\
 M=20\%    & $91.3\pm0.8$ & $90.1\pm0.8$ & $86.2\pm0.6$ & $85.1\pm0.8$ & $85.3\pm0.5$ & $83.9\pm0.6$ & $90.0\pm0.7$ & $60.5\pm0.8$ & $84.8\pm0.6$ & $63.5\pm0.6$ & $66.0\pm0.7$ & $45.3\pm0.5$ \\
\hline
\end{tabular}
}
\label{tab:Ablation}
\end{table*}

In this section, we investigate the relative importance of various components within the GraphMaster framework and their impact on synthesis quality. We systematically analyze how different agent configurations affect the overall performance. We selected Qwen-32B~\cite{qwen}, Deepseek-R1-32B~\cite{deepseekai2025deepseekr1incentivizingreasoningcapability} and Llama-33B~\cite{samantha2024cognitivecomputations}, three models with parameters around 32B, as comparison models. Additionally, we examine how varying the size of the background knowledge base ($N=|\mathcal{K}|$) and the percentage of newly generated nodes ($M\%$) influences synthesis effectiveness. We trained the model using GCN on six datasets, and the results are presented in Table~\ref{tab:Ablation}. Our findings indicate that the performance varies significantly across different LLMs, with QwQ-32B consistently outperforming the alternatives by 1.5-2.3\% across all datasets. Notably, DeepSeek-R1-32B achieves the second-best performance despite LLaMA-33B having more parameters, suggesting that model architecture and pre-training approach are more critical than raw parameter count for this task.

The ablation results reveal that removing either the Perception Agent or Evaluation Agent substantially degrades performance (by 5.2\% and 4.1\% on average, respectively), with the Perception Agent proving particularly crucial. This confirms that both specialized components play essential roles in maintaining generation quality and cannot be omitted from the framework. Regarding hyperparameters, we observe that $N=30$ consistently outperforms both smaller ($N=20$) and larger ($N=40$) knowledge bases across all datasets. Similarly, setting $M=15\%$ yields optimal results compared to both $M=10\%$ and $M=20\%$. These findings demonstrate that while sufficient context is necessary for high-quality synthesis, excessive background knowledge can dilute the model's focus. Likewise, generating too many nodes simultaneously reduces overall quality due to limitations in the model's generative capacity when handling multiple interdependent elements.

\vspace{-9pt}
\section{Conclusion}
\vspace{-4pt}

In this paper, we introduced GraphMaster, the first multi-agent framework for text-attributed graph synthesis that successfully addresses the critical bottleneck of data scarcity in training GFMs. By orchestrating specialized LLM agents in a hierarchical RAG paradigm, our approach systematically overcomes the limitations of traditional synthesis methods, generating semantically rich and structurally coherent graph extensions even in severely data-constrained environments. Beyond the framework itself, we created specialized data-limited variants of six standard graph benchmarks and developed a novel dual-perspective interpretability assessment methodology that combines expert human evaluation with a theoretically grounded Grassmannian manifold-based analysis. Comprehensive experiments demonstrate GraphMaster's consistently superior performance across diverse datasets and downstream GNN architectures. Future work could explore multi-scale synthesis approaches that simultaneously model global topology and local semantics, knowledge transfer mechanisms from data-rich to data-limited domains, and adaptive sampling strategies optimized specifically for synthesis objectives. This work not only provides an immediate solution to the graph data scarcity problem but also establishes foundational methodologies for advancing interpretable graph data synthesis.

\clearpage
\bibliographystyle{plain}
\bibliography{references}

\clearpage

\appendix

\section{Comparison of GraphMaster with Recent Representative Works}
\label{Detailed comparison between GraphMaster and GAG}

Recent advancements in leveraging Large Language Models (LLMs) for graph-related tasks have produced several notable approaches. In this section, we conduct a rigorous comparison between our proposed GraphMaster framework and two recent representative works: GAG~\cite{ji2025gag} and LLM4NG~\cite{yu2025llm4ng}. We structure our analysis across three key dimensions: motivation, methodology, and application scenarios, to clearly delineate the unique contributions and advantages of GraphMaster.

\subsection{Comparison Between GraphMaster and GAG}

\subsubsection{Motivation}
GraphMaster addresses a fundamental bottleneck in developing Graph Foundation Models (GFMs): the scarcity of large-scale, semantically rich graph datasets. Our work specifically targets the generation of high-quality text-attributed graphs in data-limited environments, focusing on both semantic coherence and structural integrity. In contrast, GAG~\cite{ji2025gag} primarily aims to simulate the dynamic evolution of social graphs through actor-item interactions, with emphasis on reproducing macroscopic network properties such as power-law degree distributions and small-world phenomena. While GAG attempts to capture emergent properties of large-scale social networks, it lacks explicit mechanisms for maintaining semantic coherence in node attributes, which is a critical requirement for training effective GFMs.

\subsubsection{Methodology}
GraphMaster implements a hierarchical multi-agent framework formalized through the RAG paradigm, where specialized agents perform distinct functions within a closed-loop optimization system:
\begin{equation}
G_{new} = \Psi_{RAG}(G, Q, R, A_{retrieve}, A_{generate}, A_{evaluate})
\end{equation}
where $G$ is the original graph, $Q$ represents the query formulation, $R$ denotes the retrieval strategy, and $A_{retrieve}$, $A_{generate}$, and $A_{evaluate}$ correspond to the agent-specific functions. Our Perception Agent extracts knowledge through semantic-enriched modularity maximization:
\begin{equation}
Q_{sem} = \frac{1}{2m}\sum_{i,j}\left[A_{ij} - \gamma\frac{k_ik_j}{2m} - (1-\gamma)\frac{d_{sem}(x_i, x_j)}{\sum_{l,m}d_{sem}(x_l, x_m)}\right]\delta(c_i, c_j)
\end{equation}

In contrast, GAG employs a bipartite graph model where homogeneous "actor" agents interact with items through a retrieval system. Their S-RAG algorithm models actor-item interactions but lacks explicit quality control:
\begin{equation}
p_{ij} = \sigma\left(\theta_1 \cdot sim(x_i, x_j) + \theta_2 \cdot \frac{|N(v_i) \cap N(v_j)|}{|N(v_i)|} + \theta_3 \cdot \frac{k_j}{\max_l k_l}\right)
\end{equation}

GraphMaster's multi-agent architecture provides several key advantages: (1) Our Manager Agent dynamically optimizes a multi-objective utility function that balances semantic coherence, structural integrity, and class balance; (2) Our Evaluation Agent implements a comprehensive verification mechanism with adaptive thresholds; and (3) Our theoretical framework provides formal guarantees for information preservation, generation quality, and convergence properties.

\subsubsection{Application Scenarios}
GraphMaster is explicitly designed for enhancing text-attributed graphs in data-limited environments, making it particularly suitable for academic, industrial, and web-scale applications where data acquisition is costly or restricted. Our framework produces semantically rich and structurally coherent graph extensions that serve as high-quality training data for GFMs. GAG primarily focuses on social network simulation, with applications in modeling online user interactions and social dynamics. While GAG can generate large-scale graphs (up to 100,000 nodes), it sacrifices semantic richness for scale and does not provide guarantees on the quality of textual attributes. Our evaluation framework, combining human assessment with Grassmannian manifold-based analysis, demonstrates that GraphMaster consistently produces higher-quality graph extensions that better preserve both semantic and structural characteristics of the original data.

\subsection{Comparison Between GraphMaster and LLM4NG}

\subsubsection{Motivation}
GraphMaster addresses the broad challenge of data scarcity for GFMs through comprehensive graph synthesis that enhances both node attributes and structural properties. In contrast, LLM4NG~\cite{yu2025llm4ng} specifically targets few-shot learning scenarios in node classification, with a narrow focus on enhancing class-level information. While LLM4NG aims to improve classification performance by adding labeled examples, it does not address the fundamental issue of enhancing the overall quality and representational capacity of graph data. GraphMaster's approach is more comprehensive, as it treats the graph synthesis problem holistically, generating high-quality nodes and edges that maintain both semantic and structural coherence.

\subsubsection{Methodology}
GraphMaster employs a sophisticated multi-agent system with specialized roles and iterative refinement cycles. Our Enhancement Agent generates new nodes through a conditional autoregressive model:
\begin{equation}
P(x_s|K) = \prod_{i=1}^{L} P(x_s^i|x_s^{<i}, X_k, E_k, K)
\end{equation}
and models edge connections with a probability function that balances semantic, structural, and degree-based factors:
\begin{equation}
P((v_s, v_i) \in E_c|K) = \sigma\left(\theta_1 \cdot sim(x_s, x_i) + \theta_2 \cdot \frac{|N(v_i) \cap N_K(v_s)|}{|N_K(v_s)|} + \theta_3 \cdot \frac{k_i}{\max_j k_j}\right)
\end{equation}

LLM4NG adopts a significantly simpler approach, primarily generating text based on label characteristics alone. Its core idea is to let the large language model generate text that meets class characteristics based solely on the name of the label:
\begin{equation}
s_g = \text{LLM}(\text{Prompt}(c)), c \in C
\end{equation}
followed by a basic edge predictor that often introduces noise and structural inconsistencies:
\begin{equation}
\hat{y}e(h_{v_i}, h_{v_j}) = \text{MLP}(h_{v_i}||h_{v_j})
\end{equation}

GraphMaster's methodology offers several crucial advantages: (1) Our approach maintains structural consistency through explicit modeling of node-edge relationships; (2) Our iterative refinement process ensures high-quality synthesis through adaptive thresholds; and (3) Our theoretical framework provides formal guarantees on the quality and convergence of the synthesis process. LLM4NG lacks these quality assurance mechanisms and theoretical foundations.

\subsubsection{Application Scenarios}
GraphMaster addresses a wider range of data-limited scenarios and can enhance graphs for various downstream tasks beyond classification. Our framework is particularly effective in scenarios requiring high semantic coherence and structural fidelity, such as scientific discovery, knowledge graph completion, and recommendation systems. 

LLM4NG is narrowly optimized for classification tasks in few-shot scenarios, with limited impact on the overall graph structure. When applied to data-limited scenarios rather than strictly few-shot learning, LLM4NG's edge generation methods often introduce significant noise and interference to the graph structure. This limitation is evidenced by its poor performance in our experimental evaluations on data-limited datasets. The edge probabilities predicted by its simple model fail to capture the complex structural patterns present in real-world graphs.

While LLM4NG offers computational efficiency through its lightweight design, it sacrifices synthesis quality, broader applicability, and introduces potential structural inconsistencies. Our experimental results demonstrate that GraphMaster achieves superior performance across multiple datasets and tasks, particularly in generating semantically coherent and structurally valid graph extensions that maintain both local and global properties of the original graph.

\section{The implement details of newly created baseline} \label{The implement details of newly created baseline}
\subsection{Mixed-LLM}

Mixed-LLM introduces a novel interpolative synthesis approach that extends the seminal mixup concept from computer vision to text-attributed graphs via large language models. This method operates on the principle of manifold-aware semantic interpolation, where node representations from different classes are strategically combined to generate semantically coherent yet diverse synthetic nodes.

The Mixed-LLM algorithm consists of three primary phases:

1. \textbf{Strategic Class-Balanced Sampling}: Rather than random selection, Mixed-LLM employs a distribution-aware sampling strategy:
\begin{equation}
S = \{(v_i, y_i) | v_i \in V, P(v_i) \propto 1/|V_{y_i}|^{\alpha}\}
\end{equation}
where $\alpha$ is an adaptive parameter controlling the emphasis on minority classes and $|V_{y_i}|$ represents the cardinality of nodes with label $y_i$.

2. \textbf{Latent Space Interpolation}: For each pair of sampled nodes $(v_i, v_j)$, Mixed-LLM generates a convex combination in the semantic space:
\begin{equation}
\tilde{x}_s = \lambda \cdot \phi_{LLM}(x_i) + (1-\lambda) \cdot \phi_{LLM}(x_j)
\end{equation}
where $\phi_{LLM}$ represents the LLM's latent representation function and $\lambda \sim \text{Beta}(\alpha, \alpha)$ is a mixing coefficient.

3. \textbf{LLM-Guided Textual Manifestation}: The interpolated representation is transformed into coherent textual attributes through a prompt-based generation:
\begin{equation}
x_s = \text{LLM}_M(p(\tilde{x}_s, x_i, x_j, y_i, y_j))
\end{equation}
where $p$ is a carefully designed prompt template instructing the LLM to create textual attributes that preserve the semantic characteristics of both source nodes while maintaining linguistic naturalness.

The final label assignment follows a soft probability distribution:
\begin{equation}
P(y_s = c) = \lambda \cdot I[y_i = c] + (1-\lambda) \cdot I[y_j = c]
\end{equation}
where $I$ is the indicator function. This probabilistic formulation enables Mixed-LLM to generate boundary-enhancing examples that improve classifier robustness.

\subsection{Synthesis-LLM}

Synthesis-LLM implements a context-aware graph sampling and generative synthesis framework that leverages structural locality principles to inform LLM-based node generation. Unlike conventional approaches that process graph data indiscriminately, Synthesis-LLM employs sophisticated topological sampling to create representative subgraph contexts that maximize information density within LLM token constraints.

The framework operates in four sequential stages:

1. \textbf{Multi-strategy Subgraph Sampling}: Synthesis-LLM employs a hybrid sampling approach that combines Personalized PageRank (PPR) with strategic breadth-first search:
\begin{equation}
K_s = \Gamma_{PPR}(G, v_s, \alpha, r) \cup \Gamma_{BFS}(G, v_s, d)
\end{equation}
where $\Gamma_{PPR}$ samples nodes based on their PPR scores from seed node $v_s$ with damping factor $\alpha$ and threshold $r$, while $\Gamma_{BFS}$ complements this with a depth-limited breadth-first expansion to depth $d$.

2. \textbf{Structural-Semantic Context Formulation}: The sampled subgraph is transformed into a rich prompt context:
\begin{equation}
C = f_{context}(K_s, A[K_s], X[K_s])
\end{equation}
where $f_{context}$ is a specialized function that encodes both topological relationships and textual attributes into a structured prompt format.

3. \textbf{Guided Generative Synthesis}: The LLM generates new nodes conditioned on the extracted context:
\begin{equation}
(x_s, E_s) = \text{LLM}_S(C, \theta)
\end{equation}
where $\text{LLM}_S$ represents the synthesis LLM with temperature parameter $\theta$ that balances creativity and fidelity.

4. \textbf{Structural Consistency Enforcement}: Generated nodes undergo topological validation to ensure adherence to the original graph's structural patterns:
\begin{equation}
E_s' = \{e \in E_s | P_{structure}(e | G) > \tau\}
\end{equation}
where $P_{structure}$ estimates the probability of edge $e$ existing given the structural patterns in $G$, and $\tau$ is an acceptance threshold.

This methodology enables Synthesis-LLM to generate nodes that maintain both semantic relevance and structural coherence with respect to the original graph, while requiring minimal examples due to the LLM's inherent understanding of semantic relationships.

\subsection{Experimental details}
We selected QwQ-32B~\cite{qwq32b} as the large language model for these two baselines, and used two A6000 GPUs with 48G memory for the experiments.
\subsubsection{Hyperparameter Selection for Mixed-LLM}
In Mixed-LLM, extensive grid search and ablation studies were conducted to optimize key hyperparameters. The class balancing parameter $\alpha$ was tuned within the range [0.5, 1.5] with an optimal value of 0.8, ensuring a good balance between preserving the original class distribution and addressing class imbalance issues. The beta distribution parameter for the mixing coefficient, where $\lambda \sim \text{Beta}(\alpha, \alpha)$, achieved optimal performance at 0.4, producing meaningful boundary examples. Additionally, an LLM temperature of 0.7 provided the best balance between creative variation and semantic consistency, and incorporating 2--3 example interpolations in the prompt significantly enhanced generation quality.

\subsubsection{Hyperparameter Selection for Synthesis-LLM}
For Synthesis-LLM, the hyperparameters were optimized to capture both local and global graph structures. A PPR damping factor $\alpha$ of 0.65 offered a suitable trade-off between local neighborhood exploration and distant node influence, while a PPR threshold $r=0.005$ effectively identified relevant nodes. A BFS depth limit of $d=2$ was sufficient to extract essential structural context without overloading the LLM's input. Moreover, setting the generation temperature $\theta$ to 0.5 ensured structural and semantic coherence, and a structural acceptance threshold $\tau$ of 0.6 successfully filtered edge proposals. Overall, the optimal performance was achieved when 25--35 representative nodes were included in the LLM context.

\section{Data-limited Datasets Creation} \label{Data-limited Datasets Creation}

\begin{algorithm}[ht]
\caption{$\mathcal{M}$-Preserving Graph Sampling} \label{Graph Sampling}
\begin{algorithmic}[1]
\Require Graph $G = (V, E, \mathcal{X}, \mathcal{Y}, \mathcal{M})$, sampling ratio $\alpha \in (0,1]$, convergence threshold $\epsilon$
\Ensure Homeomorphic sampled graph $G_s$ preserving manifold properties

\State $\mathbf{\Phi}_G \gets \mathcal{T}\langle G \rangle$ \Comment{Extract property tensor capturing distributions and spectral features}
\State $\mathcal{C} \gets \arg\max_{\mathcal{C}'} \mathcal{Q}(\mathcal{C}', G)$ \Comment{Optimize modularity for community detection}
\State $\mathcal{D} \gets \{V_{y,m}\}_{y,m}$ where $V_{y,m} = \{v \in V : \mathcal{Y}(v) = y, \mathcal{M}(v) = m\}$ \Comment{Create attribute partitions}
\State $\Pi \gets \{\pi_{y,m} = |V_{y,m}|/|V|\}_{y,m}$ \Comment{Joint distribution tensor}

\State $\mathbf{K} \gets \lfloor|V| \cdot \alpha \cdot \Pi\rfloor$ \Comment{Target counts per partition}
\State $\mathbf{K} \gets \mathbf{K} + \delta(\mathbf{K}, \alpha|V|)$ \Comment{Correct sampling counts to exactly match target size}

\State $V_s \gets \emptyset$ \Comment{Initialize sampled node set}

\For{$(y,m) \in \{(\mathcal{Y}, \mathcal{M})\}$}
    \If{$|V_{y,m}| \leq \mathbf{K}_{y,m}$}
        \State $V_s \gets V_s \cup V_{y,m}$
    \Else
        \State $\mathbf{\Omega}_{y,m} \gets $ Multi-objective weight vector where for each $v \in V_{y,m}$:
        \begin{align}
        \mathbf{\Omega}_{y,m}(v) &= \lambda_1\frac{\text{deg}(v)}{\max_{u}\text{deg}(u)} + \lambda_2(1-\frac{|V_s \cap \mathcal{C}(v)|}{|\mathcal{C}(v)|}) + \lambda_3\beta_{\text{bridge}}(v)
        \end{align}
        \State $V_s \gets V_s \cup \text{TopK}(V_{y,m}, \mathbf{\Omega}_{y,m}, \mathbf{K}_{y,m})$
    \EndIf
\EndFor

\State $G_s' \gets G[V_s]$ \Comment{Initial induced subgraph}

\If{$\|\kappa(G_s') - \kappa(G)\| > \epsilon$} \Comment{Check connectivity distortion}
    \State $\mathcal{B} \gets $ Bridge nodes $(V \setminus V_s)$ sorted by connectivity gain potential
    \State $\mathcal{R} \gets $ Replaceable nodes in $V_s$ with minimal structural impact
    
    \While{$\|\kappa(G_s') - \kappa(G)\| > \epsilon$ and $\mathcal{B} \neq \emptyset$ and $\mathcal{R} \neq \emptyset$}
        \State $(b^*, r^*) \gets \arg\max_{b \in \mathcal{B}, r \in \mathcal{R}} \mathcal{S}(b,r)$ subject to $\mathcal{Y}(b) = \mathcal{Y}(r) \land \mathcal{M}(b) = \mathcal{M}(r)$
        \State $V_s \gets (V_s \setminus \{r^*\}) \cup \{b^*\}$
        \State $G_s' \gets G[V_s]$
        \State Update $\mathcal{B}, \mathcal{R}$
    \EndWhile
\EndIf

\State \Return $G_s'$
\end{algorithmic}
\end{algorithm}

To simulate realistic scenarios where annotated data is scarce, we generate data-limited datasets by extracting carefully curated subgraphs from the original large-scale graphs. Our procedure begins by partitioning the original graph based on node labels and inherent manifold properties, ensuring that the semantic distribution and community structures are preserved. Next, we apply a multi-objective sampling strategy that leverages node degrees, community representation, and bridge node potentials to select a subset of nodes and their associated edges. This approach, outlined in Algorithm~\ref{Graph Sampling}, is designed to maintain the essential connectivity patterns and spectral features of the full graph while significantly reducing the number of nodes. Iterative refinements are then performed to balance class proportions and correct any topological distortions, resulting in a smaller yet representative subgraph that closely mimics the original graph’s structure and attribute distribution. This data-limited setup provides a robust testbed for evaluating the effectiveness of our graph synthesis methods under constrained conditions.

\section{Statistics of the Datasets}

\begin{table*}[htbp]
\centering
\caption{Dataset Statistics}
\resizebox{\textwidth}{!}{
\label{tab:dataset_stats}
\begin{tabular}{lccccccc}
\toprule
Dataset & \# Nodes & \# Edges & \# Classes & \# Louvain communities & \# Training nodes & \# Validation nodes & \# Test nodes \\
\midrule
Cora             & 2708  & 5278  & 7  & 106   & 1624  & 542  & 542  \\
Citeseer         & 3186  & 4225  & 6  & 506   & 1911  & 637  & 638  \\
Wikics         & 8196  & 104161  & 10  & 540   & 580  & 1769  & 5847  \\
History        & 41551  & 251590  & 12  & 2036  & 24921   & 8337  & 8293    \\
Arxiv2023        & 46198 & 38863 & 38 & 28901 & 28905 & 27718 & 9240  \\
Children        & 76875 & 1162522 & 24 & 2296 & 46010 & 15455 & 15410  \\
SubCora             & 1354  & 2486  & 7  & 99   & 815  & 267  & 272  \\
SubCiteseer         & 1274  & 1360  & 6  & 486   & 764  & 255  & 255  \\
SubWikics         & 1639  & 26786  & 10  & 374   & 111  & 350  & 1178  \\
SubHistory        & 2077  & 40415  & 12  & 17   & 1249  & 416  & 412  \\
SubArxiv2023       & 6929 & 3297 & 38 & 5398 & 4174 & 1375 & 1380 \\
SubChildren        & 3843 & 94636 & 24 & 71 & 2308 & 766 & 769 \\

\bottomrule
\end{tabular}
}
\label{Tab:datasets}
\end{table*}

Table~\ref{Tab:datasets} shows the basic characteristics of our new synthesized dataset. We introduce our dataset from seven aspects: Nodes, Edges, Classes, Louvain communities, Training nodes, Validation nodes and Test nodes, including the original dataset and our newly generated Data-limited dataset.

\section{Hyperparameter Selection Analysis}
\label{Hyperparameter Selection Analysis}

We conducted comprehensive grid search experiments to determine optimal hyperparameter settings for GraphMaster. Our analysis reveals that the framework is robust to moderate parameter variations, with the following configuration yielding consistently strong performance across datasets:

\begin{itemize}
    \item \textbf{Knowledge extraction:} Sample size $N = 30$ nodes provides sufficient context without introducing noise
    \item \textbf{Node generation:} Setting $M\% = 15\%$ of knowledge nodes balances quantity and quality
    \item \textbf{Community detection:} Parameters $\mu = 0.5$ and $\gamma = 0.5$ effectively balance semantic and structural factors
    \item \textbf{Stochastic sampling:} $\beta = 2.0$ maintains appropriate exploration-exploitation balance
    \item \textbf{Edge formation:} For semantic mode, $(\theta_1, \theta_2, \theta_3) = (0.6, 0.3, 0.1)$; for topological mode, $(0.2, 0.5, 0.3)$
    \item \textbf{Quality assessment:} Initial threshold $\tau_0 = 7.0$ with adaptive update rate $\zeta = 0.1$
    \item \textbf{Convergence criteria:} $\epsilon = 0.05$ provides sufficient refinement iterations
    \item \textbf{Objective weights:} Initialize $\lambda_{\text{sem}} = \lambda_{\text{struct}} = \lambda_{\text{bal}} = 0.33$ with learning rate $\eta = 0.05$
\end{itemize}

Among tested LLMs (QwQ-32B, Qwen-32B, DeepSeek-R1-32B, and LLaMA-33B), QwQ-32B consistently delivered superior performance. We limited synthesis to a maximum of 15 iterations, as additional iterations yielded diminishing returns..

\section{Detailed Experimental Results} \label{Detailed experimental results}

\begin{table*}[ht]
\centering
\caption{Comparison of GraphMaster with other TAG synthesis methods in JKNet model.}
\resizebox{\textwidth}{!}{
\renewcommand{\arraystretch}{1.2}
\begin{tabular}{lc|cc|cc|cc|cc|cc|cc}
\hline
\multirow{2}{*}{Type} & \multirow{2}{*}{Model} & \multicolumn{2}{c|}{Cora} & \multicolumn{2}{c|}{Citeseer} & \multicolumn{2}{c|}{Wikics} & \multicolumn{2}{c|}{History} & \multicolumn{2}{c|}{Arxiv2023} & \multicolumn{2}{c}{Children}\\
 & & Acc & F1 & Acc & F1 & Acc & F1 & Acc & F1 & Acc & F1 & Acc & F1\\
\hline
\multirow{1}{*}{Original}
& Origin & 88.9$\pm$1.1 & 87.9$\pm$0.7 & 78.3$\pm$0.9 & 75.7$\pm$1.2 & 79.7$\pm$0.6 & 77.9$\pm$1.1 & 84.2$\pm$0.3 & 43.1$\pm$1.3 & 76.3$\pm$0.8 & 54.9$\pm$1.2 & 52.6$\pm$0.5 & 31.9$\pm$0.9 \\
\cline{1-14}
\multirow{1}{*}{Classic-Aug}
& GAugO & 88.9$\pm$0.4 & 88.2$\pm$1.2 & 78.3$\pm$1.4 & 77.0$\pm$0.8 & 80.0$\pm$0.7 & 77.8$\pm$1.0 & 84.6$\pm$1.2 & 44.7$\pm$1.4 & 76.8$\pm$1.0 & 53.0$\pm$1.2 & 51.8$\pm$0.8 & 33.6$\pm$1.4\\
\cline{1-14}
\multirow{2}{*}{LLM-Aug}
& GraphEdit & 91.0$\pm$1.3 & \underline{89.4$\pm$1.0} & 81.4$\pm$0.7 & 80.2$\pm$0.8 & 82.0$\pm$0.9 & 80.6$\pm$0.7 & 87.6$\pm$1.0 & 45.7$\pm$1.1 & 78.0$\pm$0.4 & 57.8$\pm$1.1 & 54.3$\pm$1.3 & 35.7$\pm$0.4\\
& LLM4RGNN & \underline{91.4$\pm$0.9} & 88.8$\pm$0.5 & 81.0$\pm$1.4 & 76.7$\pm$0.7 & 83.6$\pm$0.3 & 81.4$\pm$0.5 & 88.9$\pm$0.8 & 48.6$\pm$1.4 & 79.3$\pm$1.2 & 59.1$\pm$1.4 & 55.7$\pm$0.7 & 36.7$\pm$1.1\\
\cline{1-14}
\multirow{6}{*}{Classic-Syn}
& GraphSmote & 88.7$\pm$0.4 & 87.4$\pm$0.8 & 78.2$\pm$1.4 & 74.8$\pm$0.9 & 80.7$\pm$0.3 & 78.5$\pm$0.8 & 84.9$\pm$0.8 & 43.9$\pm$1.4 & 76.2$\pm$1.0 & 55.5$\pm$1.3 & 53.1$\pm$0.7 & 33.2$\pm$0.5\\
& G-Mixup & 87.4$\pm$1.0 & 87.0$\pm$0.5 & 78.4$\pm$0.3 & 76.9$\pm$1.0 & 79.7$\pm$1.0 & 78.0$\pm$1.3 & 84.6$\pm$1.1 & 43.6$\pm$1.2 & 76.6$\pm$1.2 & 56.5$\pm$0.8 & 53.0$\pm$1.1 & 33.0$\pm$0.3\\
& IntraMix & 80.9$\pm$1.4 & 82.9$\pm$1.0 & 71.4$\pm$0.4 & 70.7$\pm$1.4 & 73.7$\pm$1.3 & 74.4$\pm$0.5 & 82.4$\pm$1.3 & 42.7$\pm$0.4 & 72.4$\pm$1.0 & 53.9$\pm$1.2 & 45.2$\pm$1.0 & 30.1$\pm$0.5\\
& GraphAdasyn & 89.2$\pm$1.3 & 88.8$\pm$1.2 & 78.7$\pm$1.1 & 78.2$\pm$1.3 & 80.8$\pm$0.6 & 78.8$\pm$1.1 & 84.6$\pm$1.3 & 46.1$\pm$0.3 & 77.5$\pm$1.4 & 57.0$\pm$1.3 & 53.6$\pm$0.5 & 33.0$\pm$1.2\\
& FG-SMOTE & 88.9$\pm$0.6 & 87.6$\pm$0.5 & 78.6$\pm$0.7 & 74.7$\pm$0.4 & 81.0$\pm$1.4 & 79.0$\pm$1.1 & 85.0$\pm$0.9 & 44.0$\pm$0.8 & 76.4$\pm$0.8 & 55.8$\pm$0.7 & 53.1$\pm$1.3 & 33.3$\pm$0.9\\
& AGMixup & 84.7$\pm$0.7 & 86.6$\pm$1.3 & 71.6$\pm$1.2 & 73.2$\pm$1.3 & 78.8$\pm$1.2 & 76.6$\pm$0.4 & 81.8$\pm$0.9 & 42.9$\pm$1.4 & 76.8$\pm$1.2 & 53.7$\pm$1.3 & 53.6$\pm$1.1 & 32.6$\pm$1.0\\
\cline{1-14}
\multirow{4}{*}{LLM-Syn}
& GAG & 91.0$\pm$1.4 & 89.3$\pm$1.1 & 82.8$\pm$0.7 & 80.0$\pm$0.5 & 84.9$\pm$1.1 & 83.2$\pm$1.2 & 88.9$\pm$0.9 & 49.8$\pm$1.3 & 79.9$\pm$1.4 & 59.4$\pm$1.2 & 56.7$\pm$1.0 & 38.0$\pm$1.3\\
& LLM4NG & 80.9 $\pm$0.4 & 79.7 $\pm$0.3 & 87.1 $\pm$0.6 & 85.6 $\pm$0.3 & 75.0 $\pm$0.3 & 74.5 $\pm$0.5 & 84.2 $\pm$0.6 & 52.4 $\pm$0.4 & 90.3 $\pm$0.3 & 72.9 $\pm$0.7 & 49.3 $\pm$0.6 & 37.0 $\pm$0.8\\
& Mixed-LLM & 89.9$\pm$0.3 & 89.3$\pm$1.0 & 83.4$\pm$1.3 & 81.3$\pm$1.2 & 84.9$\pm$1.1 & \underline{83.4$\pm$1.3} & 89.2$\pm$1.1 & 55.8$\pm$0.9 & 81.4$\pm$1.3 & 61.2$\pm$0.9 & 60.0$\pm$1.3 & 39.6$\pm$0.8\\
& Synthesis-LLM & 89.8$\pm$1.1 & 89.1$\pm$0.5 & \underline{84.5$\pm$1.2} & 82.7$\pm$0.5 & 84.8$\pm$0.8 & 83.2$\pm$1.4 & 89.4$\pm$0.4 & 53.4$\pm$0.8 & 81.0$\pm$1.3 & \underline{62.3$\pm$1.1} & 60.9$\pm$1.3 & 40.1$\pm$0.4\\
\cline{2-14}
\rowcolor{cyan!25}& \textbf{GraphMaster}  & \textbf{93.9$\pm$1.2} & \textbf{93.6$\pm$0.8} & \textbf{89.0$\pm$1.3} & \textbf{87.8$\pm$1.2} & \textbf{87.9$\pm$0.9} & \textbf{86.4$\pm$1.3} & \textbf{92.5$\pm$0.7} & \textbf{64.1$\pm$1.1} & \textbf{87.8$\pm$1.0} & \textbf{66.4$\pm$1.4} & \textbf{68.9$\pm$1.4} & \textbf{47.7$\pm$0.5}\\
\hline
\end{tabular}}
\label{tab:JKNet}
\end{table*}

\begin{table*}[ht]
\centering
\caption{Comparison of GraphMaster with other TAG synthesis methods in GraphSage model.}
\resizebox{\textwidth}{!}{
\renewcommand{\arraystretch}{1.2}
\begin{tabular}{lc|cc|cc|cc|cc|cc|cc}
\hline
\multirow{2}{*}{Type} & \multirow{2}{*}{Model} & \multicolumn{2}{c|}{Cora} & \multicolumn{2}{c|}{Citeseer} & \multicolumn{2}{c|}{Wikics} & \multicolumn{2}{c|}{History} & \multicolumn{2}{c|}{Arxiv2023} & \multicolumn{2}{c}{Children}\\
 & & Acc & F1 & Acc & F1 & Acc & F1 & Acc & F1 & Acc & F1 & Acc & F1\\
\hline
\multirow{1}{*}{Original}
& Origin & 88.4$\pm$0.9 & 87.4$\pm$1.2 & 78.4$\pm$0.8 & 74.7$\pm$0.7 & 80.2$\pm$1.3 & 77.6$\pm$0.5 & 84.1$\pm$0.4 & 42.6$\pm$1.1 & 76.2$\pm$0.6 & 53.7$\pm$0.9 & 52.6$\pm$1.2 & 30.9$\pm$0.7 \\
\cline{1-14}
\multirow{1}{*}{Classic-Aug}
& GAugO & 87.9$\pm$1.0 & 87.9$\pm$0.5 & 79.1$\pm$1.3 & 76.0$\pm$0.8 & 79.6$\pm$0.7 & 77.4$\pm$1.1 & 83.7$\pm$0.9 & 43.6$\pm$0.6 & 76.8$\pm$1.4 & 55.3$\pm$0.4 & 52.4$\pm$0.8 & 30.9$\pm$1.0\\
\cline{1-14}
\multirow{2}{*}{LLM-Aug}
& GraphEdit & \underline{91.6$\pm$0.6} & \underline{91.0$\pm$1.2} & \underline{83.0$\pm$0.9} & \underline{80.9$\pm$0.3} & \underline{83.8$\pm$1.1} & \underline{80.5$\pm$0.6} & \underline{87.6$\pm$1.3} & \underline{48.7$\pm$0.5} & \underline{78.2$\pm$0.4} & \underline{56.8$\pm$1.0} & \underline{56.1$\pm$0.7} & \underline{34.6$\pm$1.2}\\
& LLM4RGNN & 88.8$\pm$0.3 & 87.3$\pm$1.4 & 79.5$\pm$0.5 & 78.7$\pm$1.0 & 83.6$\pm$0.9 & 80.5$\pm$0.7 & 87.0$\pm$0.8 & 47.6$\pm$1.3 & 77.9$\pm$1.1 & 56.2$\pm$0.5 & 55.4$\pm$1.4 & 33.7$\pm$0.9\\
\cline{1-14}
\multirow{6}{*}{Classic-Syn}
& GraphSmote & 87.9$\pm$1.1 & 87.6$\pm$0.8 & 79.6$\pm$1.2 & 76.6$\pm$0.5 & 80.0$\pm$0.6 & 77.7$\pm$0.9 & 83.7$\pm$1.0 & 44.7$\pm$0.4 & 77.3$\pm$1.3 & 55.8$\pm$1.2 & 53.8$\pm$0.7 & 33.5$\pm$0.6\\
& G-Mixup & 87.7$\pm$0.5 & 87.8$\pm$1.3 & 78.2$\pm$0.7 & 75.3$\pm$1.1 & 79.9$\pm$1.2 & 77.5$\pm$0.4 & 85.4$\pm$0.9 & 44.2$\pm$1.1 & 76.4$\pm$0.7 & 54.9$\pm$0.8 & 53.6$\pm$1.2 & 33.3$\pm$0.9\\
& IntraMix & 81.1$\pm$0.8 & 81.6$\pm$0.4 & 71.1$\pm$1.1 & 70.2$\pm$0.9 & 73.8$\pm$0.5 & 74.2$\pm$1.3 & 82.2$\pm$0.7 & 42.2$\pm$0.8 & 72.2$\pm$1.0 & 53.3$\pm$0.3 & 45.0$\pm$0.9 & 31.7$\pm$1.0\\
& GraphAdasyn & 90.0$\pm$1.3 & 90.1$\pm$0.5 & 79.4$\pm$0.9 & 77.4$\pm$1.3 & 84.3$\pm$0.4 & 81.6$\pm$0.8 & 86.7$\pm$1.2 & 45.4$\pm$0.6 & 77.8$\pm$0.8 & 56.3$\pm$1.1 & 56.4$\pm$0.3 & 35.1$\pm$0.5\\
& FG-SMOTE & 88.0$\pm$0.6 & 88.0$\pm$1.0 & 79.8$\pm$0.4 & 76.8$\pm$0.7 & 80.3$\pm$1.3 & 77.8$\pm$0.5 & 85.8$\pm$0.9 & 44.8$\pm$1.2 & 77.6$\pm$0.6 & 56.2$\pm$0.8 & 54.3$\pm$1.1 & 33.6$\pm$0.4\\
& AGMixup & 87.7$\pm$0.9 & 87.3$\pm$0.3 & 78.1$\pm$1.3 & 74.5$\pm$0.8 & 81.5$\pm$0.7 & 78.6$\pm$1.1 & 84.3$\pm$0.5 & 42.5$\pm$0.7 & 74.9$\pm$1.2 & 52.1$\pm$0.9 & 53.5$\pm$0.4 & 31.5$\pm$1.1\\
\cline{1-14}
\multirow{4}{*}{LLM-Syn}
& GAG & 91.2$\pm$0.7 & 90.5$\pm$1.1 & 81.4$\pm$0.8 & 80.6$\pm$0.5 & 85.9$\pm$1.0 & 82.4$\pm$0.6 & 88.9$\pm$1.3 & 48.7$\pm$0.9 & 78.7$\pm$0.3 & 57.2$\pm$1.4 & 57.7$\pm$0.8 & 36.9$\pm$0.5\\
& LLM4NG & 82.0 $\pm$0.2 & 81.2 $\pm$0.4 & 84.3 $\pm$0.5 & 83.1 $\pm$0.5 & 80.0 $\pm$0.3 & 77.5 $\pm$0.5 & 81.8 $\pm$0.6 & 45.6 $\pm$0.7 & 89.2 $\pm$0.7 & 78.1 $\pm$0.4 & 46.2 $\pm$0.5 & 26.4 $\pm$0.9\\
& Mixed-LLM & 91.4$\pm$1.2 & 90.7$\pm$0.6 & 84.1$\pm$0.9 & 84.7$\pm$1.3 & 83.7$\pm$0.4 & 81.0$\pm$0.8 & 90.2$\pm$0.7 & 58.3$\pm$1.1 & 82.7$\pm$1.0 & 59.6$\pm$0.4 & 62.3$\pm$1.3 & 42.5$\pm$0.7\\
& Synthesis-LLM & 91.3$\pm$0.4 & 90.2$\pm$1.0 & 84.3$\pm$1.2 & 85.2$\pm$0.7 & 84.8$\pm$0.9 & 82.1$\pm$0.5 & 90.5$\pm$0.6 & 57.4$\pm$1.4 & 83.2$\pm$0.8 & 58.5$\pm$1.2 & 63.4$\pm$0.5 & 43.1$\pm$0.9\\
\cline{2-14}
\rowcolor{cyan!25}& \textbf{GraphMaster}  & \textbf{93.9$\pm$0.5} & \textbf{92.7$\pm$1.1} & \textbf{88.9$\pm$0.7} & \textbf{87.9$\pm$0.9} & \textbf{87.5$\pm$1.2} & \textbf{86.4$\pm$0.6} & \textbf{92.9$\pm$0.8} & \textbf{62.3$\pm$1.0} & \textbf{87.4$\pm$0.5} & \textbf{66.2$\pm$1.3} & \textbf{66.9$\pm$0.7} & \textbf{47.1$\pm$1.1}\\
\hline
\end{tabular}}
\label{tab:GraphSage}
\end{table*}

\begin{table*}[ht]
\centering
\caption{Comparison of GraphMaster with other TAG synthesis methods in GAT model.}
\resizebox{\textwidth}{!}{
\renewcommand{\arraystretch}{1.2}
\begin{tabular}{lc|cc|cc|cc|cc|cc|cc}
\hline
\multirow{2}{*}{Type} & \multirow{2}{*}{Model} & \multicolumn{2}{c|}{Cora} & \multicolumn{2}{c|}{Citeseer} & \multicolumn{2}{c|}{Wikics} & \multicolumn{2}{c|}{History} & \multicolumn{2}{c|}{Arxiv2023} & \multicolumn{2}{c}{Children}\\
 & & Acc & F1 & Acc & F1 & Acc & F1 & Acc & F1 & Acc & F1 & Acc & F1\\
\hline
\multirow{1}{*}{Original}
& Origin & 85.9$\pm$0.8 & 85.0$\pm$0.8 & 75.9$\pm$0.8 & 72.6$\pm$0.9 & 77.9$\pm$0.6 & 75.9$\pm$0.7 & 81.8$\pm$0.9 & 46.7$\pm$0.8 & 74.3$\pm$0.6 & 52.1$\pm$1.0 & 51.0$\pm$0.7 & 30.0$\pm$0.9 \\
\cline{1-14}
\multirow{1}{*}{Classic-Aug}
& GAugO & 85.6$\pm$0.7 & 84.7$\pm$0.6 & 76.1$\pm$0.7 & 73.8$\pm$0.7 & 78.6$\pm$1.0 & 75.3$\pm$1.1 & 82.1$\pm$0.7 & 42.6$\pm$0.6 & 75.3$\pm$1.0 & 53.7$\pm$0.6 & 51.8$\pm$1.1 & 30.0$\pm$0.6\\
\cline{1-14}
\multirow{2}{*}{LLM-Aug}
& GraphEdit & \underline{88.2$\pm$0.6} & \underline{87.0$\pm$0.9} & \underline{79.8$\pm$0.9} & \underline{78.4$\pm$1.1} & \underline{81.4$\pm$0.8} & \underline{78.3$\pm$0.6} & \underline{85.1$\pm$1.0} & \underline{47.3$\pm$1.1} & 76.0$\pm$0.7 & 55.0$\pm$0.9 & 54.6$\pm$0.6 & 33.5$\pm$1.0\\
& LLM4RGNN & 87.8$\pm$0.5 & 85.3$\pm$0.7 & 79.3$\pm$0.6 & 76.4$\pm$0.6 & 81.2$\pm$0.5 & 78.1$\pm$0.9 & 85.3$\pm$0.6 & 46.2$\pm$0.7 & 76.3$\pm$1.2 & 54.4$\pm$0.7 & 54.0$\pm$0.9 & 32.7$\pm$0.7\\
\cline{1-14}
\multirow{6}{*}{Classic-Syn}
& GraphSmote & 85.8$\pm$0.7 & 84.4$\pm$0.5 & 76.9$\pm$1.2 & 74.3$\pm$0.8 & 78.3$\pm$1.1 & 75.6$\pm$0.5 & 82.3$\pm$1.2 & 43.6$\pm$0.9 & 75.5$\pm$0.5 & 54.3$\pm$1.1 & 52.2$\pm$0.5 & 32.5$\pm$1.1\\
& G-Mixup & 84.8$\pm$0.6 & 84.3$\pm$0.8 & 76.1$\pm$0.8 & 73.2$\pm$1.2 & 77.5$\pm$0.7 & 75.4$\pm$1.0 & 82.5$\pm$0.8 & 43.4$\pm$0.5 & 74.0$\pm$0.9 & 53.4$\pm$0.5 & 51.9$\pm$1.2 & 31.9$\pm$0.5\\
& IntraMix & 78.4$\pm$0.9 & 79.5$\pm$0.6 & 69.0$\pm$0.5 & 68.3$\pm$0.5 & 71.9$\pm$0.9 & 72.1$\pm$0.7 & 79.9$\pm$0.5 & 41.3$\pm$1.2 & 71.5$\pm$0.7 & 51.9$\pm$0.8 & 43.9$\pm$0.7 & 30.7$\pm$0.8\\
& GraphAdasyn & 86.8$\pm$0.8 & 86.3$\pm$0.9 & 77.3$\pm$1.0 & 75.1$\pm$0.9 & 82.0$\pm$0.6 & 78.0$\pm$1.2 & 83.7$\pm$1.1 & 44.3$\pm$0.7 & 75.6$\pm$1.1 & 54.7$\pm$1.2 & 54.8$\pm$1.0 & 34.0$\pm$1.2\\
& FG-SMOTE & 85.9$\pm$0.5 & 84.6$\pm$0.7 & 77.2$\pm$0.7 & 74.2$\pm$0.7 & 78.9$\pm$1.2 & 76.0$\pm$0.5 & 83.3$\pm$0.7 & 43.8$\pm$1.0 & 75.5$\pm$0.6 & 54.8$\pm$0.6 & 52.8$\pm$0.6 & 32.5$\pm$0.7\\
& AGMixup & 83.0$\pm$0.7 & 83.7$\pm$0.5 & 75.0$\pm$1.1 & 72.6$\pm$1.0 & 79.6$\pm$0.5 & 77.0$\pm$0.9 & 82.3$\pm$0.9 & 41.6$\pm$0.6 & 73.7$\pm$0.8 & 51.5$\pm$0.9 & 51.8$\pm$0.9 & 30.5$\pm$0.9\\
\cline{1-14}
\multirow{4}{*}{LLM-Syn}
& GAG & 88.3$\pm$0.8 & 87.0$\pm$0.8 & 80.5$\pm$0.6 & 78.3$\pm$0.8 & 83.5$\pm$0.9 & 80.0$\pm$0.7 & 86.3$\pm$0.6 & 47.3$\pm$0.9 & 77.1$\pm$1.2 & 55.7$\pm$0.7 & 56.2$\pm$0.7 & 35.8$\pm$0.6\\
& LLM4NG & 77.6 $\pm$0.2 & 76.3 $\pm$0.4 & 69.4 $\pm$0.7 & 67.9 $\pm$0.5 & 77.9 $\pm$0.3 & 76.6 $\pm$0.5 & 82.0 $\pm$0.4 & 41.7 $\pm$0.7 & 61.2 $\pm$0.4 & 51.4 $\pm$0.7 & 46.9 $\pm$0.4 & 27.2 $\pm$0.5\\
& Mixed-LLM & 88.2$\pm$0.6 & 87.2$\pm$0.7 & 81.4$\pm$0.9 & 81.8$\pm$0.6 & 81.9$\pm$0.7 & 78.7$\pm$1.1 & 87.6$\pm$1.0 & 57.0$\pm$0.7 & 80.1$\pm$0.5 & 57.9$\pm$1.0 & 60.7$\pm$1.1 & 41.2$\pm$1.0\\
& Synthesis-LLM & 88.1$\pm$0.9 & 87.0$\pm$0.6 & 81.6$\pm$0.7 & 82.3$\pm$1.1 & 82.3$\pm$1.0 & 79.5$\pm$0.6 & 87.8$\pm$0.5 & 55.7$\pm$1.1 & \underline{80.8$\pm$0.9} & \underline{56.8$\pm$0.5} & \underline{61.9$\pm$0.5} & \underline{41.8$\pm$0.8}\\
\cline{2-14}
\rowcolor{cyan!25}& \textbf{GraphMaster}  & \textbf{89.9$\pm$0.7} & \textbf{88.6$\pm$0.9} & \textbf{85.0$\pm$1.0} & \textbf{84.3$\pm$0.7} & \textbf{84.2$\pm$0.8} & \textbf{82.7$\pm$0.8} & \textbf{89.0$\pm$0.9} & \textbf{59.1$\pm$0.8} & \textbf{83.7$\pm$0.7} & \textbf{63.0$\pm$0.9} & \textbf{63.9$\pm$0.8} & \textbf{44.5$\pm$0.6}\\
\hline
\end{tabular}}
\label{tab:GAT}
\end{table*}

Table~\ref{tab:main} presents the experimental results using the GCN model, Table~\ref{tab:JKNet} shows results using the JKNet model, Table~\ref{tab:GraphSage} displays results using the GraphSage model, and Table~\ref{tab:GAT} demonstrates results using the GAT model. Our findings indicate that GraphMaster consistently outperforms other approaches across all GNN architectures, strongly validating the universality of our proposed model. Furthermore, we observe that our two novel baselines frequently achieve second-place rankings, which substantiates the significant potential of Large Language Models (LLMs) in text-attributed graph data synthesis. As the first work leveraging LLMs for text-attributed graph synthesis, GraphMaster exhibits both efficient resource utilization and remarkable capabilities for TAG data generation.

\section{Detailed Graph Feature Analysis} \label{Detailed Graph Feature Analysis}

\begin{figure}[h]
  \centering
  \begin{subfigure}[b]{0.32\textwidth}
    \centering
        \includegraphics[width=\textwidth,height=3.0cm]{./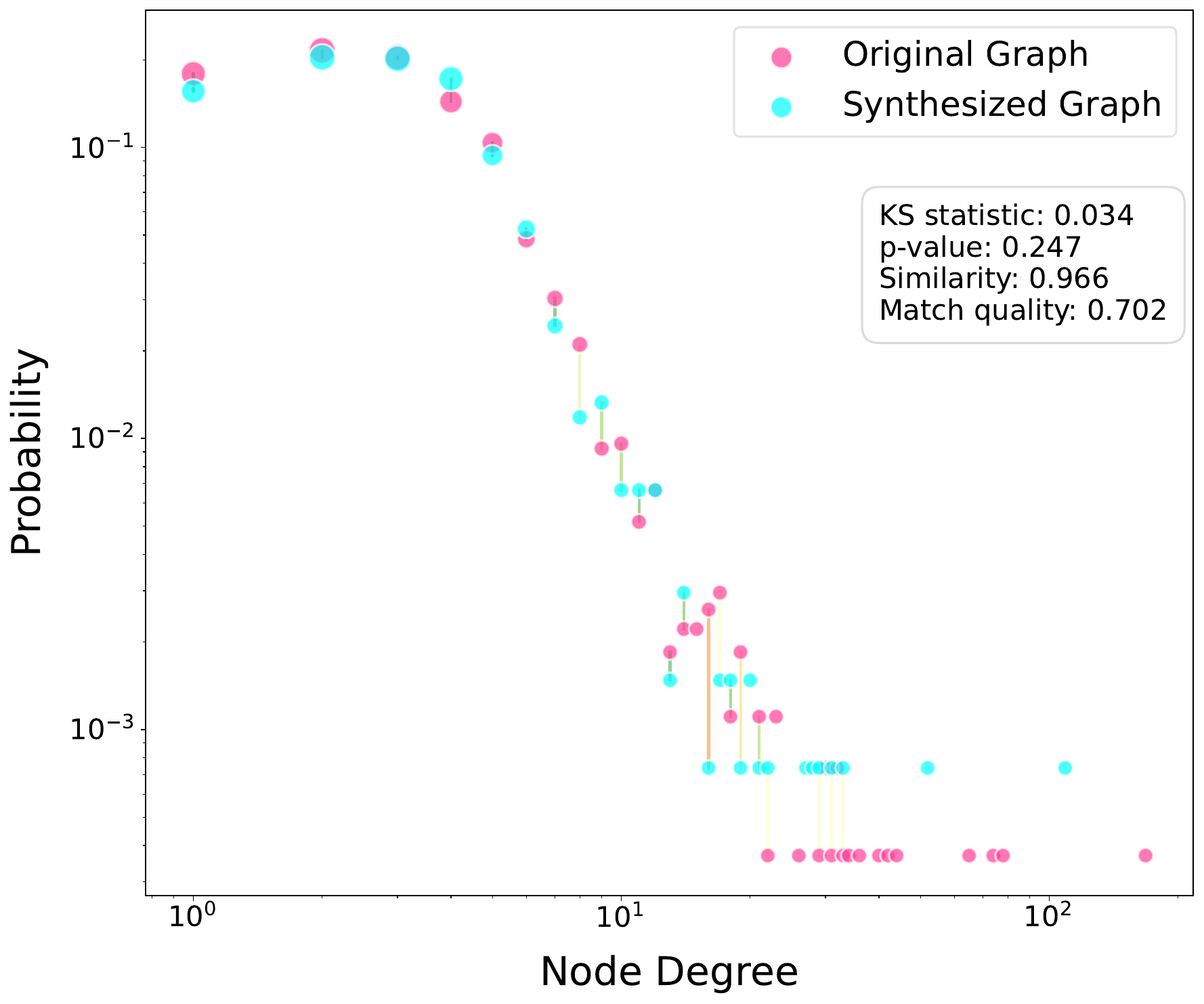}
    \caption{Degree distribution}
    \label{fig:Original_Cora_degree_distribution} 
  \end{subfigure}%
  \begin{subfigure}[b]{0.32\textwidth}
    \centering
     \includegraphics[width=\textwidth,height=3.0cm]{./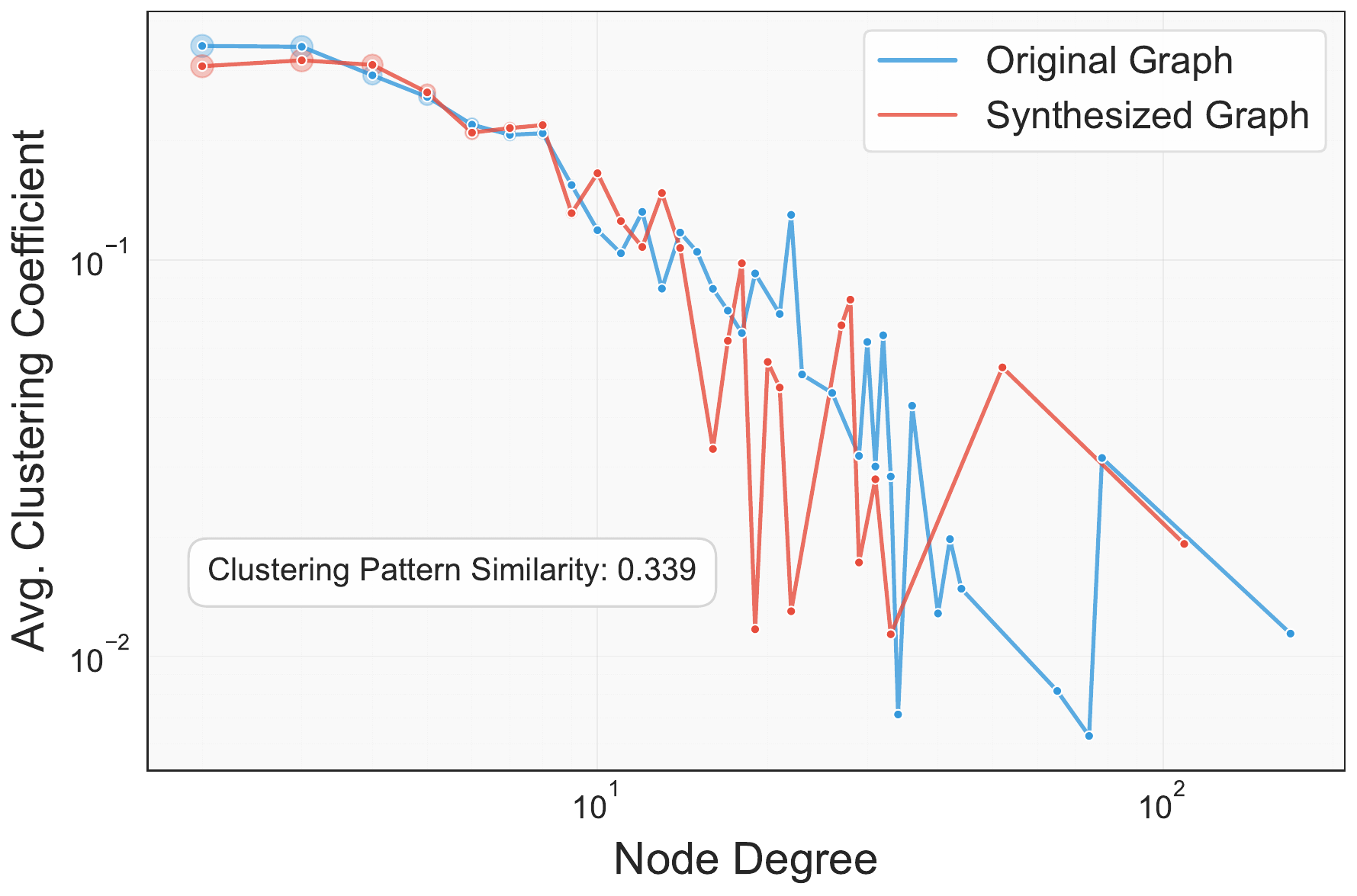}
    \caption{Clustering coeffiecient}
    \label{fig:Original_Cora_clustering_coefficient}
    \end{subfigure}%
  \begin{subfigure}[b]{0.32\textwidth}
    \centering
    \includegraphics[width=\textwidth,height=3.0cm]{./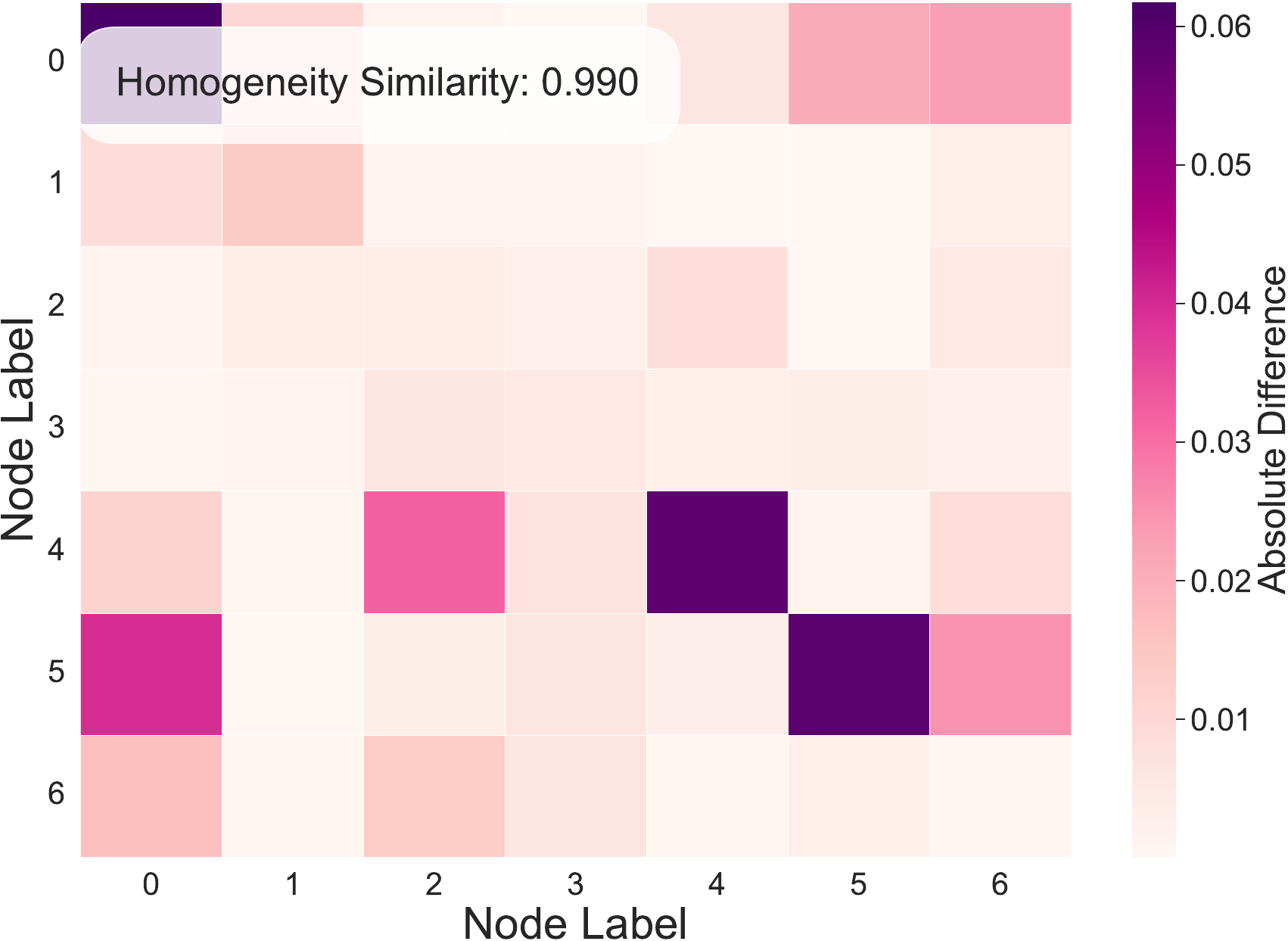}
    \caption{Label homogeneity}
    \label{fig:Original_Cora_label_homogeneity}
    \end{subfigure}%
    \\
  \begin{subfigure}[b]{0.32\textwidth}
    \centering
    \includegraphics[width=\textwidth,height=3.0cm]{./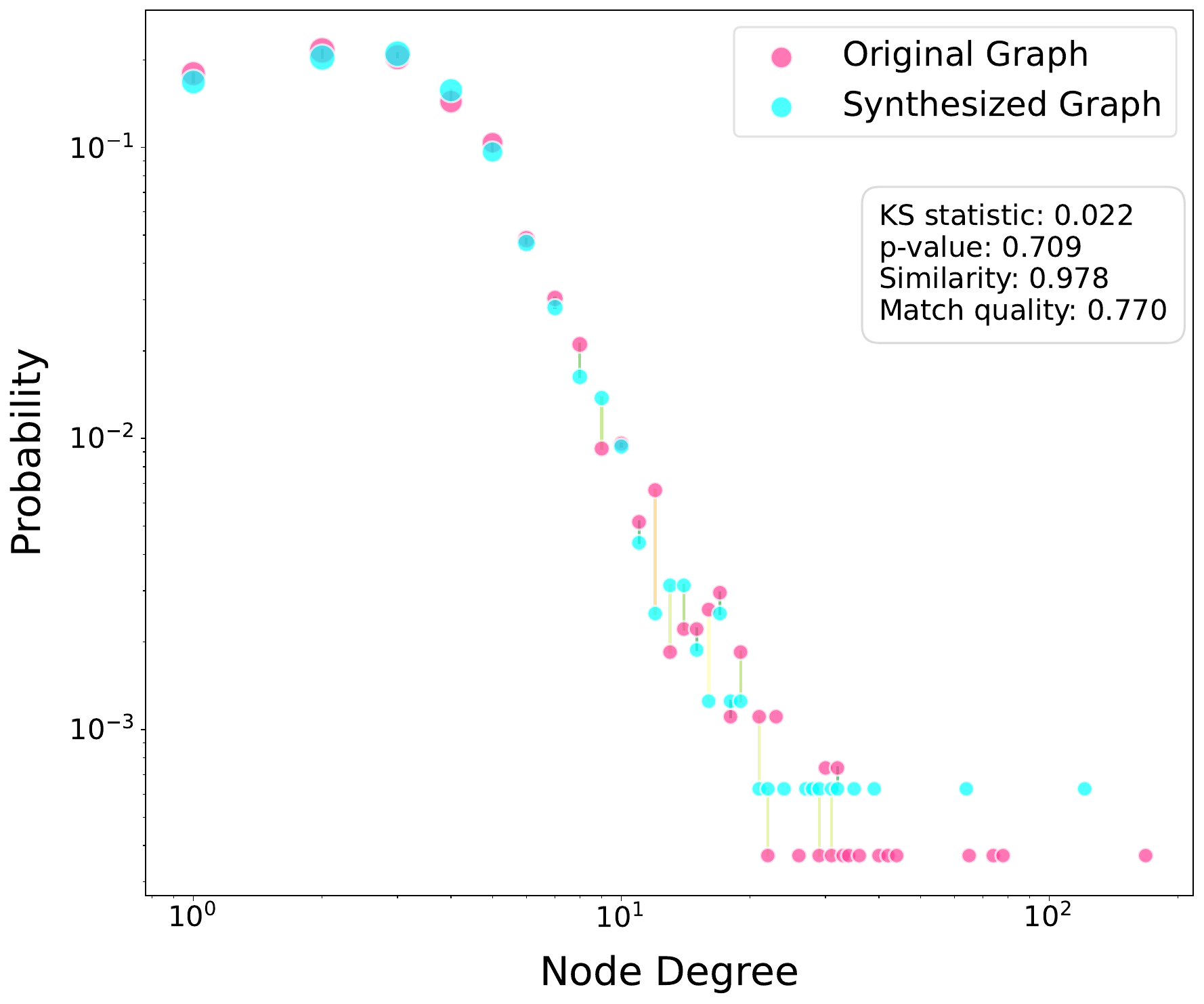}
    \caption{Degree distribution}
    \label{fig:Synthesis_Cora_degree_distribution} 
  \end{subfigure}%
  \begin{subfigure}[b]{0.32\textwidth}
    \centering
    \includegraphics[width=\textwidth,height=3.0cm]{./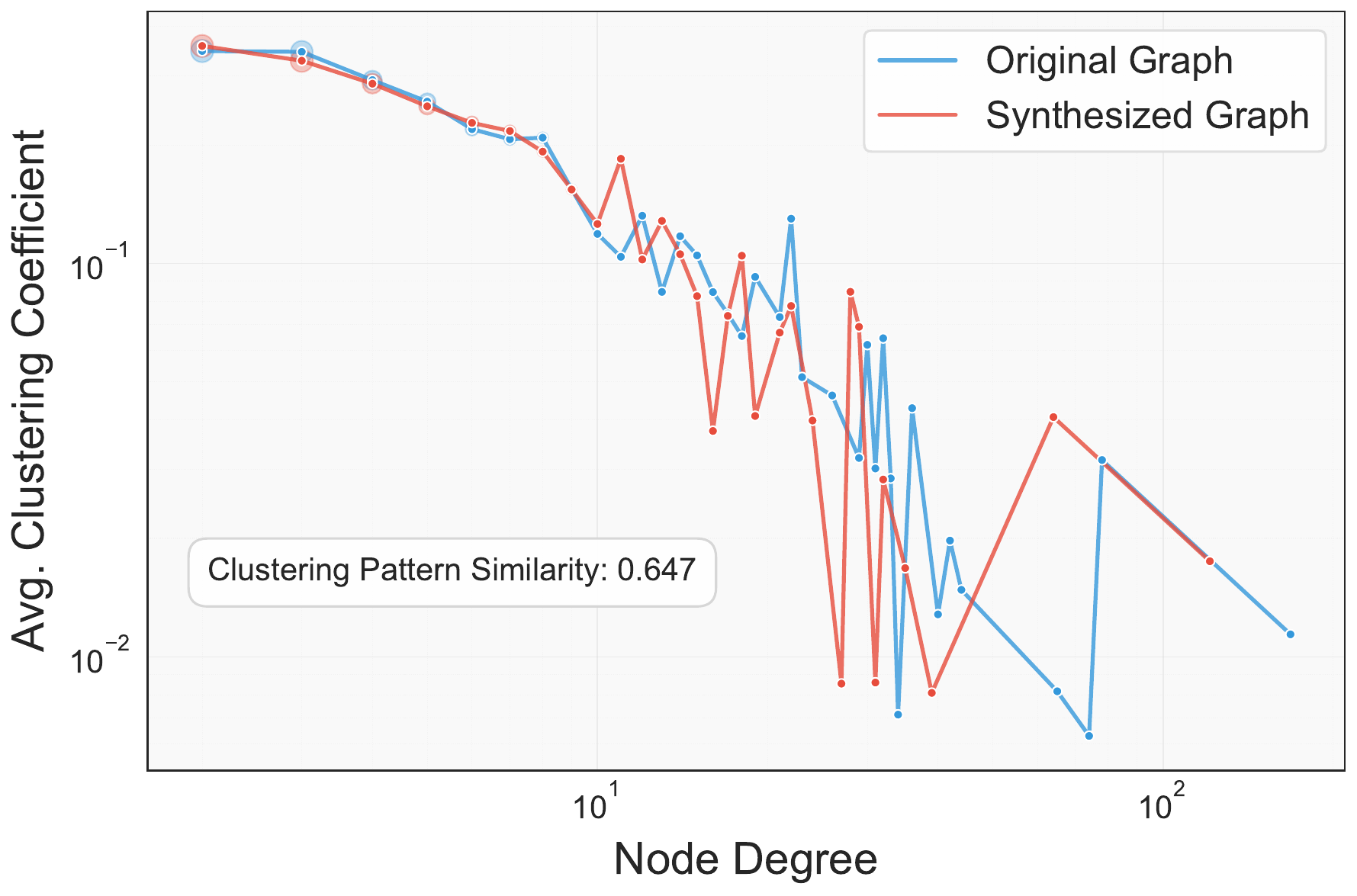}
    \caption{Clustering coeffiecient}
    \label{fig:Synthesis_Cora_clustering_coefficient}
    \end{subfigure}%
  \begin{subfigure}[b]{0.32\textwidth}
    \centering
    \includegraphics[width=\textwidth,height=3.0cm]{./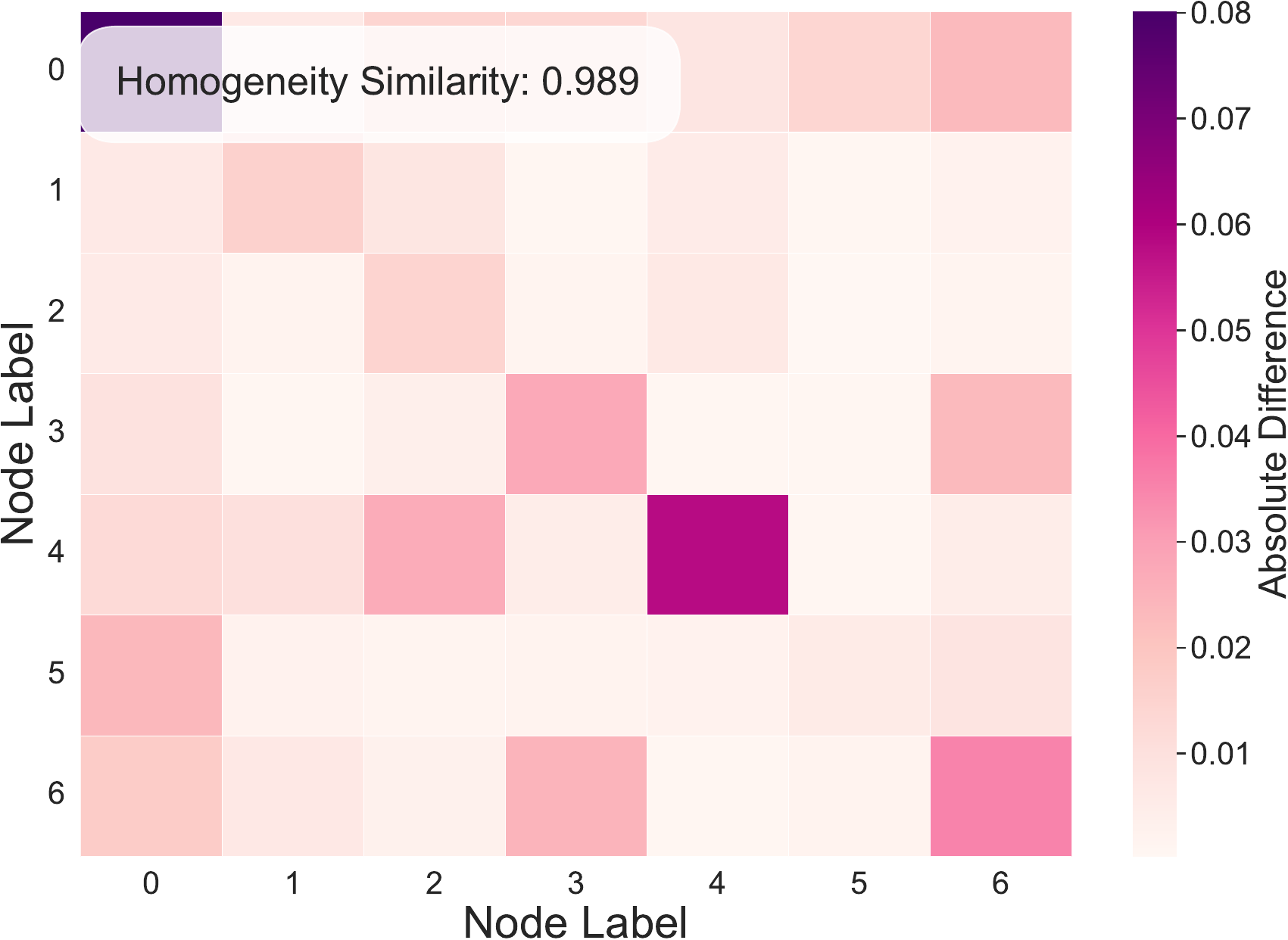}
    \caption{Label homogeneity}
    \label{fig:Synthesis_Cora_label_homogeneity}
    \end{subfigure}%

  \vspace{-4pt}
  \caption{Graph feature analysis on Cora dataset. The top three rows of pictures are the results of the original data-limited dataset, and the bottom three rows are the results after TAG data synthesis using GraphMaster.}
  \label{fig:Graph feature analysis on Cora dataset}
\end{figure}

\begin{figure}[h]
  \centering
  \begin{subfigure}[b]{0.32\textwidth}
    \centering
    \includegraphics[width=\textwidth,height=3.0cm]{./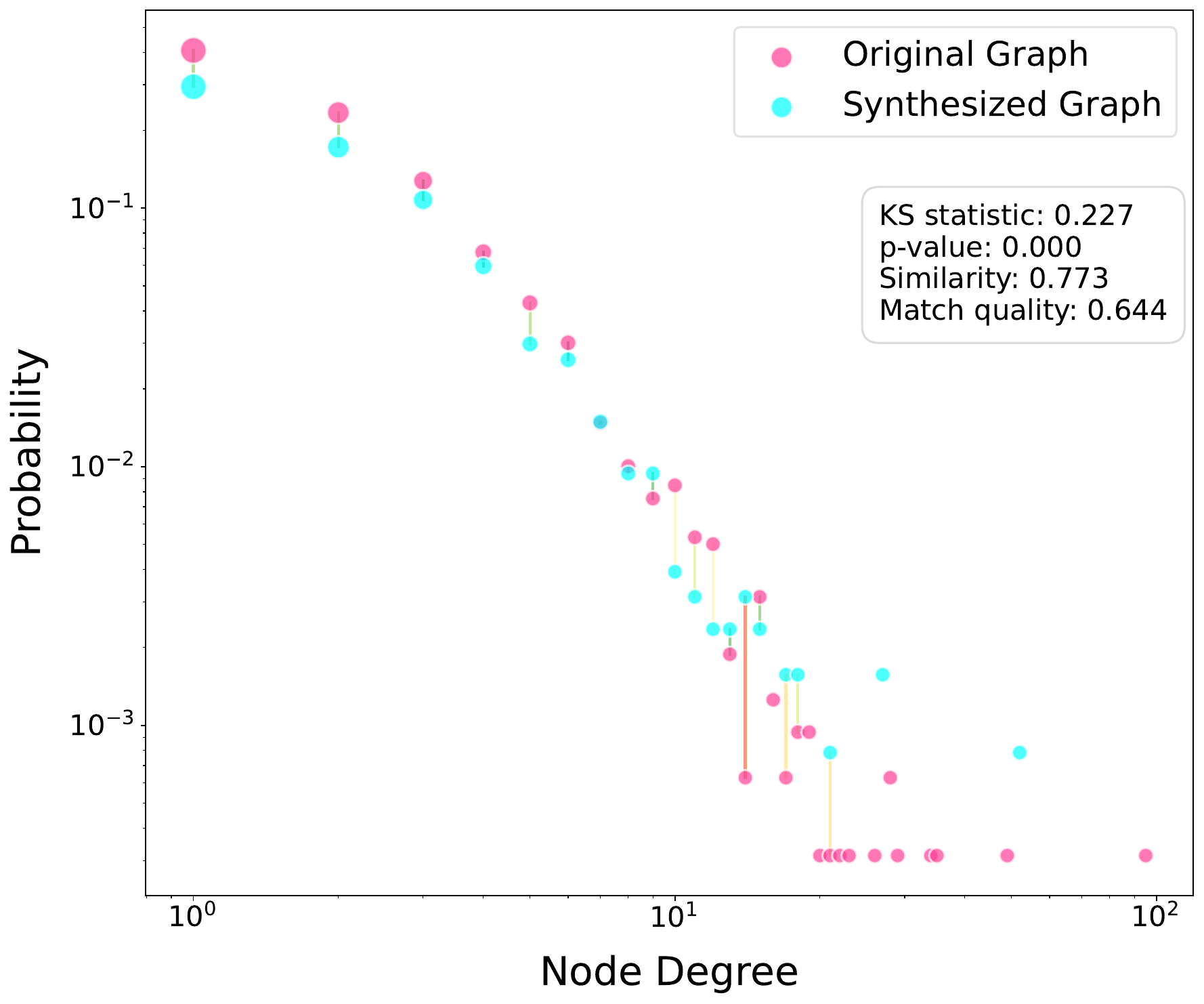}
    \caption{Degree distribution}
    \label{fig:Original_Citeseer_degree_distribution} 
  \end{subfigure}%
  \begin{subfigure}[b]{0.32\textwidth}
    \centering
    \includegraphics[width=\textwidth,height=3.0cm]{./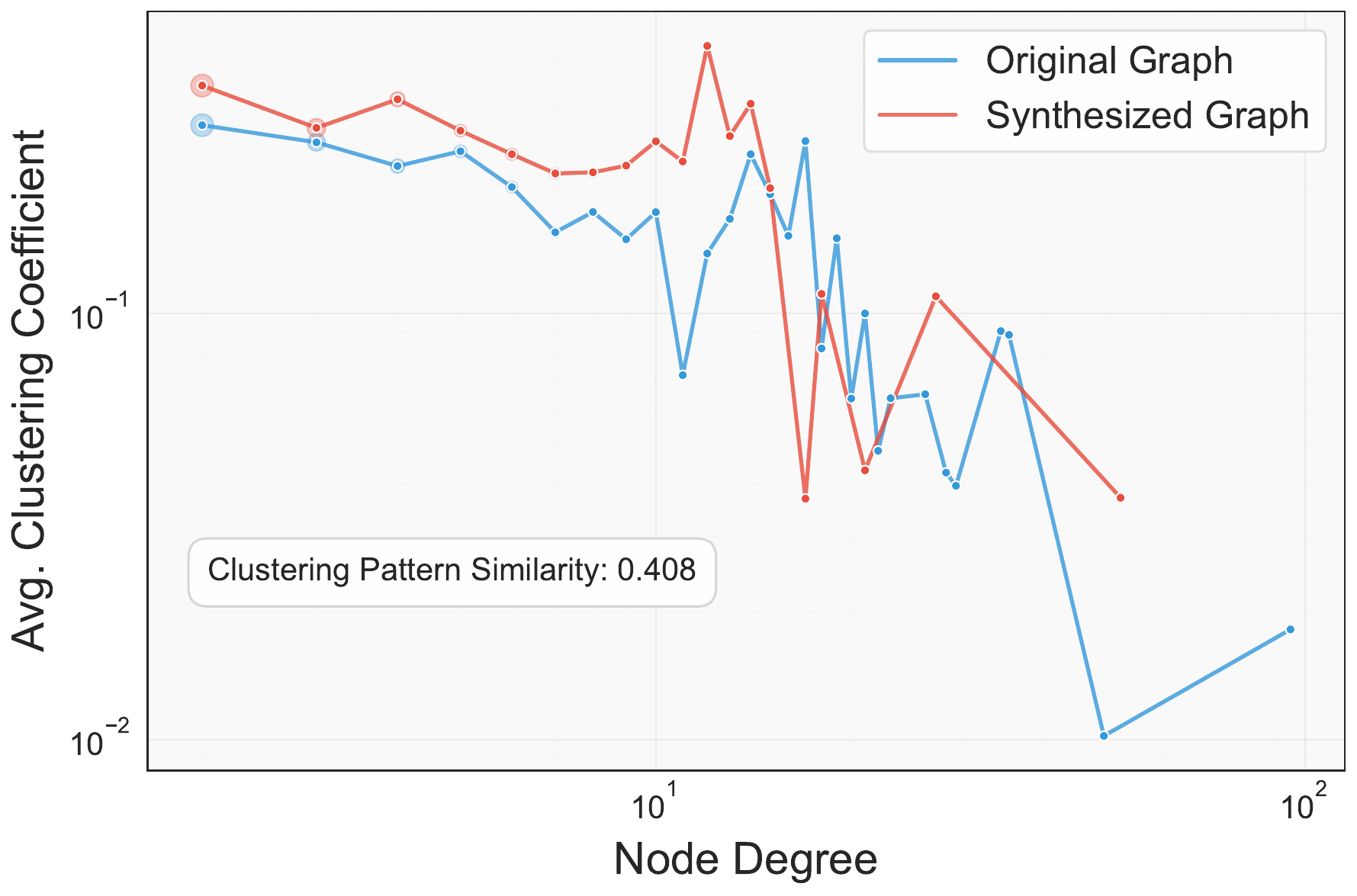}
    \caption{Clustering coeffiecient}
    \label{fig:Original_Citeseer_clustering_coefficient}
    \end{subfigure}%
  \begin{subfigure}[b]{0.32\textwidth}
    \centering
    \includegraphics[width=\textwidth,height=3.0cm]{./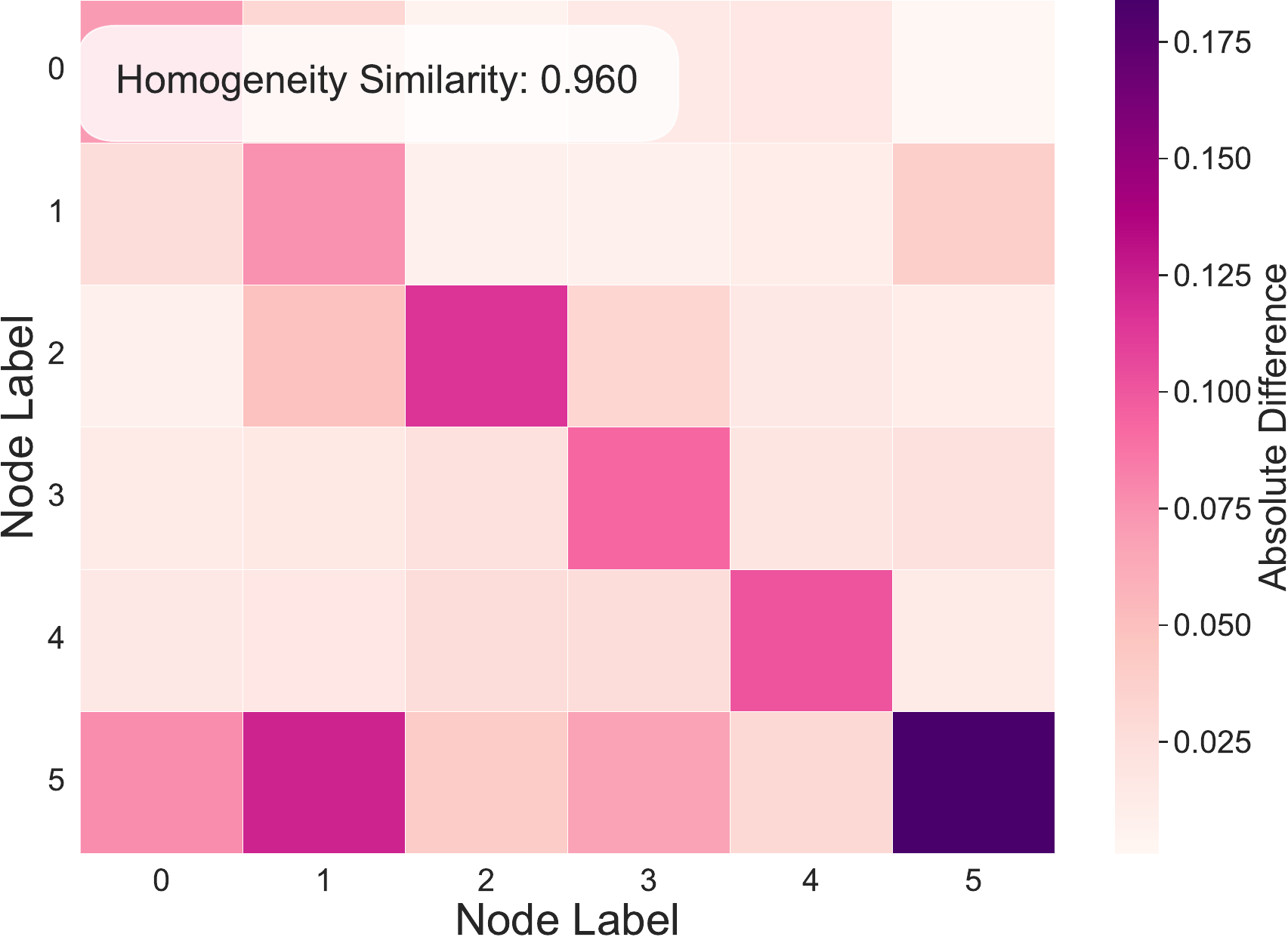}
    \caption{Label homogeneity}
    \label{fig:Original_Citeseer_label_homogeneity}
    \end{subfigure}%
    \\
  \begin{subfigure}[b]{0.32\textwidth}
    \centering
    \includegraphics[width=\textwidth,height=3.0cm]{./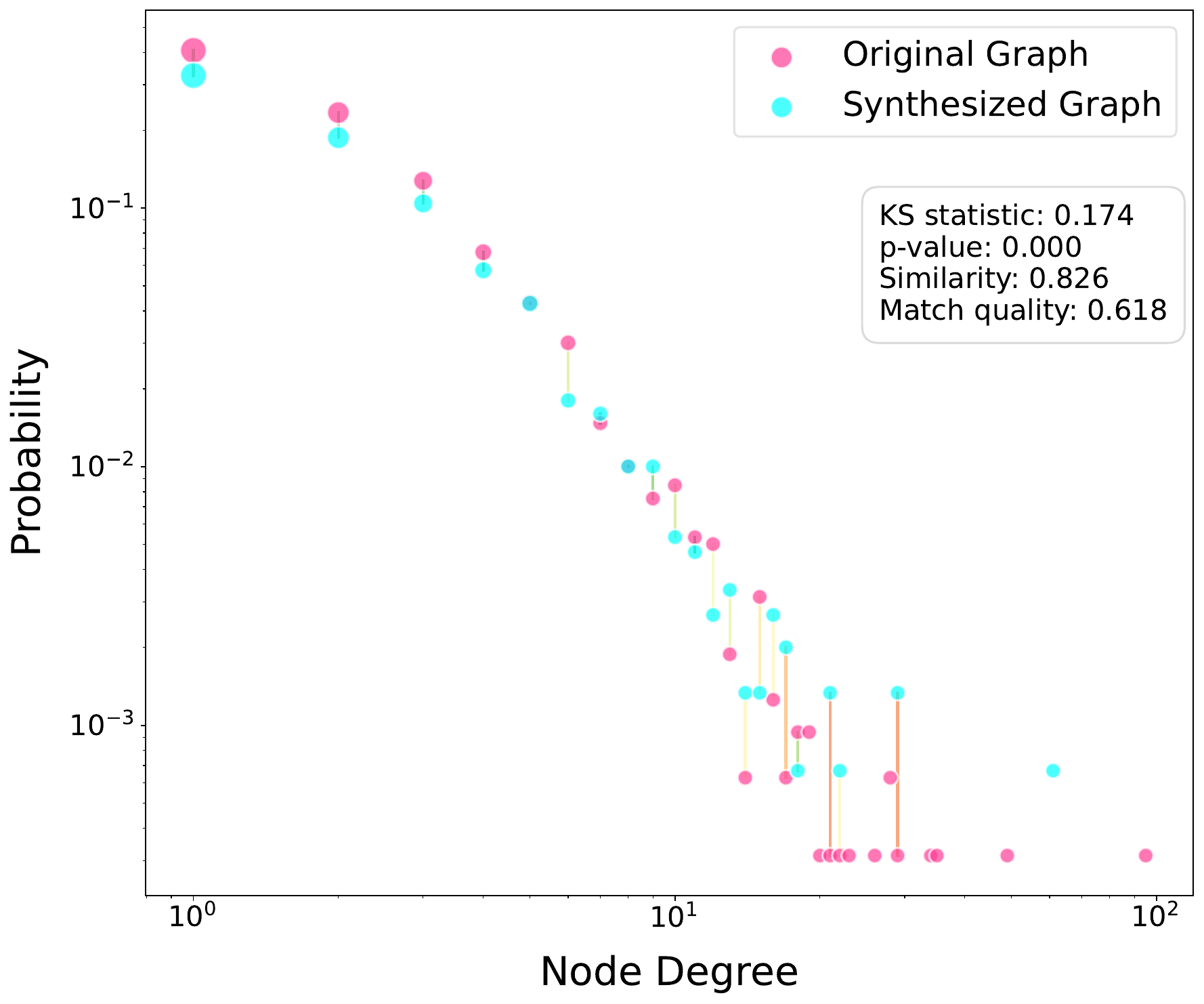}
    \caption{Degree distribution}
    \label{fig:Synthesis_Citeseer_degree_distribution} 
  \end{subfigure}%
  \begin{subfigure}[b]{0.32\textwidth}
    \centering
    \includegraphics[width=\textwidth,height=3.0cm]{./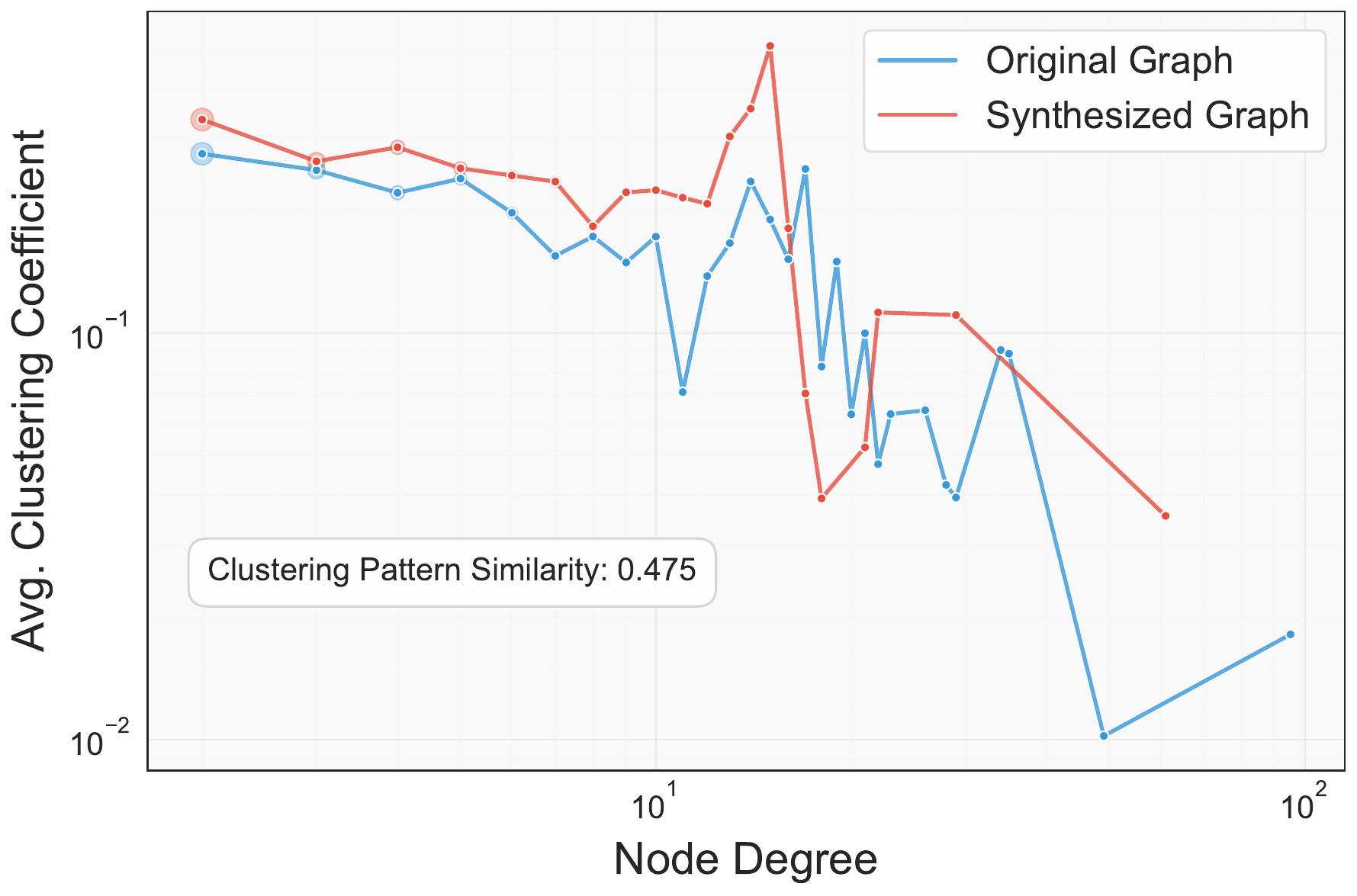}
    \caption{Clustering coeffiecient}
    \label{fig:Synthesis_Citeseer_clustering_coefficient}
    \end{subfigure}%
  \begin{subfigure}[b]{0.32\textwidth}
    \centering
    \includegraphics[width=\textwidth,height=3.0cm]{./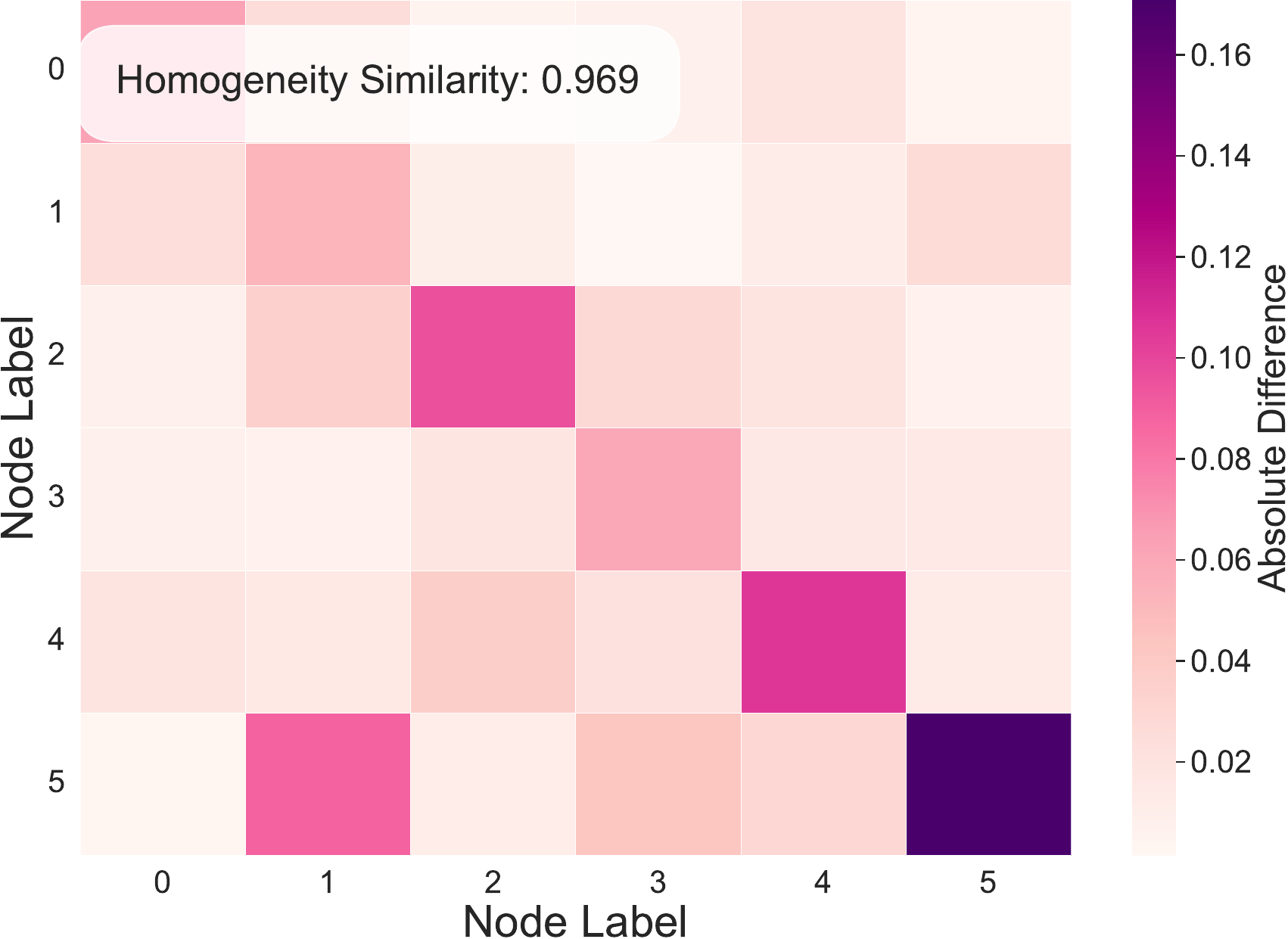}
    \caption{Label homogeneity}
    \label{fig:Synthesis_Citeseer_label_homogeneity}
    \end{subfigure}%

  \vspace{-4pt}
  \caption{Graph feature analysis on Citeseer dataset. The top three rows of pictures are the results of the original data-limited dataset, and the bottom three rows are the results after TAG data synthesis using GraphMaster.}
  \label{fig:Graph feature analysis on Citeseer dataset}
\end{figure}

\begin{figure}[h]
  \centering
  \begin{subfigure}[b]{0.32\textwidth}
    \centering
    \includegraphics[width=\textwidth,height=3.0cm]{./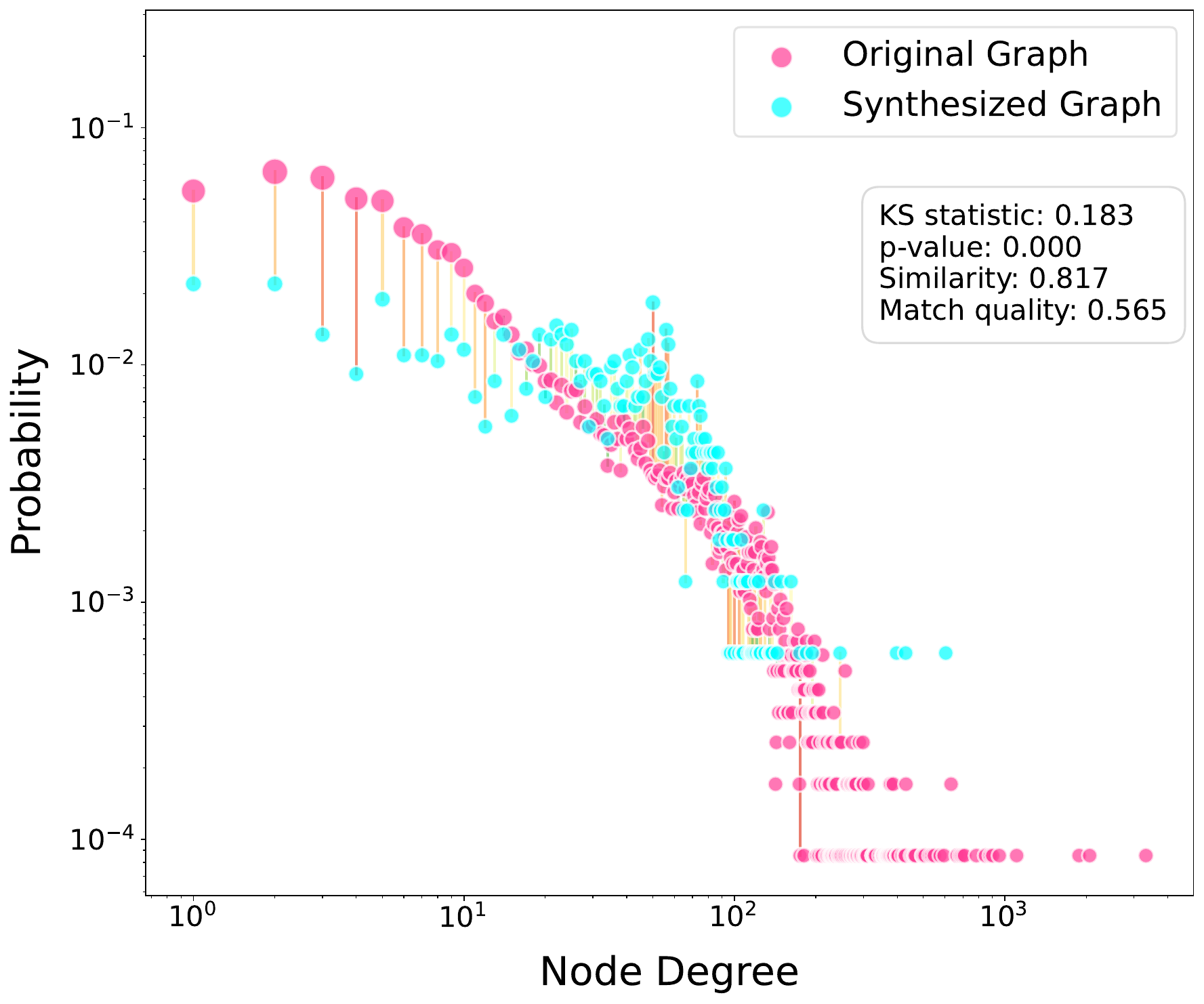}
    \caption{Degree distribution}
    \label{fig:Original_Wikics_degree_distribution} 
  \end{subfigure}%
  \begin{subfigure}[b]{0.32\textwidth}
    \centering
    \includegraphics[width=\textwidth,height=3.0cm]{./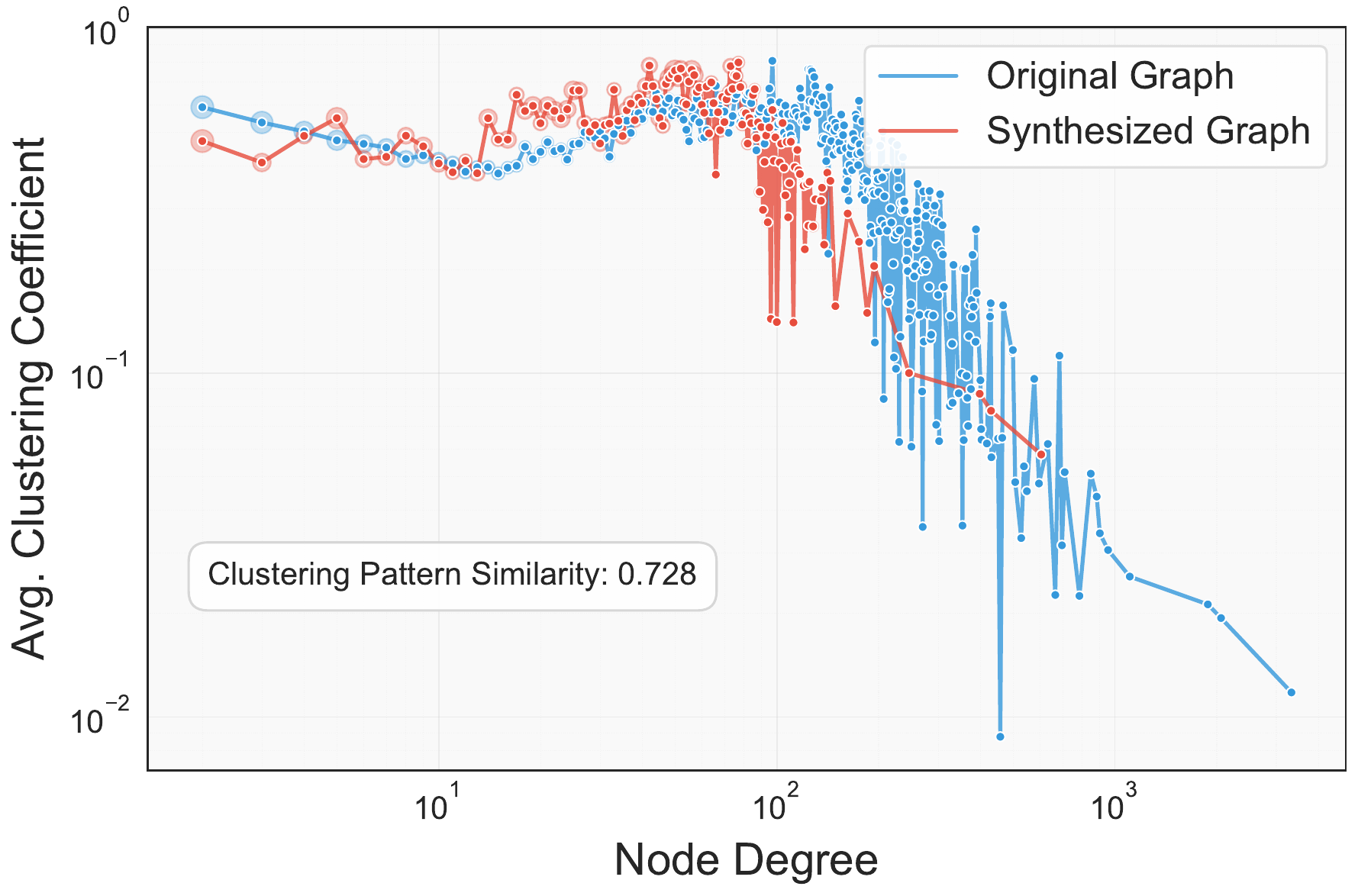}
    \caption{Clustering coeffiecient}
    \label{fig:Original_Wikics_clustering_coefficient}
    \end{subfigure}%
  \begin{subfigure}[b]{0.32\textwidth}
    \centering
    \includegraphics[width=\textwidth,height=3.0cm]{./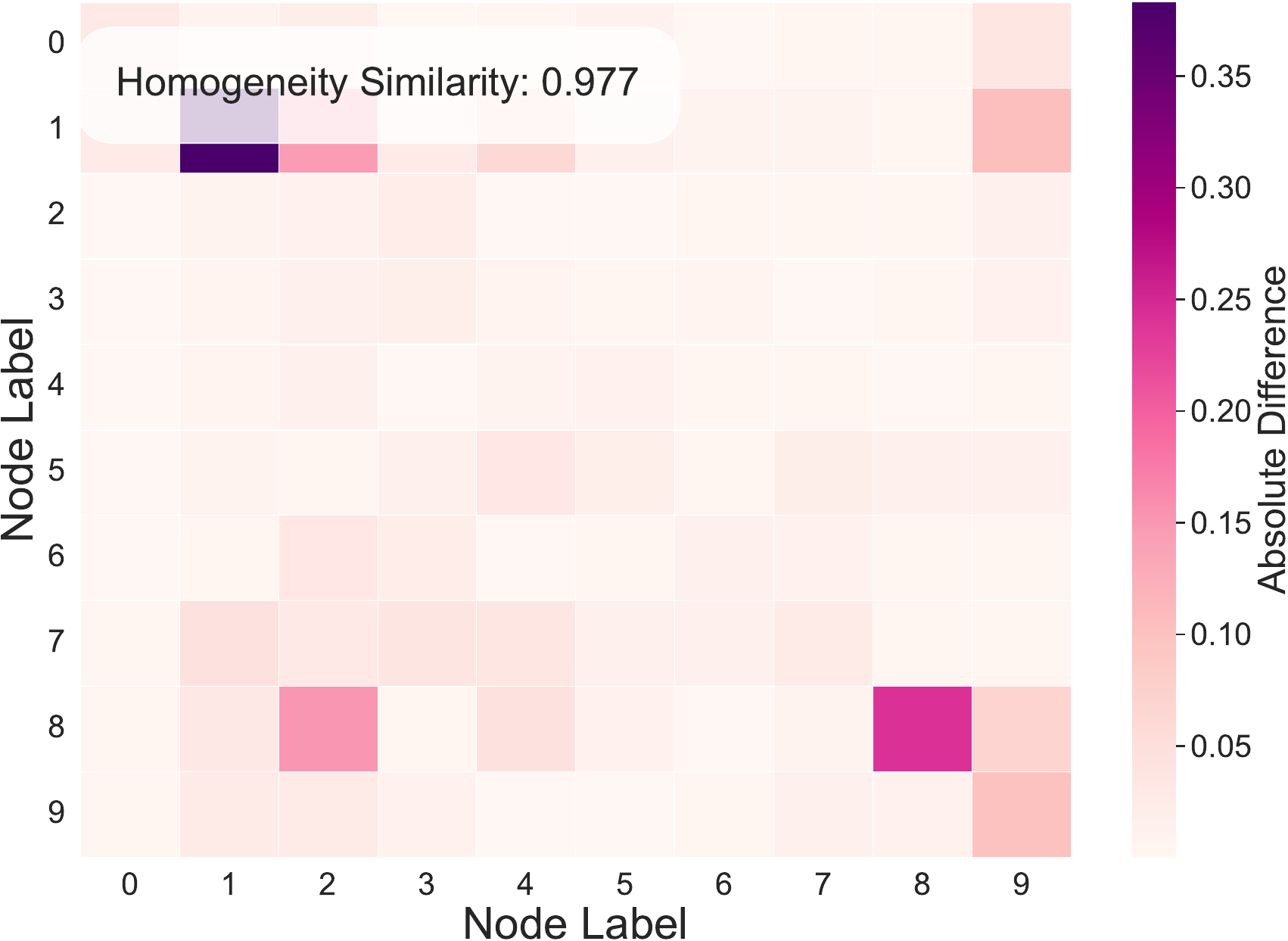}
    \caption{Label homogeneity}
    \label{fig:Original_Wikics_label_homogeneity}
    \end{subfigure}%
    \\
  \begin{subfigure}[b]{0.32\textwidth}
    \centering
    \includegraphics[width=\textwidth,height=3.0cm]{./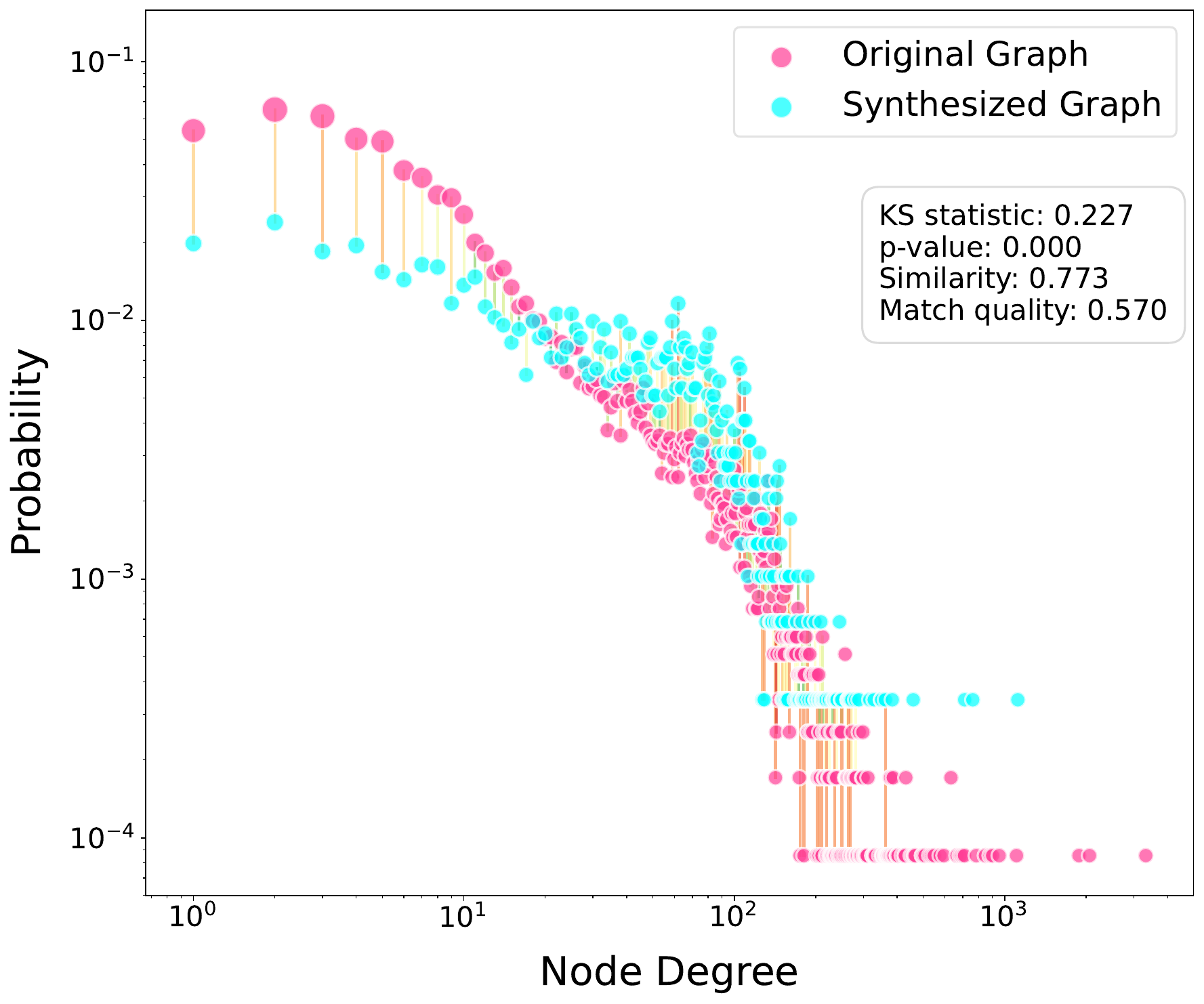}
    \caption{Degree distribution}
    \label{fig:Synthesis_Wikics_degree_distribution} 
  \end{subfigure}%
  \begin{subfigure}[b]{0.32\textwidth}
    \centering
    \includegraphics[width=\textwidth,height=3.0cm]{./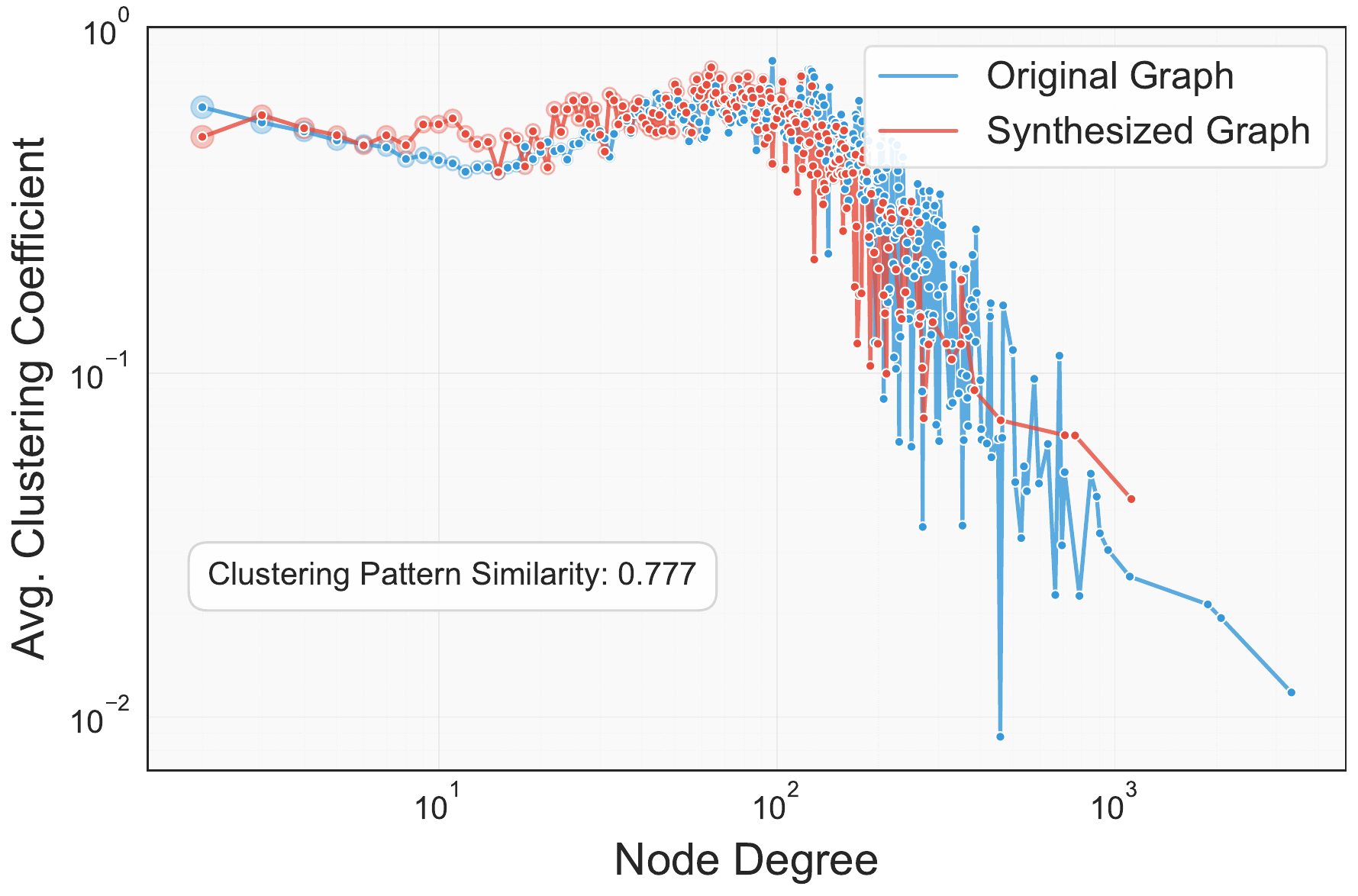}
    \caption{Clustering coeffiecient}
    \label{fig:Synthesis_Wikics_clustering_coefficient}
    \end{subfigure}%
  \begin{subfigure}[b]{0.32\textwidth}
    \centering
    \includegraphics[width=\textwidth,height=3.0cm]{./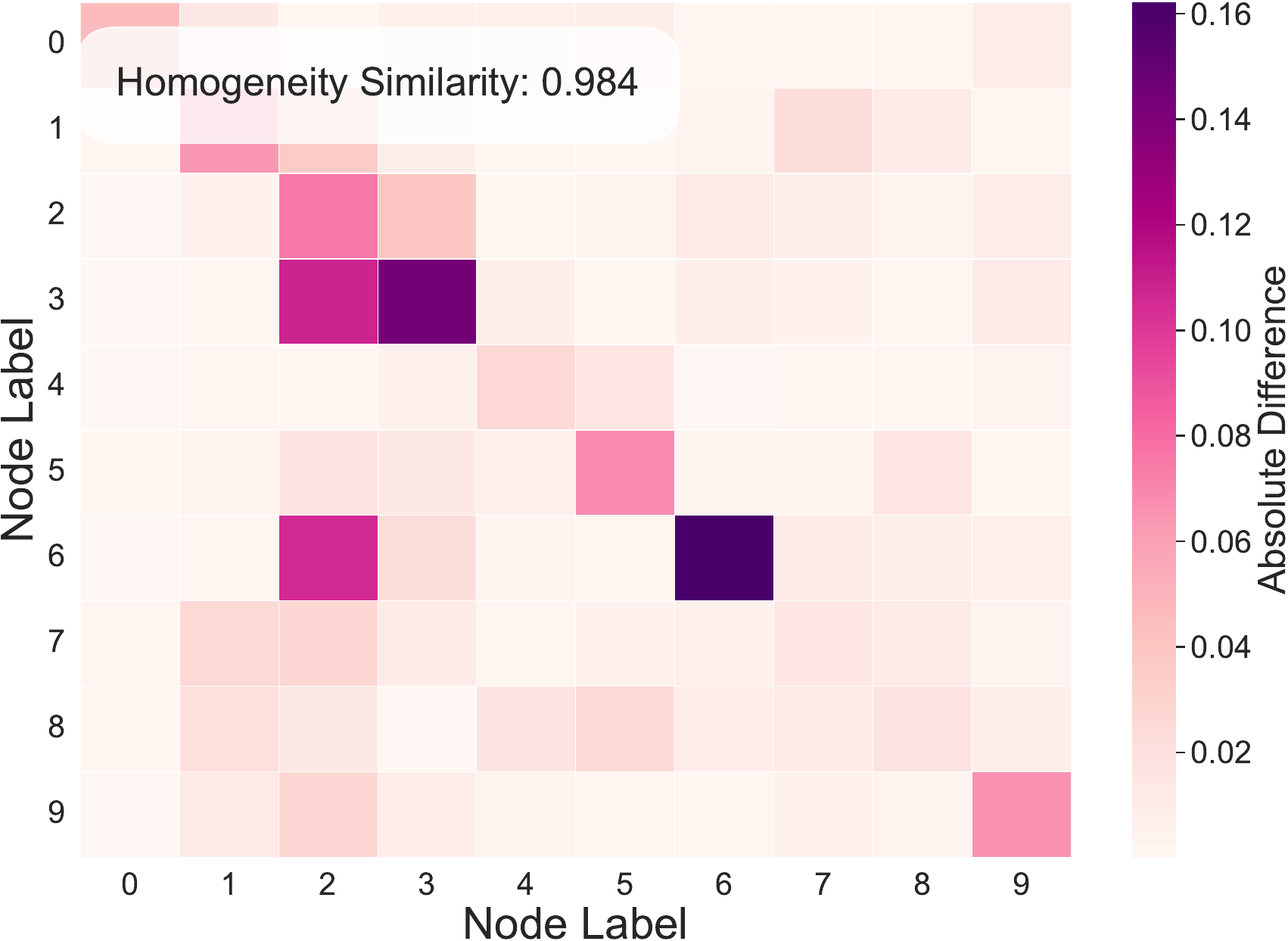}
    \caption{Label homogeneity}
    \label{fig:Synthesis_Wikics_label_homogeneity}
    \end{subfigure}%

  \vspace{-4pt}
  \caption{Graph feature analysis on Wikics dataset. The top three rows of pictures are the results of the original data-limited dataset, and the bottom three rows are the results after TAG data synthesis using GraphMaster.}
  \label{fig:Graph feature analysis on Wikics dataset}
\end{figure}

\begin{figure}[h]
  \centering
  \begin{subfigure}[b]{0.32\textwidth}
    \centering
    \includegraphics[width=\textwidth,height=3.0cm]{./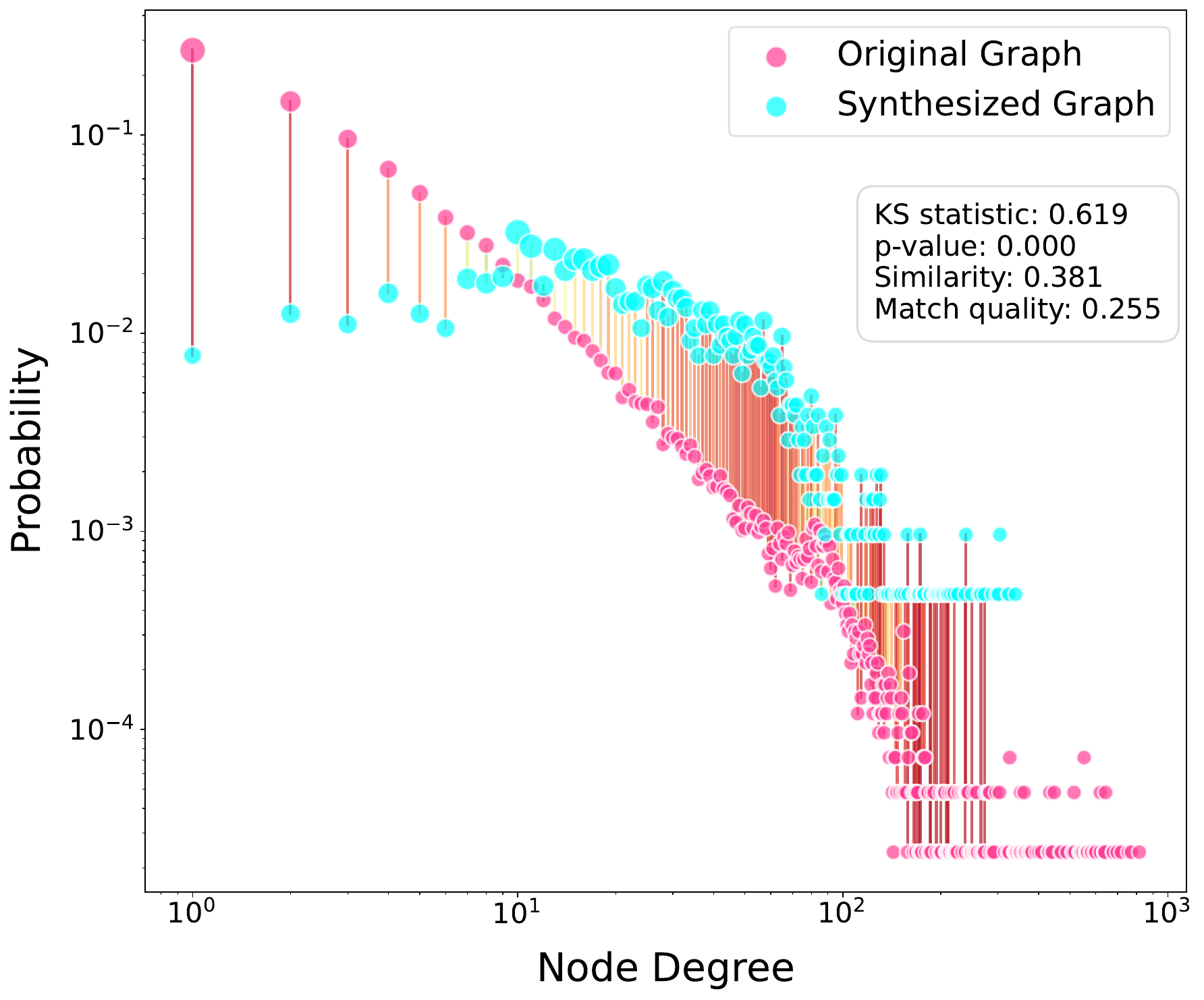}
    \caption{Degree distribution}
    \label{fig:Original_History_degree_distribution} 
  \end{subfigure}%
  \begin{subfigure}[b]{0.32\textwidth}
    \centering
    \includegraphics[width=\textwidth,height=3.0cm]{./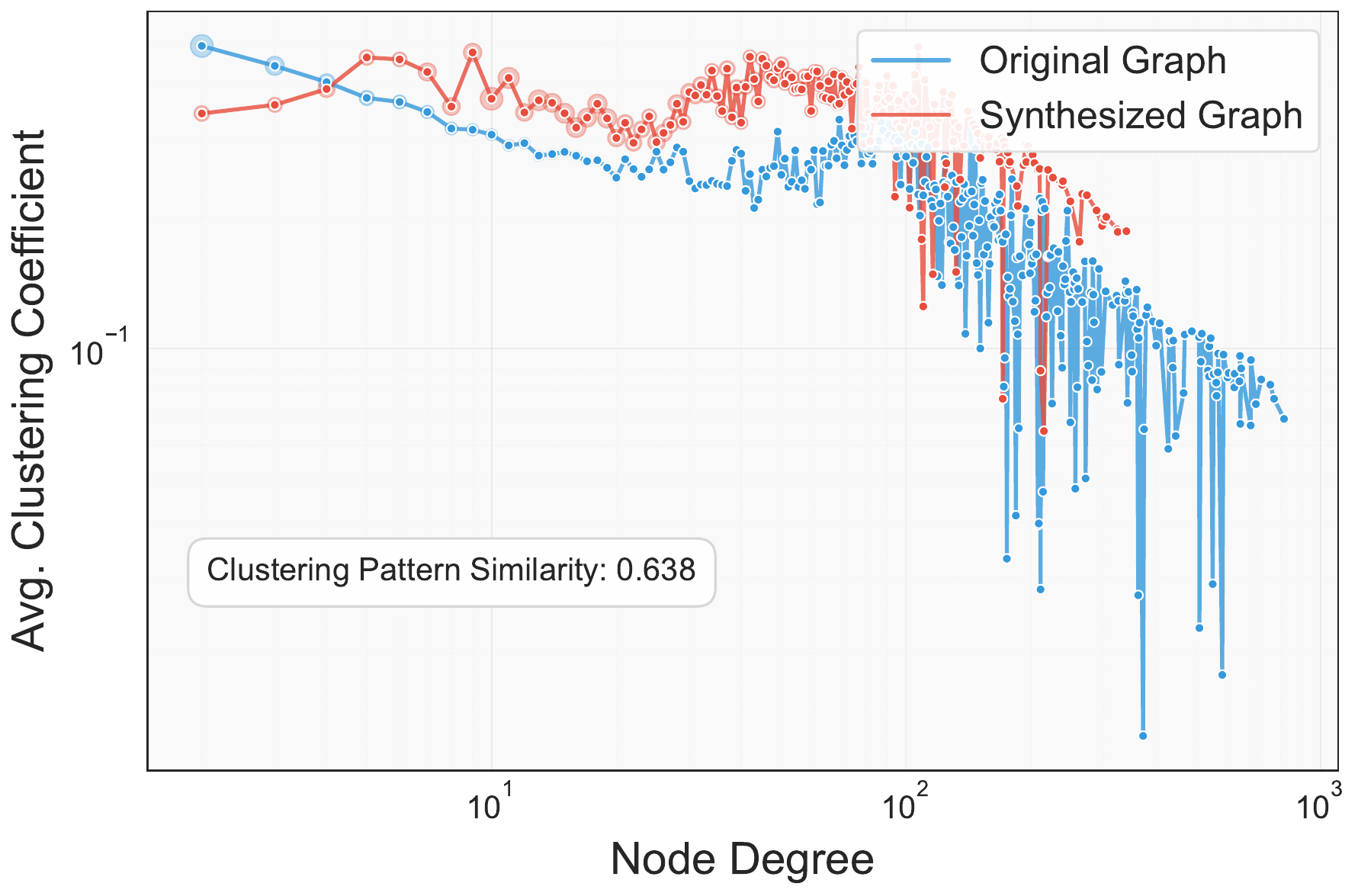}
    \caption{Clustering coeffiecient}
    \label{fig:Original_History_clustering_coefficient}
    \end{subfigure}%
  \begin{subfigure}[b]{0.32\textwidth}
    \centering
    \includegraphics[width=\textwidth,height=3.0cm]{./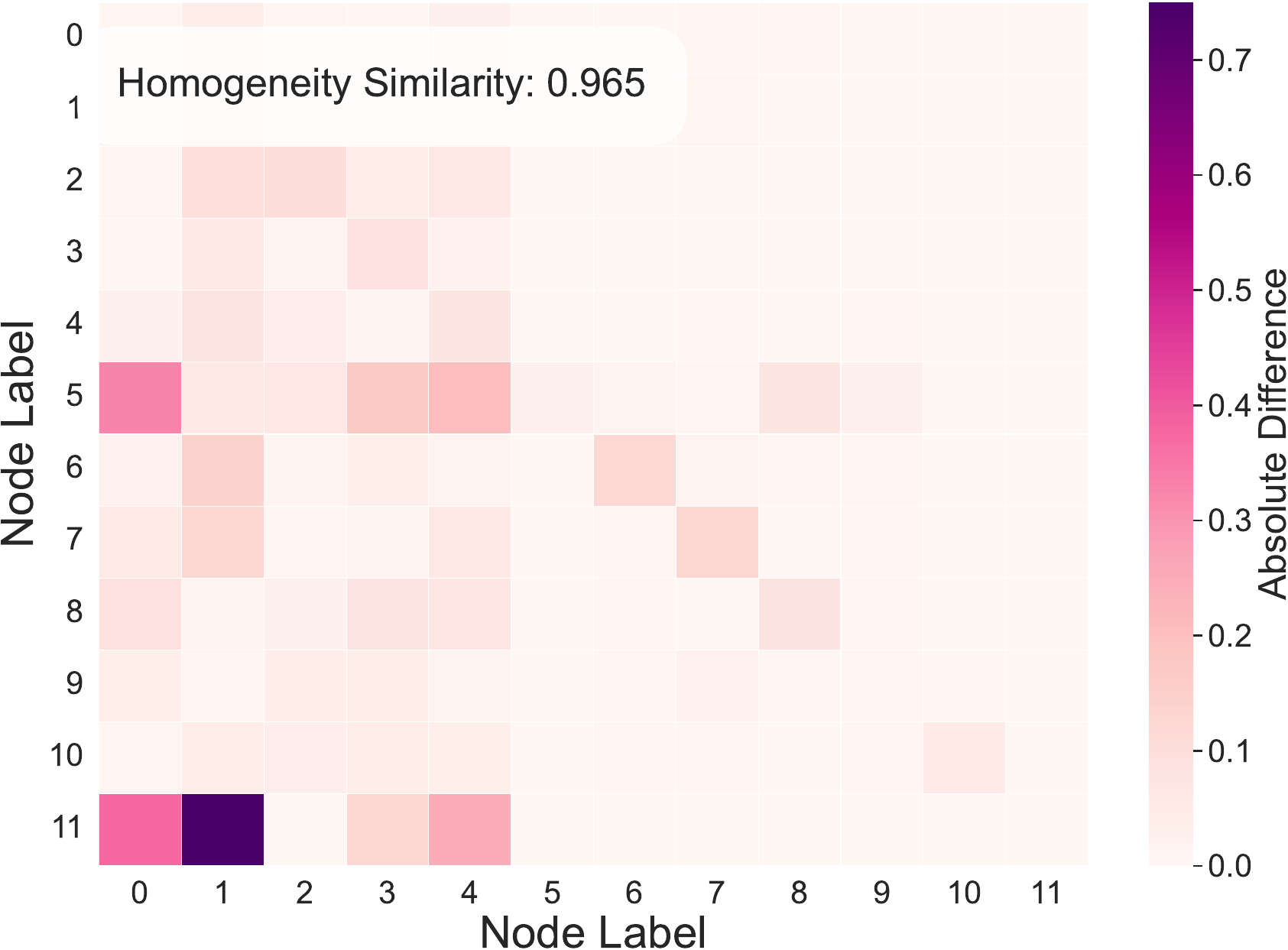}
    \caption{Label homogeneity}
    \label{fig:Original_History_label_homogeneity}
    \end{subfigure}%
    \\
  \begin{subfigure}[b]{0.32\textwidth}
    \centering
    \includegraphics[width=\textwidth,height=3.0cm]{./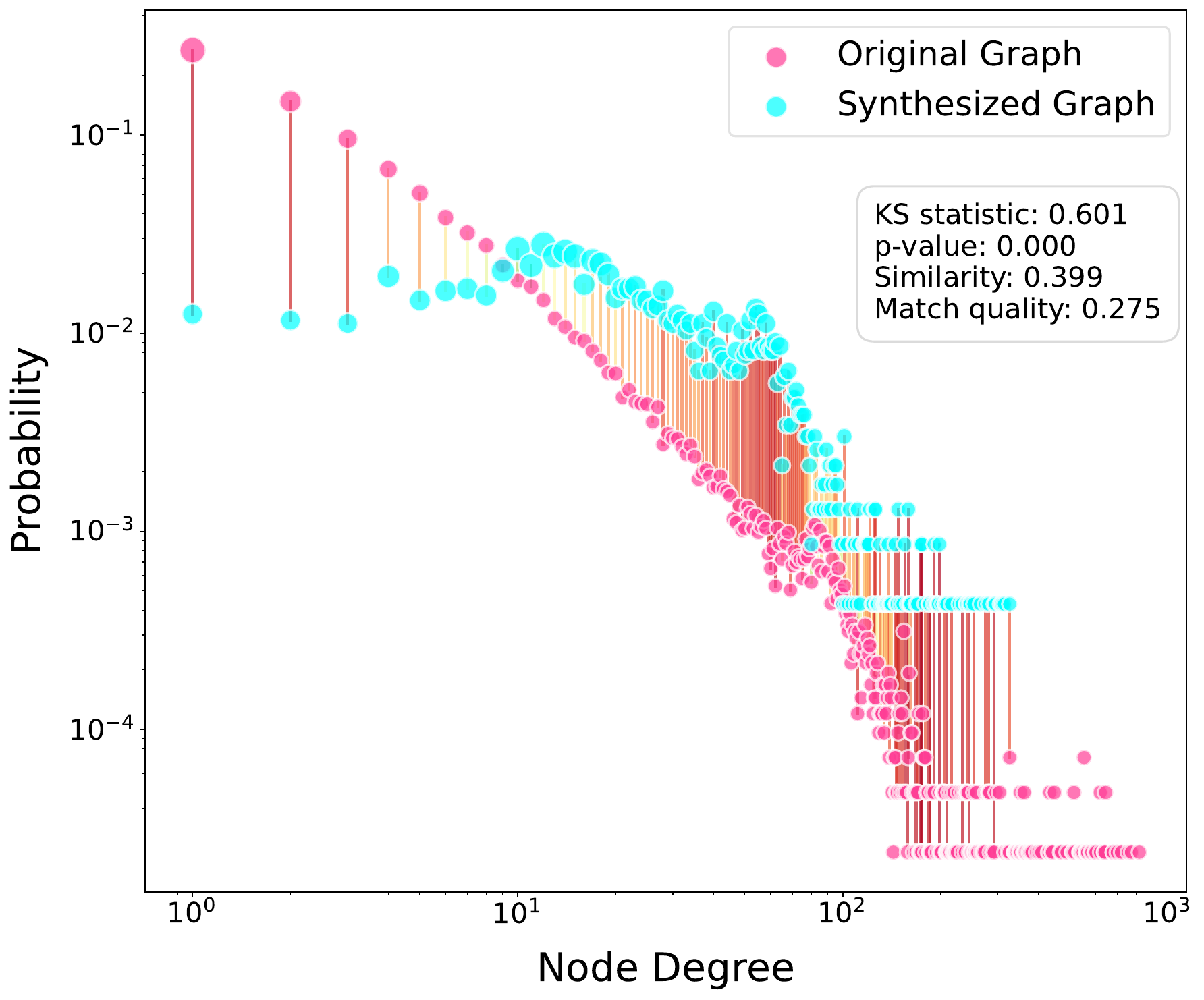}
    \caption{Degree distribution}
    \label{fig:Synthesis_History_degree_distribution} 
  \end{subfigure}%
  \begin{subfigure}[b]{0.32\textwidth}
    \centering
    \includegraphics[width=\textwidth,height=3.0cm]{./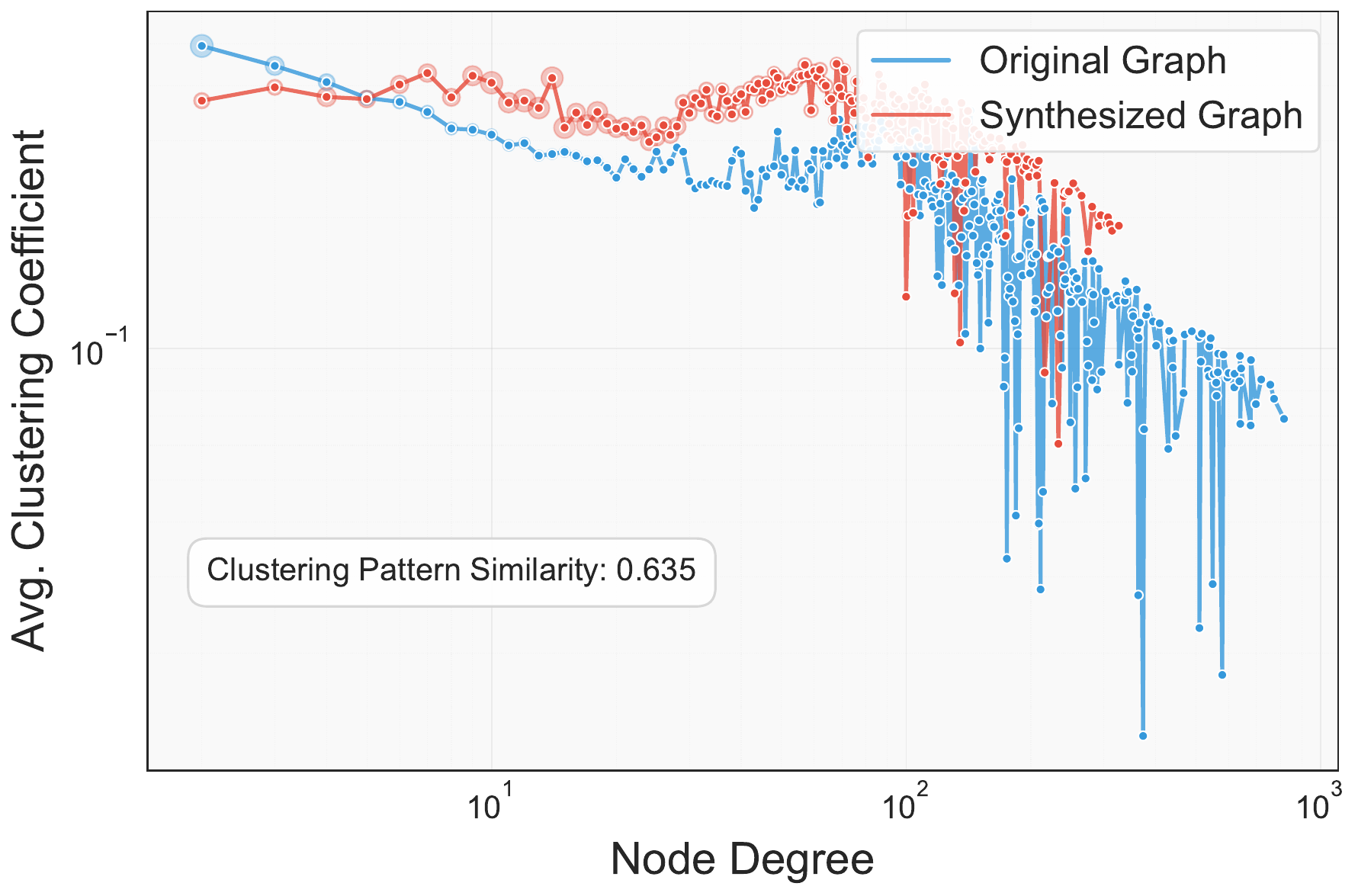}
    \caption{Clustering coeffiecient}
    \label{fig:Synthesis_History_clustering_coefficient}
    \end{subfigure}%
  \begin{subfigure}[b]{0.32\textwidth}
    \centering
    \includegraphics[width=\textwidth,height=3.0cm]{./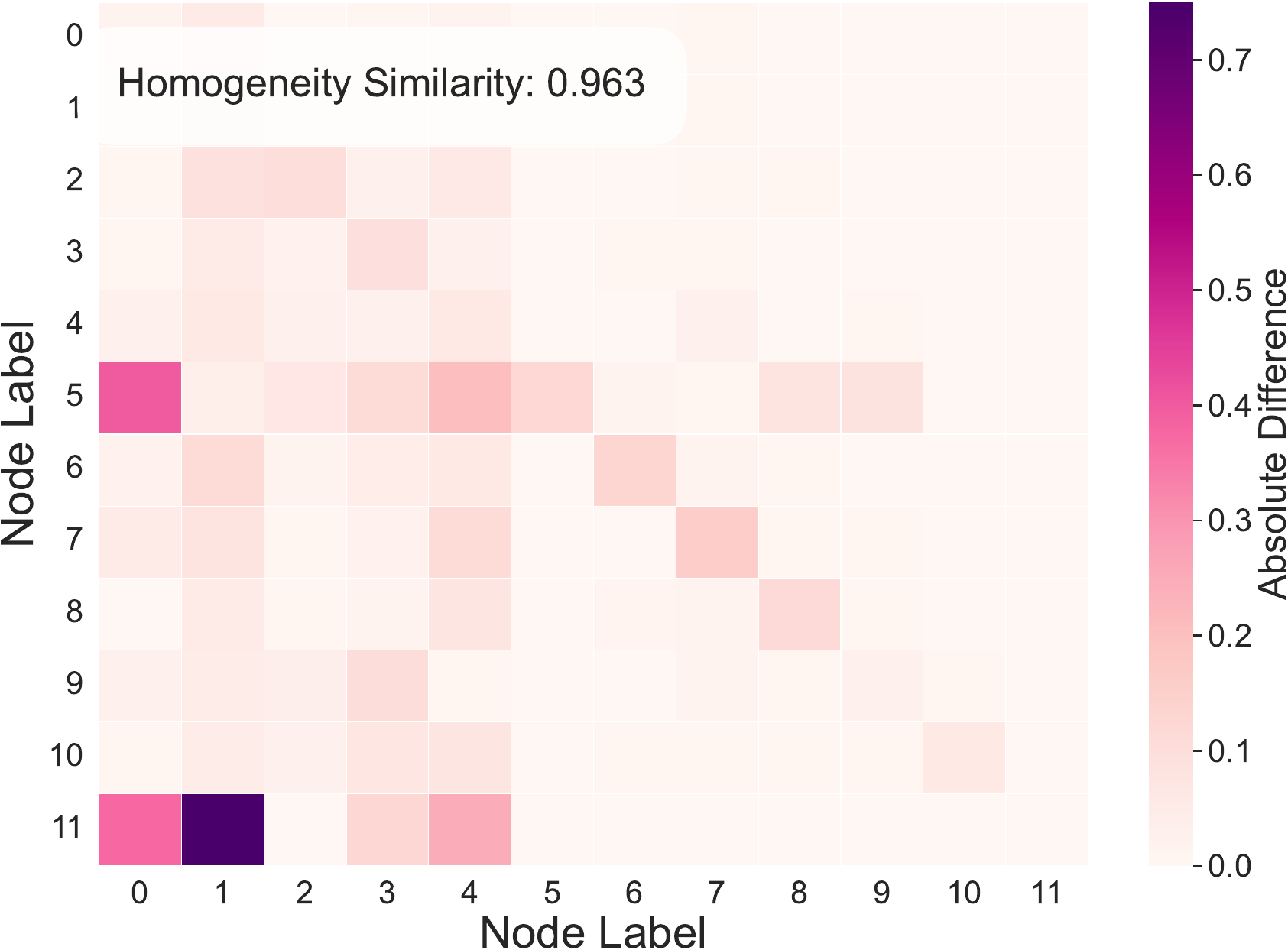}
    \caption{Label homogeneity}
    \label{fig:Synthesis_History_label_homogeneity}
    \end{subfigure}%

  \vspace{-4pt}
  \caption{Graph feature analysis on History dataset. The top three rows of pictures are the results of the original data-limited dataset, and the bottom three rows are the results after TAG data synthesis using GraphMaster.}
  \label{fig:Graph feature analysis on History dataset}
\end{figure}

\begin{figure}[h]
  \centering
  \begin{subfigure}[b]{0.32\textwidth}
    \centering
    \includegraphics[width=\textwidth,height=3.0cm]{./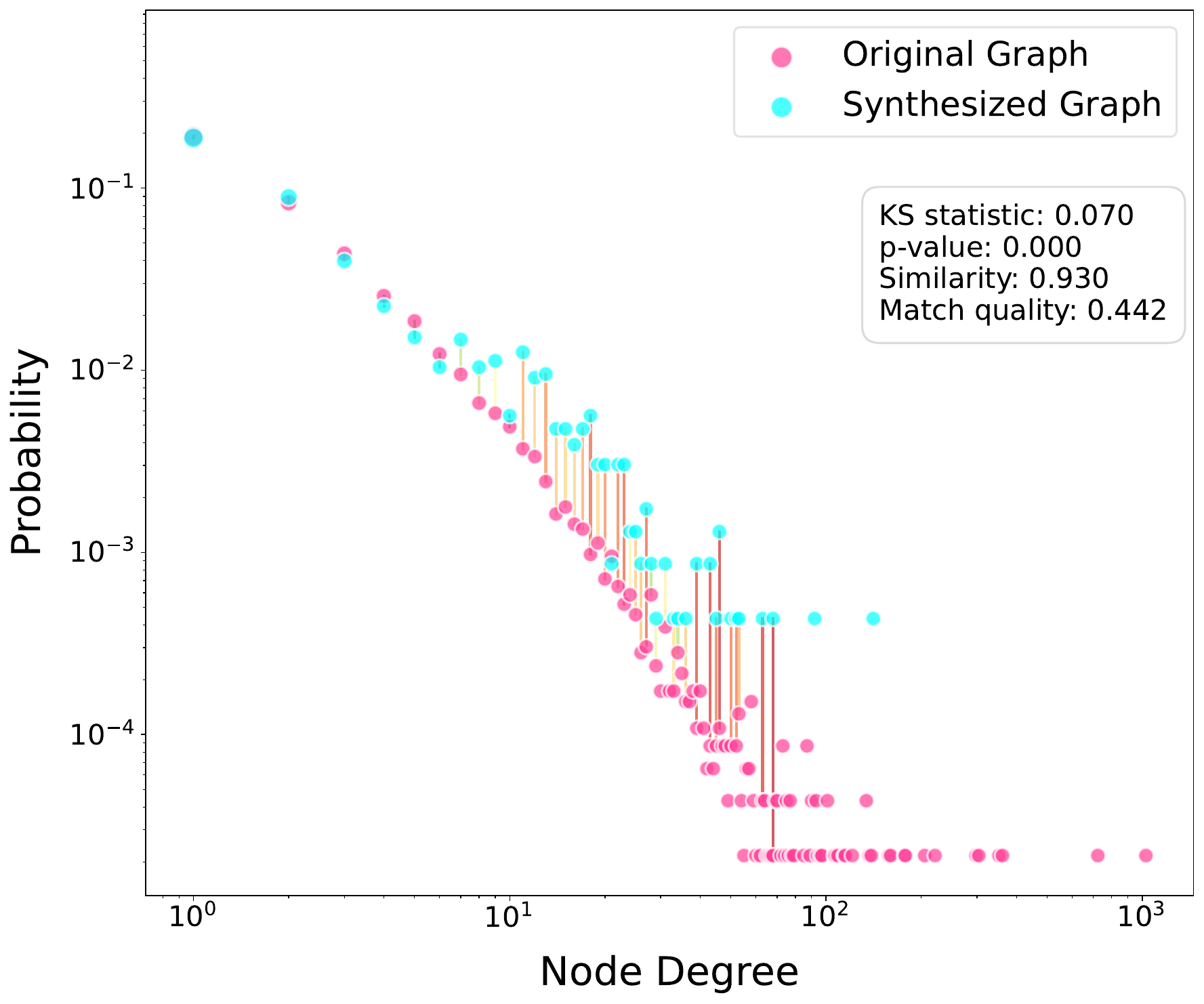}
    \caption{Degree distribution}
    \label{fig:Original_Arxiv2023_degree_distribution} 
  \end{subfigure}%
  \begin{subfigure}[b]{0.32\textwidth}
    \centering
    \includegraphics[width=\textwidth,height=3.0cm]{./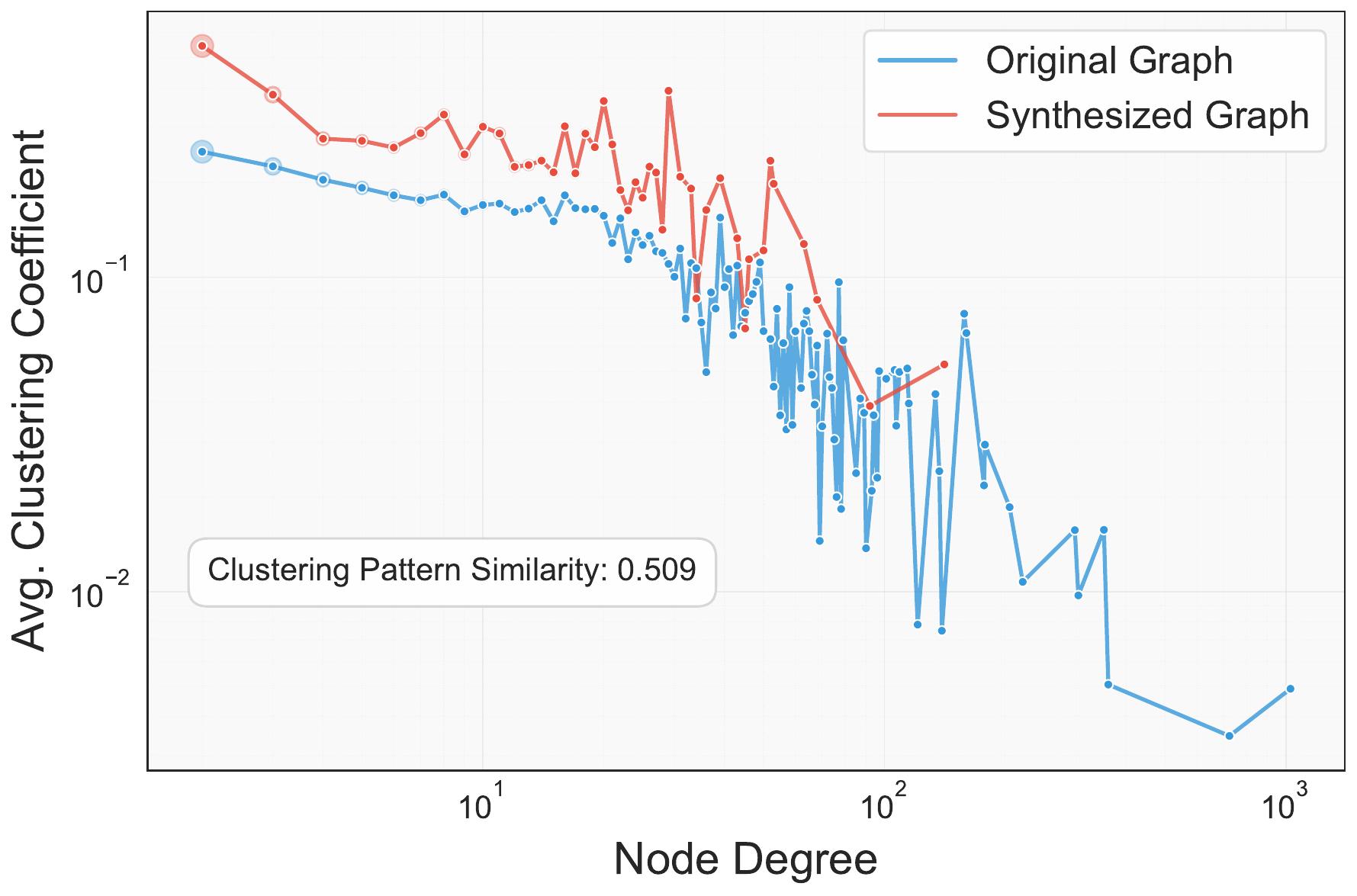}
    \caption{Clustering coeffiecient}
    \label{fig:Original_Arxiv2023_clustering_coefficient}
    \end{subfigure}%
  \begin{subfigure}[b]{0.32\textwidth}
    \centering
    \includegraphics[width=\textwidth,height=3.0cm]{./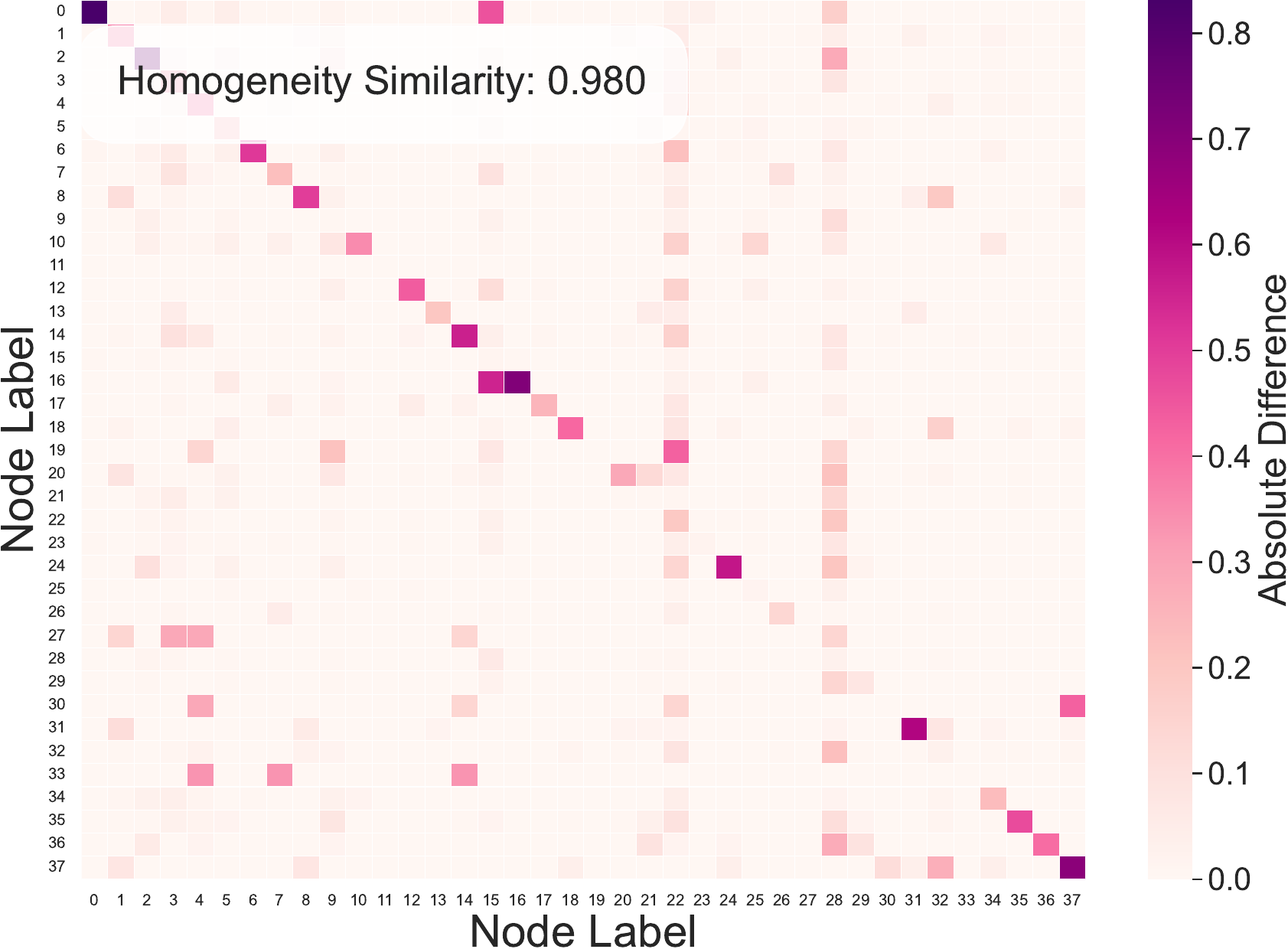}
    \caption{Label homogeneity}
    \label{fig:Original_Arxiv2023_label_homogeneity}
    \end{subfigure}%
    \\
  \begin{subfigure}[b]{0.32\textwidth}
    \centering
    \includegraphics[width=\textwidth,height=3.0cm]{./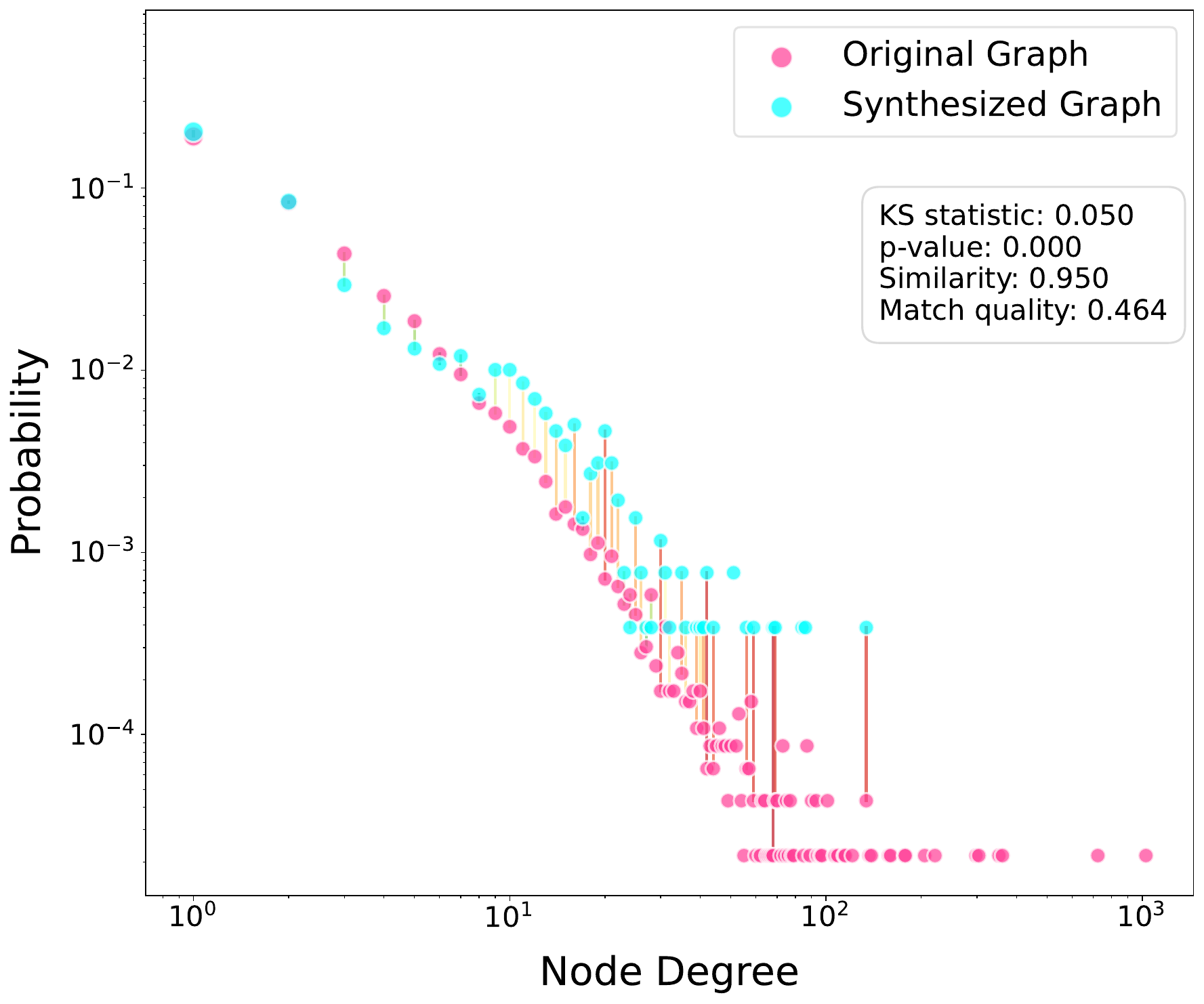}
    \caption{Degree distribution}
    \label{fig:Synthesis_Arxiv2023_degree_distribution} 
  \end{subfigure}%
  \begin{subfigure}[b]{0.32\textwidth}
    \centering
    \includegraphics[width=\textwidth,height=3.0cm]{./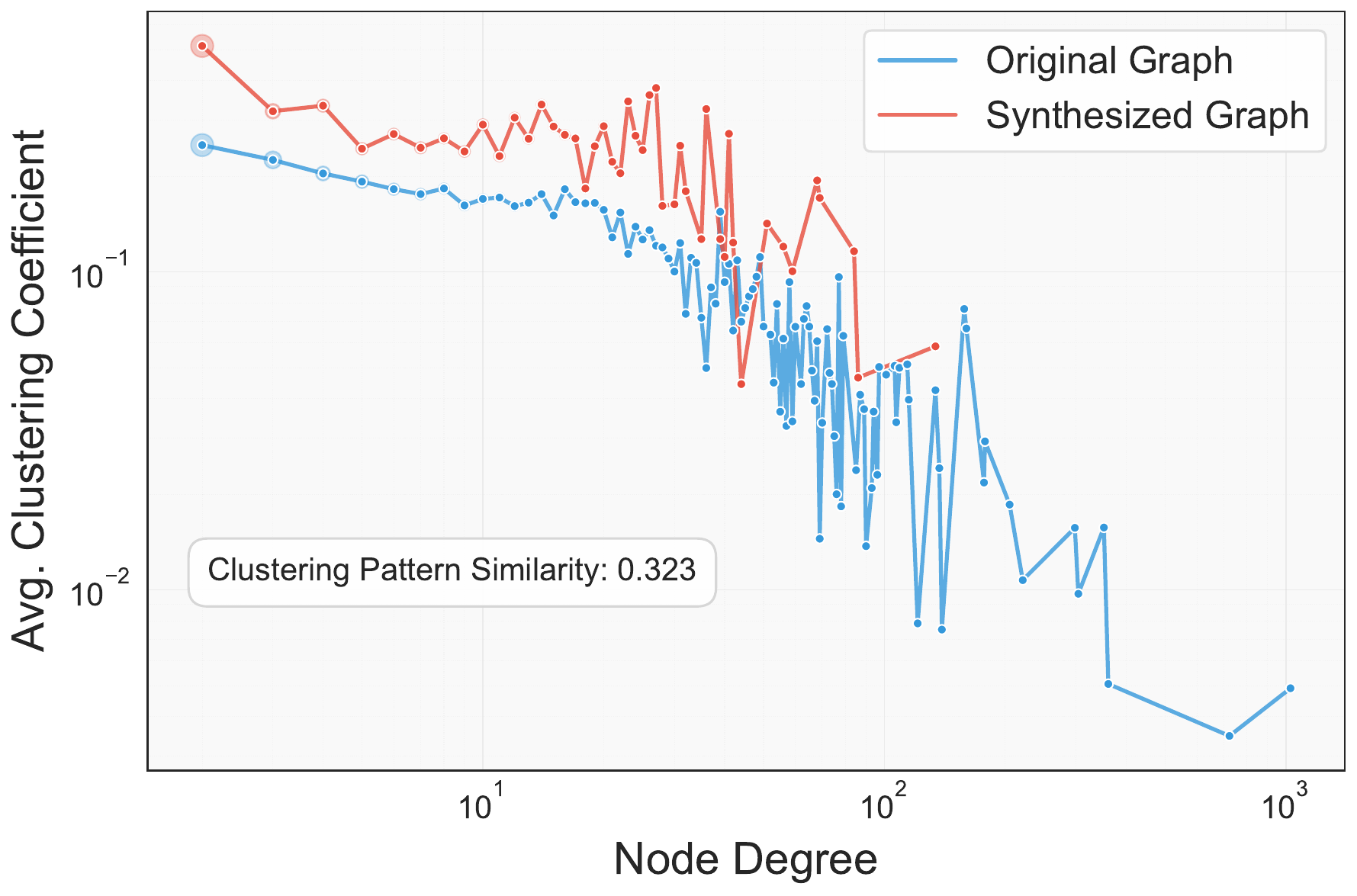}
    \caption{Clustering coeffiecient}
    \label{fig:Synthesis_Arxiv2023_clustering_coefficient}
    \end{subfigure}%
  \begin{subfigure}[b]{0.32\textwidth}
    \centering
    \includegraphics[width=\textwidth,height=3.0cm]{./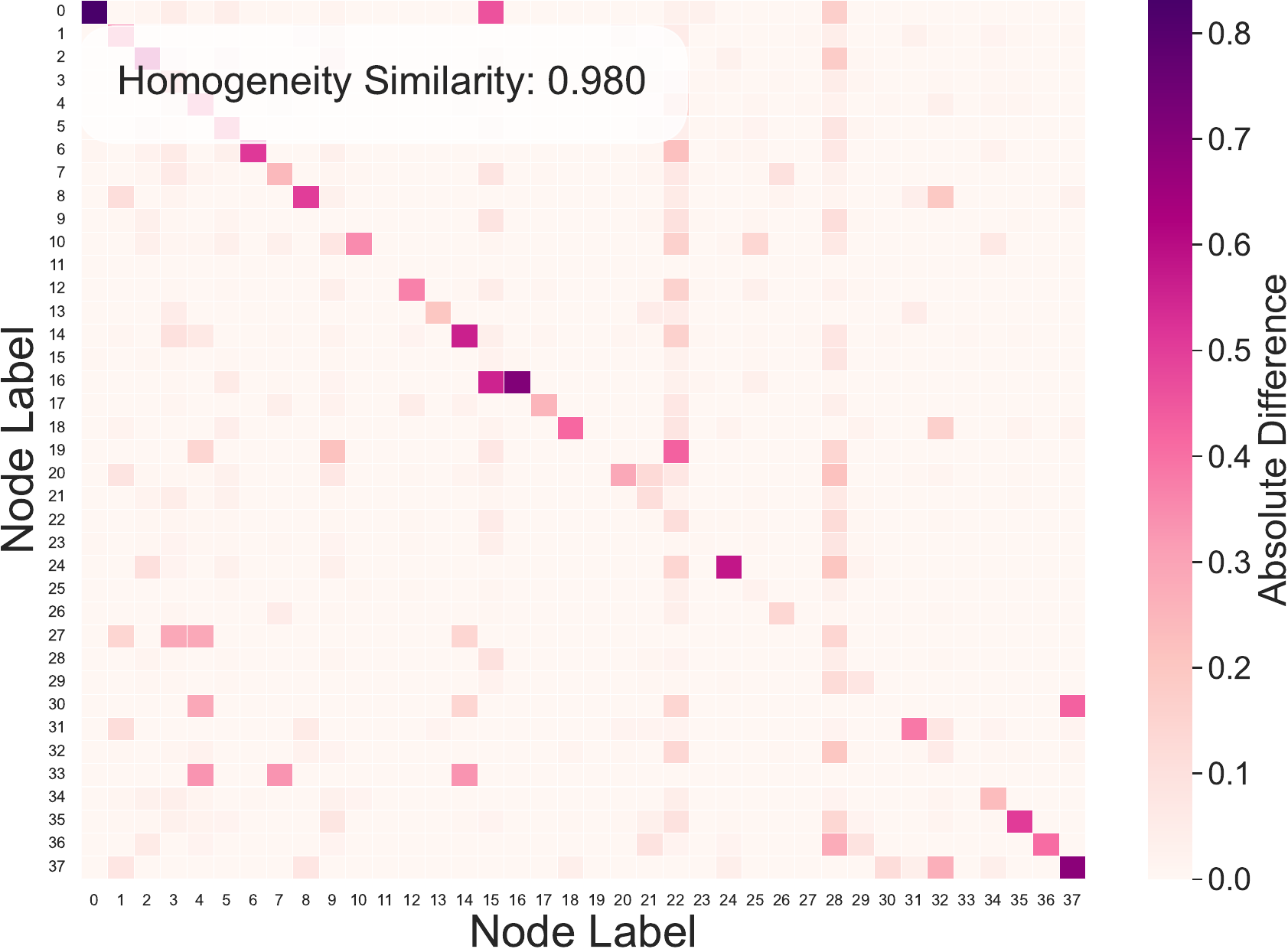}
    \caption{Label homogeneity}
    \label{fig:Synthesis_Arxiv2023_label_homogeneity}
    \end{subfigure}%

  \vspace{-4pt}
  \caption{Graph feature analysis on Arxiv2023 dataset. The top three rows of pictures are the results of the original data-limited dataset, and the bottom three rows are the results after TAG data synthesis using GraphMaster.}
  \label{fig:Graph feature analysis on Arxiv2023 dataset}
\end{figure}

\begin{figure}[h]
  \centering
  \begin{subfigure}[b]{0.32\textwidth}
    \centering
    \includegraphics[width=\textwidth,height=3.0cm]{./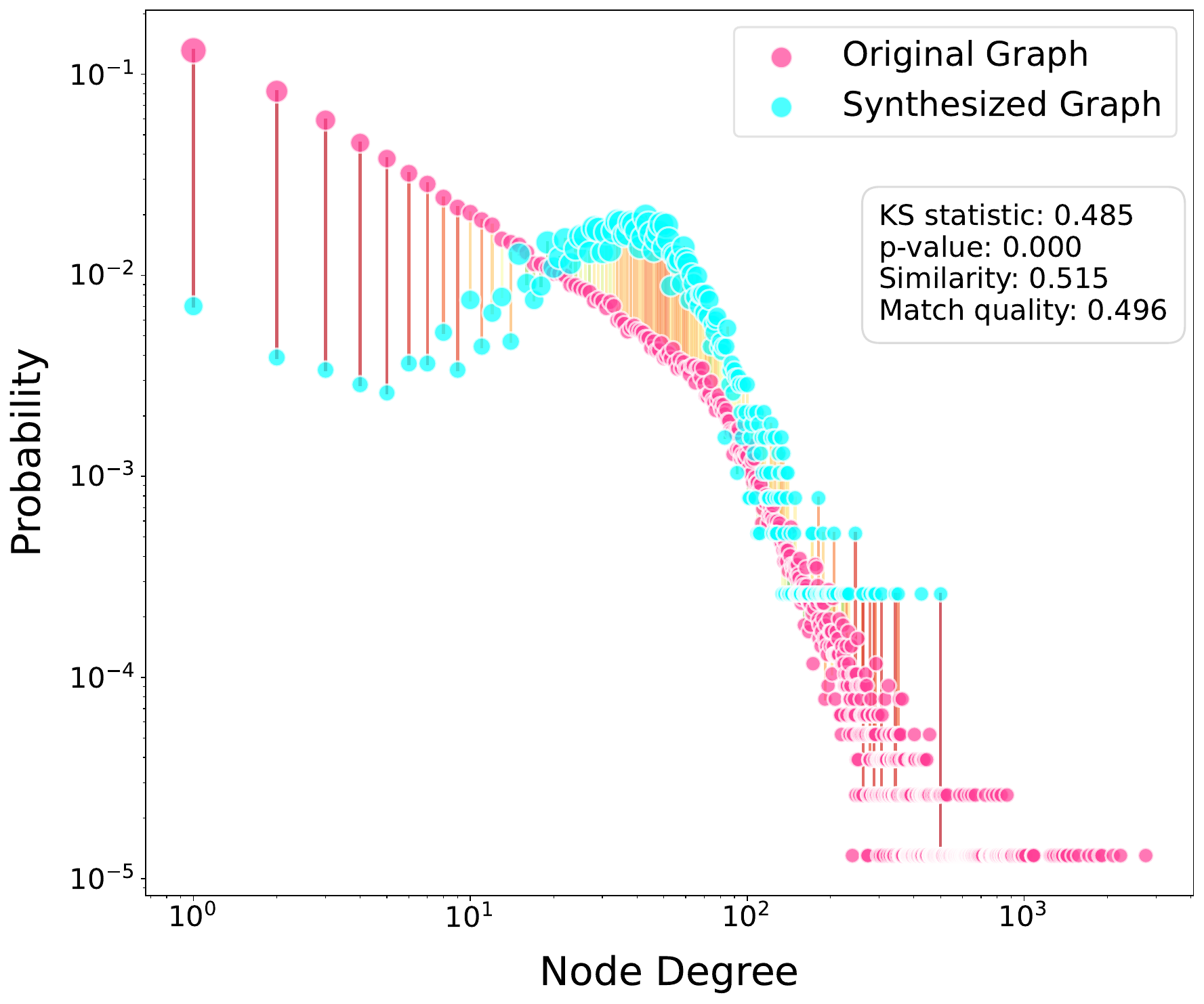}
    \caption{Degree distribution}
    \label{fig:Original_Children_degree_distribution} 
  \end{subfigure}%
  \begin{subfigure}[b]{0.32\textwidth}
    \centering
    \includegraphics[width=\textwidth,height=3.0cm]{./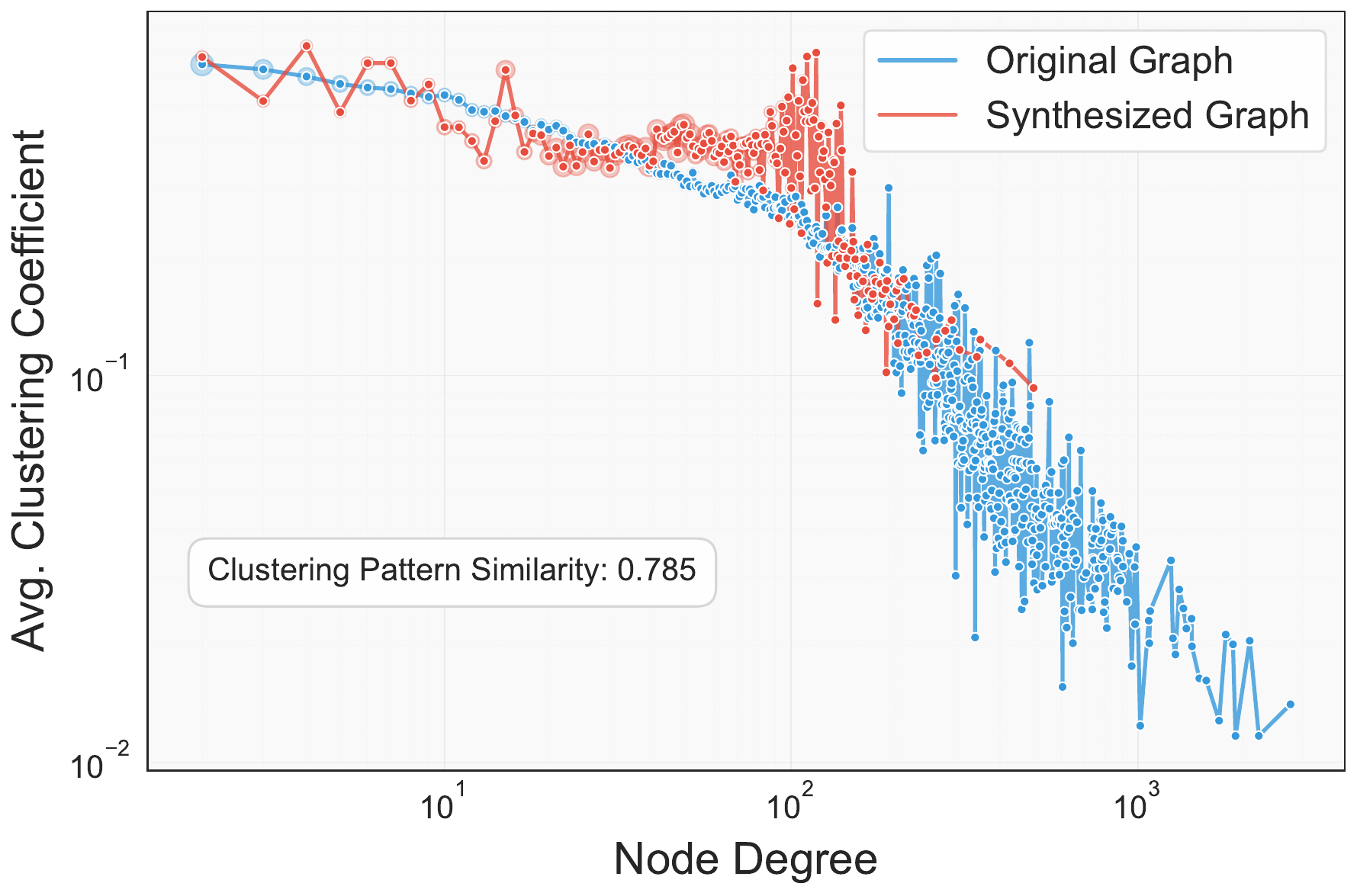}
    \caption{Clustering coeffiecient}
    \label{fig:Original_Children_clustering_coefficient}
    \end{subfigure}%
  \begin{subfigure}[b]{0.32\textwidth}
    \centering
    \includegraphics[width=\textwidth,height=3.0cm]{./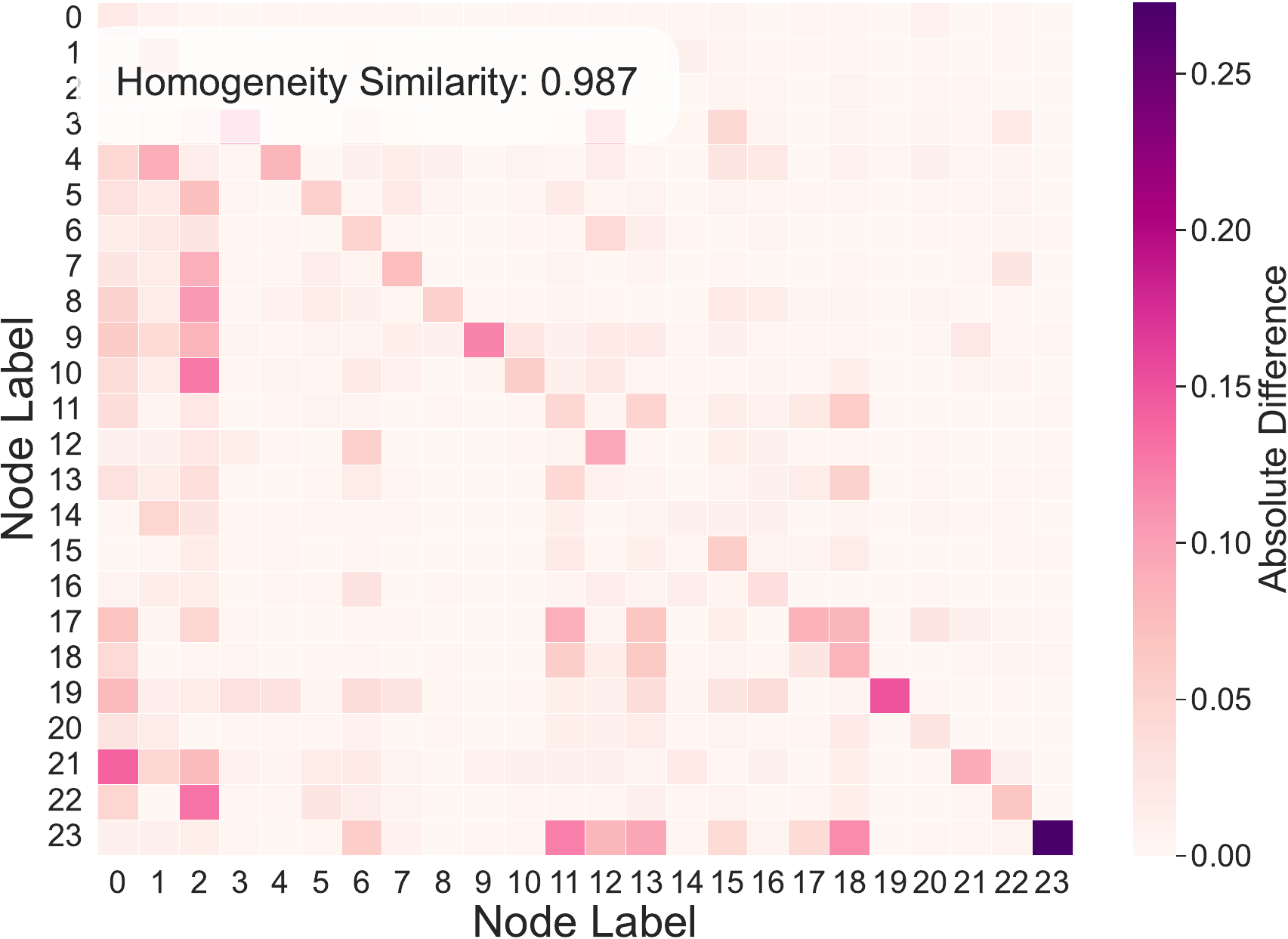}
    \caption{Label homogeneity}
    \label{fig:Original_Children_label_homogeneity}
    \end{subfigure}%
    \\
  \begin{subfigure}[b]{0.32\textwidth}
    \centering
    \includegraphics[width=\textwidth,height=3.0cm]{./figures/features/Synthesis_Children_degree_distribution.pdf}
    \caption{Degree distribution}
    \label{fig:Synthesis_Children_degree_distribution} 
  \end{subfigure}%
  \begin{subfigure}[b]{0.32\textwidth}
    \centering
    \includegraphics[width=\textwidth,height=3.0cm]{./figures/features/Synthesis_Children_clustering_coefficient.pdf}
    \caption{Clustering coeffiecient}
    \label{fig:Synthesis_Children_clustering_coefficient}
    \end{subfigure}%
  \begin{subfigure}[b]{0.32\textwidth}
    \centering
    \includegraphics[width=\textwidth,height=3.0cm]{./figures/features/Synthesis_Children_label_homogeneity.pdf}
    \caption{Label homogeneity}
    \label{fig:Synthesis_Children_label_homogeneity}
    \end{subfigure}%

  \vspace{-4pt}
  \caption{Graph feature analysis on Children dataset. The top three rows of pictures are the results of the original data-limited dataset, and the bottom three rows are the results after TAG data synthesis using GraphMaster.}
  \label{fig:Graph feature analysis on Children dataset}
\end{figure}

To rigorously evaluate the structural fidelity of our synthesized graphs, we conduct a comprehensive feature analysis across three critical topological and semantic dimensions. Figure~\ref{fig:Graph feature analysis on Cora dataset}, Figure~\ref{fig:Graph feature analysis on Citeseer dataset}, Figure~\ref{fig:Graph feature analysis on Wikics dataset}, Figure~\ref{fig:Graph feature analysis on History dataset}, Figure~\ref{fig:Graph feature analysis on Arxiv2023 dataset} and Figure~\ref{fig:Graph feature analysis on Children dataset} present comparative analyses of the original graphs versus those synthesized by GraphMaster on Cora, Citeseer, Wikics, History, Arxiv2023, and Children datasets, respectively.

Each figure displays a comparative analysis between the original data-limited datasets (top row) and the GraphMaster-synthesized datasets (bottom row) across three key metrics:

\begin{itemize}
\item \textbf{Degree Distribution (left column):} Characterizes the probability distribution of node connectivity patterns. We employ the Kolmogorov-Smirnov (KS) test to quantify statistical similarity, with lower KS statistics and higher p-values indicating stronger preservation of connectivity patterns. Our results demonstrate that GraphMaster consistently improves degree distribution similarity across all datasets, with Cora showing particularly strong results (KS statistic of 0.022, p-value=0.709).
\item \textbf{Clustering Coefficient vs. Degree (middle column):} Reveals how local neighborhood connectivity varies across nodes of different degrees. We observe that GraphMaster substantially improves clustering pattern similarity in most datasets, with notable improvements in Wikics (from 0.728 to 0.777) and Children (from 0.785 to 0.835). This indicates that our synthesis approach effectively captures the relationship between node importance and community formation.

\item \textbf{Label Homogeneity (right column):} Visualizes the connection probability between different node classes, with lighter heatmap colors indicating smaller differences between original and synthesized graphs. GraphMaster achieves remarkably high label homogeneity similarity scores (>0.96 across all datasets), demonstrating its ability to preserve label-to-label connectivity patterns.
\end{itemize}

Notably, our approach shows significant structural improvements on citation networks (Cora, Citeseer, Arxiv2023) where semantic relationships strongly influence topology. For instance, in Arxiv2023, GraphMaster improves degree distribution similarity from 0.930 to 0.950 while maintaining consistent label homogeneity (0.980). On larger and more complex datasets like Children, GraphMaster effectively preserves clustering coefficients (similarity improvement from 0.785 to 0.835) while maintaining degree distribution and label homogeneity.

These results collectively demonstrate that GraphMaster not only enhances semantic richness but also successfully preserves—and in many cases improves—the critical structural characteristics of the original graphs. This structural fidelity is essential for ensuring that downstream GNN applications trained on synthesized data generalize effectively to real-world scenarios.

\section{Theoretical Analysis of Interpretable Experiments}
\label{Theoretical Analysis of Interpretable Experiments}

\subsection{Human-Centered Interpretability Evaluation}

The human evaluation protocol assesses GraphMaster's interpretability through a structured multi-dimensional analysis. Each of the $R=50$ expert reviewers evaluates $N=200$ synthesis instances across three critical dimensions:

\begin{itemize}
    \item \textbf{Process Transparency}: Measures how clearly the synthesis workflow can be understood, from initial graph analysis to final node generation.
    \item \textbf{Decision Justification}: Evaluates whether the model's choices (e.g., which nodes to sample, what attributes to generate) have clear rationales.
    \item \textbf{Outcome Predictability}: Assesses whether the results of the synthesis process logically follow from the inputs and intermediate steps.
\end{itemize}

For each dimension $d$, the reviewer assigns a score $t_{r,i,d} \in [0,1]$. The composite score for instance $i$ by reviewer $r$ is calculated as:

\begin{equation}
t_{r,i} = \frac{1}{3}\sum_{d=1}^{3} t_{r,i,d}
\end{equation}

The statistical significance of the obtained Traceability Score is assessed through a bootstrap confidence interval:

\begin{equation}
CI(T_{\text{score}}) = \left[T_{\text{score}} - z_{\alpha/2}\sqrt{\frac{\sigma^2_T}{R \cdot N}}, T_{\text{score}} + z_{\alpha/2}\sqrt{\frac{\sigma^2_T}{R \cdot N}}\right]
\end{equation}

where $\sigma^2_T$ is the variance of individual ratings and $z_{\alpha/2}$ is the critical value for the desired confidence level.

\subsection{Grassmannian Analysis of Semantic Consistency}

\subsubsection{Theoretical Foundation}

\begin{definition}[Grassmann Manifold]
The Grassmann manifold $\mathcal{G}(p,d)$ is the set of all $p$-dimensional linear subspaces of $\mathbb{R}^d$. Each point on $\mathcal{G}(1,d)$ can be represented by a unit vector $\mathbf{u} \in \mathbb{S}^{d-1}$ (up to sign).
\end{definition}

\begin{theorem}[Principal Semantic Direction]
Given a set of semantically related unit-normalized text embeddings $\{\mathbf{x}_1, \mathbf{x}_2, \dots, \mathbf{x}_K\} \subset \mathbb{S}^{d-1}$, there exists an optimal direction $\mathbf{u}^* \in \mathbb{S}^{d-1}$ that minimizes the sum of squared geodesic distances on the Grassmann manifold:

\begin{equation}
\mathbf{u}^* = \underset{\mathbf{u} \in \mathbb{S}^{d-1}}{\arg\min} \; \sum_{j=1}^{K} \arccos^2(|\mathbf{u}^T\mathbf{x}_j|)
\end{equation}
\end{theorem}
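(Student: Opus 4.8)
The plan is to prove this as a pure existence statement via a compactness argument, treating the objective as a continuous function on a compact domain; no variational or eigenvalue machinery is needed. First I would set $F(\mathbf{u}) = \sum_{j=1}^{K} \arccos^2(|\mathbf{u}^T\mathbf{x}_j|)$ and check it is well-defined on all of $\mathbb{S}^{d-1}$: by Cauchy--Schwarz, $|\mathbf{u}^T\mathbf{x}_j| \le \|\mathbf{u}\|\,\|\mathbf{x}_j\| = 1$, so every argument of $\arccos$ lies in $[0,1]\subset[-1,1]$, the interval on which $\arccos$ is a genuine continuous function. Second, I would verify $F \in C(\mathbb{S}^{d-1})$ by writing it as a composition of continuous maps: $\mathbf{u}\mapsto \mathbf{u}^T\mathbf{x}_j$ is linear (hence continuous) in finite dimensions, $t\mapsto|t|$ is continuous, $\arccos$ is continuous on $[-1,1]$, squaring is continuous, and a finite sum of continuous functions is continuous.

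Third, since $\mathbb{S}^{d-1}$ is closed and bounded in $\mathbb{R}^d$, it is compact, so the extreme value theorem gives a minimizer $\mathbf{u}^* \in \mathbb{S}^{d-1}$ with $F(\mathbf{u}^*) = \min_{\mathbf{u}\in\mathbb{S}^{d-1}} F(\mathbf{u})$, which is exactly the asserted optimal direction. I would also note that $F(\mathbf{u}) = F(-\mathbf{u})$, so $F$ descends to the quotient $\mathbb{S}^{d-1}/\{\pm 1\} = \mathcal{G}(1,d)$, which is itself compact; thus the statement is equally a claim about the Grassmann manifold and is consistent with the line-as-unit-vector representation in the preceding definition.

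The only delicate-looking point is the behavior of $\arccos$ at the endpoint $1$, where its derivative blows up. This would be a real obstacle if the theorem demanded a first-order stationarity characterization of $\mathbf{u}^*$, but for the existence claim only continuity at the endpoint matters, and $\arccos$ is continuous there, so the argument goes through unobstructed — I expect this to be the main thing a careful reader checks rather than a genuine difficulty. I would close with two remarks: (i) the minimizer is in general not unique (highly symmetric configurations of the $\mathbf{x}_j$ can yield a continuum of minimizers), so the theorem should be read as existence only; and (ii) unlike the chordal surrogate $\sum_j\bigl(1 - (\mathbf{u}^T\mathbf{x}_j)^2\bigr)$, whose minimizer is the leading eigenvector of $\sum_j \mathbf{x}_j\mathbf{x}_j^T$, the geodesic objective here admits no closed form, which is precisely why the proof must proceed through compactness rather than linear algebra.
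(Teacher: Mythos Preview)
Your proposal is correct and its core is the same as the paper's: both invoke continuity of $F$ on the compact set $\mathbb{S}^{d-1}$ and apply the extreme value theorem to obtain the minimizer. Your treatment is in fact more careful than the paper's on the analytic side---you explicitly verify that $|\mathbf{u}^T\mathbf{x}_j|\in[0,1]$ via Cauchy--Schwarz, decompose the continuity argument, and flag the endpoint behavior of $\arccos$---whereas the paper asserts continuity in one line.

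The only substantive difference is that the paper's proof goes on to a second part you omit: it sets up a Lagrange-multiplier system for the constrained critical points and uses this to argue that $\mathbf{u}^*$ is ``semantically meaningful,'' decomposing each $\mathbf{x}_j$ into its $\mathbf{u}^*$-component and orthogonal residual. This additional material is not required by the theorem as stated (which is pure existence), and your remark about the derivative singularity at $|\mathbf{u}^T\mathbf{x}_j|=1$ is actually a legitimate concern for that part of the paper's argument, which the paper does not address. Your closing remark (ii) on the chordal surrogate and the eigenvector solution anticipates the paper's separate Theorem~2 (Computational Solution), so you have already identified the connection the paper develops next.
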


\begin{proposition}[Semantic Coherence Metric]
For a synthesized node with embedding $\mathbf{x}_s$, its semantic coherence with respect to background knowledge is quantified by:

\begin{equation}
S(\mathbf{x}_s) = 1 - \frac{2}{\pi}\arccos(|\mathbf{x}_s^T\mathbf{u}^*|)
\end{equation}

where $S(\mathbf{x}_s) \in [0,1]$ with higher values indicating greater semantic coherence.
\end{proposition}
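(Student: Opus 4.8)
The plan is to verify, in order, three things: that $S(\mathbf{x}_s)$ is well defined as a function on the Grassmann manifold, that it takes values in $[0,1]$ with both endpoints attained, and that it is an order-reversing reparametrization of the geodesic distance to the principal semantic direction $\mathbf{u}^*$ — which is precisely what makes it a legitimate coherence score rather than an ad hoc number. All three parts follow from elementary properties of $\arccos$ together with one geometric identification.

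First I would note that a point of $\mathcal{G}(1,d)$ is a line, represented by a unit vector only up to sign, and that $|\mathbf{x}_s^T\mathbf{u}^*|$ depends only on the lines $\mathrm{span}(\mathbf{x}_s)$ and $\mathrm{span}(\mathbf{u}^*)$, not on the chosen representatives; hence $S$ descends to a well-defined function of the synthesized node once $\mathbf{u}^*$ is fixed by the Principal Semantic Direction theorem. For the range, since $\mathbf{x}_s,\mathbf{u}^* \in \mathbb{S}^{d-1}$, Cauchy–Schwarz gives $|\mathbf{x}_s^T\mathbf{u}^*| \in [0,1]$, so $\arccos(|\mathbf{x}_s^T\mathbf{u}^*|) \in [0,\pi/2]$ and therefore $S(\mathbf{x}_s) = 1 - \frac{2}{\pi}\arccos(|\mathbf{x}_s^T\mathbf{u}^*|) \in [0,1]$. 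The extreme values are attained and carry the intended meaning: $S(\mathbf{x}_s)=1$ iff $|\mathbf{x}_s^T\mathbf{u}^*|=1$, i.e. $\mathbf{x}_s = \pm\mathbf{u}^*$ (perfect alignment of semantic directions), while $S(\mathbf{x}_s)=0$ iff $\mathbf{x}_s \perp \mathbf{u}^*$ (maximal semantic divergence).

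Next I would identify $\theta(\mathbf{x}_s) := \arccos(|\mathbf{x}_s^T\mathbf{u}^*|)$ with the principal angle between the one-dimensional subspaces $\mathrm{span}(\mathbf{x}_s)$ and $\mathrm{span}(\mathbf{u}^*)$, which is exactly the geodesic distance $d_{\mathcal{G}}$ induced by the canonical quotient Riemannian metric on $\mathcal{G}(1,d)$ — the same metric implicit in the objective of the Principal Semantic Direction theorem. Consequently $S(\mathbf{x}_s) = 1 - \frac{2}{\pi}\, d_{\mathcal{G}}(\mathrm{span}(\mathbf{x}_s), \mathbf{u}^*)$ is an affine, strictly order-reversing map of that geodesic distance onto $[0,1]$, since $\arccos$ is strictly decreasing on $[0,1]$; hence a larger $S$ means the synthesized embedding lies intrinsically closer to the distilled semantic direction of the background knowledge, which is the interpretation asserted.

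The one step that needs care — and the only place the argument is not purely elementary — is the claim that $\arccos(|\mathbf{x}^T\mathbf{u}|)$ is the intrinsic distance on $\mathcal{G}(1,d)$ rather than a mere similarity heuristic. Here I would use $\mathcal{G}(1,d) \cong \mathbb{S}^{d-1}/\{\pm 1\}$: great-circle geodesics on the sphere have length $\arccos(\mathbf{x}^T\mathbf{u}) \in [0,\pi]$, and quotienting by the antipodal action identifies $\mathbf{u}$ with $-\mathbf{u}$, which caps the quotient distance at $\pi/2$ and collapses it to $\arccos(|\mathbf{x}^T\mathbf{u}|)$; the normalizing factor $2/\pi$ then rescales the diameter $\pi/2$ of $\mathcal{G}(1,d)$ to $1$. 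Once this geometric identification is in place, every remaining claim of the proposition follows from the monotonicity and boundedness of $\arccos$.
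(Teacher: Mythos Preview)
Your proposal is correct and follows essentially the same route as the paper: identify $\arccos(|\mathbf{x}_s^T\mathbf{u}^*|)$ with the geodesic distance on $\mathcal{G}(1,d)$, normalize by the diameter $\pi/2$ to land in $[0,1]$, and read off the endpoint behavior and monotonicity. If anything, your argument is tighter than the paper's, since you actually justify the geodesic-distance identification via the quotient $\mathbb{S}^{d-1}/\{\pm 1\}$ rather than simply asserting it.
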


\begin{proof}[Proof of Theorem 1 (Principal Semantic Direction)]
Let us first establish that the geodesic distance between two points on $\mathcal{G}(1,d)$ represented by unit vectors $\mathbf{u}$ and $\mathbf{v}$ is given by $d_{\mathcal{G}}(\mathbf{u}, \mathbf{v}) = \arccos(|\mathbf{u}^T\mathbf{v}|)$.

Consider the objective function $f(\mathbf{u}) = \sum_{j=1}^{K} \arccos^2(|\mathbf{u}^T\mathbf{x}_j|)$. We need to show that:
\begin{enumerate}
    \item This function has at least one global minimum on $\mathbb{S}^{d-1}$
    \item This minimum represents a semantically meaningful direction
\end{enumerate}

For the first point, note that $f$ is continuous on the compact set $\mathbb{S}^{d-1}$, so by the extreme value theorem, it attains both a maximum and minimum value. 

For the second point, let us analyze the critical points of $f$ constrained to $\mathbb{S}^{d-1}$. Using the method of Lagrange multipliers, we seek critical points of:

\begin{equation}
\mathcal{L}(\mathbf{u}, \lambda) = \sum_{j=1}^{K} \arccos^2(|\mathbf{u}^T\mathbf{x}_j|) - \lambda(\mathbf{u}^T\mathbf{u} - 1)
\end{equation}

Taking the gradient with respect to $\mathbf{u}$:

\begin{equation}
\nabla_{\mathbf{u}} \mathcal{L} = \sum_{j=1}^{K} -2\arccos(|\mathbf{u}^T\mathbf{x}_j|) \cdot \frac{1}{\sqrt{1-(|\mathbf{u}^T\mathbf{x}_j|)^2}} \cdot \text{sgn}(\mathbf{u}^T\mathbf{x}_j) \cdot \mathbf{x}_j - 2\lambda\mathbf{u} = \mathbf{0}
\end{equation}

Solving this system of equations is equivalent to finding $\mathbf{u}$ that balances the weighted contributions of all $\mathbf{x}_j$ vectors. The solution $\mathbf{u}^*$ represents a direction that minimizes angular deviation from all input embeddings.

To demonstrate semantic meaningfulness, let us decompose any embedding $\mathbf{x}_j$ as:

\begin{equation}
\mathbf{x}_j = (\mathbf{x}_j^T\mathbf{u}^*)\mathbf{u}^* + \mathbf{x}_j^{\perp}
\end{equation}

where $\mathbf{x}_j^{\perp}$ is orthogonal to $\mathbf{u}^*$. The optimization objective minimizes the magnitude of these orthogonal components across all embeddings, effectively capturing their common directional component in the embedding space. Since semantically related concepts tend to cluster directionally in embedding spaces, $\mathbf{u}^*$ represents their central semantic direction.
\end{proof}

\begin{proof}[Proof of Proposition 1 (Semantic Coherence Metric)]
The geodesic distance between $\mathbf{x}_s$ and the principal direction $\mathbf{u}^*$ on $\mathcal{G}(1,d)$ is $d_{\mathcal{G}}(\mathbf{x}_s, \mathbf{u}^*) = \arccos(|\mathbf{x}_s^T\mathbf{u}^*|)$. This distance ranges from 0 (perfect alignment) to $\pi/2$ (orthogonality).

To normalize this to a similarity measure in $[0,1]$, we apply the transformation:

\begin{equation}
S(\mathbf{x}_s) = 1 - \frac{d_{\mathcal{G}}(\mathbf{x}_s, \mathbf{u}^*)}{\pi/2} = 1 - \frac{2}{\pi}\arccos(|\mathbf{x}_s^T\mathbf{u}^*|)
\end{equation}

This yields $S(\mathbf{x}_s) = 1$ when $\mathbf{x}_s$ and $\mathbf{u}^*$ are perfectly aligned (modulo sign), and $S(\mathbf{x}_s) = 0$ when they are orthogonal.

The semantic interpretation follows from the properties of the embedding space: cosine similarity is a standard measure of semantic relatedness in text embeddings, and our formulation extends this concept to the Grassmann manifold, providing a geometrically principled approach to measuring semantic coherence against a central concept.

Moreover, this metric satisfies several desirable properties:
\begin{itemize}
    \item \textbf{Invariance to sign}: $S(\mathbf{x}_s) = S(-\mathbf{x}_s)$, reflecting that oppositely directed vectors represent the same point on $\mathcal{G}(1,d)$
    \item \textbf{Monotonicity}: $S(\mathbf{x}_s)$ increases as $\mathbf{x}_s$ aligns more closely with $\mathbf{u}^*$
    \item \textbf{Bounded range}: $S(\mathbf{x}_s) \in [0,1]$, facilitating interpretation
    \item \textbf{Geometric meaning}: Directly related to the principal angle between subspaces on the Grassmann manifold
\end{itemize}
\end{proof}

\begin{theorem}[Computational Solution]
The principal direction $\mathbf{u}^*$ can be efficiently computed as the principal eigenvector of the matrix:

\begin{equation}
\mathbf{M} = \sum_{j=1}^{K} w_j \mathbf{x}_j\mathbf{x}_j^T
\end{equation}

where weights $w_j$ are iteratively updated based on angular distances.
\end{theorem}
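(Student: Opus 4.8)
The plan is to recognize the geodesic objective of Theorem~1 as an iteratively reweighted least-squares (IRLS) problem and to show that each reweighted subproblem is \emph{exactly} a Rayleigh-quotient maximization, whose solution is a top eigenvector. Write $t_j(\mathbf{u}) = \mathbf{u}^T\mathbf{x}_j$, $\theta_j(\mathbf{u}) = \arccos(|t_j(\mathbf{u})|) = d_{\mathcal{G}}(\mathbf{u},\mathbf{x}_j)$, and $F(\mathbf{u}) = \sum_{j=1}^K \theta_j(\mathbf{u})^2$, so $\mathbf{u}^*$ minimizes $F$ over $\mathbb{S}^{d-1}$ (existence was already obtained in the proof of Theorem~1 via continuity on a compact set). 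The first step is the exact algebraic identity $\theta_j^2 = w_j\,(1-t_j^2)$ with $w_j = w_j(\mathbf{u}) := \theta_j^2/\sin^2\theta_j$, extended continuously by $w_j=1$ when $t_j=\pm 1$; this uses only $1 - t_j^2 = 1-\cos^2\theta_j = \sin^2\theta_j$. Substituting and writing $t_j^2 = \mathbf{u}^T\mathbf{x}_j\mathbf{x}_j^T\mathbf{u}$ gives $F(\mathbf{u}) = \sum_j w_j(\mathbf{u}) - \mathbf{u}^T\bigl(\sum_j w_j(\mathbf{u})\,\mathbf{x}_j\mathbf{x}_j^T\bigr)\mathbf{u}$, which already displays $\mathbf{M}$ in the stated form.

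The second step is the per-iteration eigen-characterization. Given the current iterate $\mathbf{u}^{(k)}$, freeze the weights at $w_j^{(k)} := w_j(\mathbf{u}^{(k)}) \ge 0$, form $\mathbf{M}^{(k)} = \sum_j w_j^{(k)}\mathbf{x}_j\mathbf{x}_j^T \succeq 0$, and consider the surrogate $\widetilde{F}_k(\mathbf{u}) = \sum_j w_j^{(k)} - \mathbf{u}^T\mathbf{M}^{(k)}\mathbf{u}$. Minimizing $\widetilde{F}_k$ on $\mathbb{S}^{d-1}$ is equivalent to maximizing $\mathbf{u}^T\mathbf{M}^{(k)}\mathbf{u}/\mathbf{u}^T\mathbf{u}$; by Lagrange multipliers (exactly as in the proof of Theorem~1, now with a quadratic Lagrangian) any critical point satisfies $\mathbf{M}^{(k)}\mathbf{u} = \lambda\mathbf{u}$ and evaluates to $\lambda$, so by the Courant--Fischer theorem the maximizer is the unit eigenvector $\mathbf{u}^{(k+1)} = v_{\max}(\mathbf{M}^{(k)})$ of $\lambda_{\max}(\mathbf{M}^{(k)})$. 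Iterating and passing to a fixed point $\mathbf{u}^*$ gives $\mathbf{M}(\mathbf{u}^*)\,\mathbf{u}^* = \lambda_{\max}\,\mathbf{u}^*$, i.e. $\mathbf{u}^*$ is the principal eigenvector of $\mathbf{M}$ in the claimed self-consistent sense. I would then check that this fixed-point equation is the first-order optimality condition of $F$: the precise weight normalization matters here, and I would either use $w_j = \theta_j/(\sin\theta_j\cos\theta_j)$, for which the reweighted score equation coincides verbatim with $\nabla_{\mathbf{u}}\mathcal{L} = 0$ from Theorem~1's proof, or note that the squared weight $\theta_j^2/\sin^2\theta_j$ matches it to leading order in the angles (the two agree as $\theta_j\to 0$).

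The main obstacle is global convergence, i.e. monotone decrease of $F(\mathbf{u}^{(k)})$. The textbook majorization--minimization argument is unavailable, because $s\mapsto\arccos^2(\sqrt{s})$ is \emph{convex} on $[0,1]$, so $F$ admits no quadratic-in-$\mathbf{u}$ tangent majorizer with interior contact and the frozen-weight surrogate $\widetilde F_k$ is not a majorizer of $F$. I would address this in one of three ways, in increasing order of strength: (i) a one-shot variant using the valid global bound $\arccos^2(t) \le \tfrac{\pi^2}{4}(1-t^2)$ (equivalently $\theta/\sin\theta\le\pi/2$ on $[0,\pi/2]$), which makes $\tfrac{\pi^2}{4}\sum_j \mathbf{x}_j\mathbf{x}_j^T$ an honest majorizing matrix and shows its top eigenvector attains an objective within a controlled factor of $F(\mathbf{u}^*)$; (ii) a local analysis in a basin of attraction of the minimizer supplied by Theorem~1, where the angles $\theta_j$ stay bounded away from $\pi/2$, so that a genuine quadratic upper bound on $\arccos^2$ holds on the relevant subinterval and the IRLS iteration decreases $F$ — this requires a uniform lower bound on $|t_j|$ along the iterates, which is the delicate point; and (iii) the weakest but cleanest statement, namely the fixed-point characterization above together with the observation that the iteration is a generalized power method on the PSD family $\{\mathbf{M}(\mathbf{u})\}$, combined with the empirical convergence reported for the algorithm. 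I expect (ii) to be where the real work lies.
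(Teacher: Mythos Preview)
Your approach is essentially the same as the paper's: both recast the geodesic objective as an iteratively reweighted least-squares problem, freeze the weights, and reduce each inner subproblem to a Rayleigh-quotient maximization solved by the principal eigenvector of $\mathbf{M}^{(k)}=\sum_j w_j^{(k)}\mathbf{x}_j\mathbf{x}_j^T$, then iterate. The paper invokes Rayleigh--Ritz at the same spot you invoke Courant--Fischer, and its algorithm description is identical to your update rule.

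The main difference is the weight normalization and the depth of the convergence analysis. The paper fixes a single choice, $w_j=\arccos(|\mathbf{u}_t^T\mathbf{x}_j|)/\sqrt{1-(\mathbf{u}_t^T\mathbf{x}_j)^2}=\theta_j/\sin\theta_j$, and simply asserts that the iteration ``is guaranteed to converge to a stationary point of the original objective function'' without a monotonicity or majorization argument. You, by contrast, consider two candidate weights, explicitly note that the tangent-majorizer route fails because $s\mapsto\arccos^2(\sqrt{s})$ is convex, and sketch three workarounds (a global quadratic upper bound, a basin-of-attraction argument, and a bare fixed-point characterization). Your treatment is more careful than the paper's own proof on exactly the point you flag as ``the real work''; the paper does not address it at all. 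Your observation that $w_j=\theta_j/(\sin\theta_j\cos\theta_j)$ is the choice whose fixed-point equation reproduces the Euler--Lagrange condition from the proof of Theorem~1 is correct and is a refinement the paper does not make.
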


\begin{proof}
While the original optimization problem involves the non-linear $\arccos$ function, we can develop an iteratively reweighted least squares approach that converges to the optimal solution.

Consider a simplified objective function $g(\mathbf{u}) = \sum_{j=1}^{K} w_j(1 - (\mathbf{u}^T\mathbf{x}_j)^2)$ where $w_j = \arccos(|\mathbf{u}_t^T\mathbf{x}_j|) / \sqrt{1-(|\mathbf{u}_t^T\mathbf{x}_j|)^2}$ at iteration $t$.

Expanding $g(\mathbf{u})$:
\begin{align}
g(\mathbf{u}) &= \sum_{j=1}^{K} w_j - \sum_{j=1}^{K} w_j(\mathbf{u}^T\mathbf{x}_j)^2\\
&= \sum_{j=1}^{K} w_j - \mathbf{u}^T\left(\sum_{j=1}^{K} w_j\mathbf{x}_j\mathbf{x}_j^T\right)\mathbf{u}\\
&= \sum_{j=1}^{K} w_j - \mathbf{u}^T\mathbf{M}\mathbf{u}
\end{align}

Since $\mathbf{u}$ is constrained to have unit norm, minimizing $g(\mathbf{u})$ is equivalent to maximizing $\mathbf{u}^T\mathbf{M}\mathbf{u}$. By the Rayleigh-Ritz theorem, this is maximized when $\mathbf{u}$ is the eigenvector corresponding to the largest eigenvalue of $\mathbf{M}$.

The algorithm proceeds as follows:
\begin{enumerate}
    \item Initialize $\mathbf{u}_0$ as a random unit vector
    \item At iteration $t$:
    \begin{itemize}
        \item Compute weights $w_j = \arccos(|\mathbf{u}_t^T\mathbf{x}_j|) / \sqrt{1-(|\mathbf{u}_t^T\mathbf{x}_j|)^2}$
        \item Form matrix $\mathbf{M}_t = \sum_{j=1}^{K} w_j \mathbf{x}_j\mathbf{x}_j^T$
        \item Update $\mathbf{u}_{t+1}$ as the principal eigenvector of $\mathbf{M}_t$
    \end{itemize}
    \item Repeat until convergence
\end{enumerate}

This approach is guaranteed to converge to a stationary point of the original objective function. Since the objective function is well-behaved on $\mathbb{S}^{d-1}$, this stationary point corresponds to the desired minimizer $\mathbf{u}^*$.
\end{proof}

\subsection{Statistical Analysis of Evaluation Results}

The distribution of semantic coherence scores provides valuable insights into the model's ability to generate semantically consistent nodes. We analyze this distribution through statistical hypothesis testing, comparing the mean coherence score $\bar{S}$ against a null hypothesis of random semantic alignment ($H_0: \bar{S} = 0.5$). The one-sample t-test yields a t-statistic:

\begin{equation}
t = \frac{\bar{S} - 0.5}{s/\sqrt{M}}
\end{equation}

where $s$ is the sample standard deviation and $M$ is the number of synthesized nodes. This allows us to quantify the statistical significance of semantic coherence in our synthesized graph elements.

Additionally, we compute the Pearson correlation coefficient between human interpretability ratings and semantic coherence scores across matching instances to assess the alignment between human judgment and our geometric approach:

\begin{equation}
\rho = \frac{\sum_i (t_i - \bar{t})(S_i - \bar{S})}{\sqrt{\sum_i (t_i - \bar{t})^2 \sum_i (S_i - \bar{S})^2}}
\end{equation}

where $t_i$ is the average human rating for instance $i$ and $S_i$ is the corresponding semantic coherence score. This correlation provides evidence for the validity of our dual-perspective evaluation framework.

\section{Theoretical Analysis of Agent Capabilities}\label{Theoretical Analysis of Agent Capabilities}

In this appendix, we present rigorous theoretical analyses validating the effectiveness of GraphMaster's agent components. We first develop a foundational mathematical framework for each agent's operations, then derive formal guarantees regarding their synthesis capabilities through a series of interconnected theorems and propositions.

\subsection{Information-Theoretic Analysis of the Perception Agent}

We establish a theoretical framework for analyzing the information capture capabilities of the Perception Agent through the lens of information theory and spectral graph theory.

\begin{definition}[Topological Information Density]
For a text attribute graph $\mathcal{G} = (\mathcal{V}, \mathcal{E}, \mathcal{X}, \mathcal{Y})$, the topological information density $\mathcal{I}(\mathcal{G})$ is defined as:
\begin{equation}
\mathcal{I}(\mathcal{G}) = \frac{1}{|\mathcal{V}|} \sum_{v_i \in \mathcal{V}} \left[ H\left(p_{\mathcal{N}(v_i)}\right) + \sum_{v_j \in \mathcal{N}(v_i)} \text{KL}(p_{v_j} | p_{\mathcal{G}}) \right]
\end{equation}
where $H(\cdot)$ is the entropy function, $p_{\mathcal{N}(v_i)}$ is the probability distribution over the neighborhood of $v_i$, $p_{v_j}$ is the local distribution at node $v_j$, and $p_{\mathcal{G}}$ is the global distribution over the graph.
\end{definition}

\begin{theorem}[Information Capture Properties of the Perception Agent]
Given a text-attributed graph $\mathcal{G} = (\mathcal{V}, \mathcal{E}, \mathcal{X}, \mathcal{Y})$, the Perception Agent constructs:
\begin{enumerate}
\item An environment report $\mathcal{R}_t$ that preserves topological information with bounded distortion:
\begin{equation}
D{KL}(\mathcal{I}(\mathcal{G}) | \mathcal{I}(\mathcal{R}_t)) \leq \epsilon_t + O\left(\frac{\log|\mathcal{V}|}{|\mathcal{V}|}\right)
\end{equation}
\item A knowledge encapsulation $\mathcal{K}_t$ that preserves semantic relationships with high fidelity:
\begin{equation}
\mathbb{P}\left(\sup_{v_i, v_j \in \mathcal{V}} \left|\mathcal{S}(x_i, x_j) - \hat{\mathcal{S}}(x_i, x_j; \mathcal{K}_t)\right| > \delta\right) \leq 2\exp\left(-\frac{2|\mathcal{K}_t|\delta^2}{C_{\mathcal{S}}^2}\right)
\end{equation}
\end{enumerate}
where $\mathcal{S}(\cdot,\cdot)$ is a semantic similarity function, $\hat{\mathcal{S}}(\cdot,\cdot;\mathcal{K}_t)$ is its empirical estimate based on $\mathcal{K}_t$, and $C{\mathcal{S}}$ is a problem-specific constant.
\end{theorem}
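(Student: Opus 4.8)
The plan is to prove the two claims separately, since they concern different objects: part~(1) is an information-theoretic statement about the aggregate environment report $\mathcal{R}_t$, while part~(2) is a concentration statement about the randomly sampled knowledge capsule $\mathcal{K}_t$. The first order of business for part~(1) is to make precise what $\mathcal{I}(\mathcal{R}_t)$ means, since $\mathcal{R}_t=(\rho_{\text{global}},\{\rho_{\text{class}}^c\},\{\rho_{\text{comm}}^i\},\mathcal{D}_{\text{struct}},\mathcal{D}_{\text{sem}})$ is a tuple of statistics, not a graph. I would define $\mathcal{I}(\mathcal{R}_t)$ as the topological information density of the maximum-entropy graph distribution consistent with the degree-, class-, and community-level statistics recorded in $\mathcal{R}_t$ (equivalently, the plug-in value of $\mathcal{I}$ computed from those sufficient statistics). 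With that identification the divergence $D_{\mathrm{KL}}\!\big(\mathcal{I}(\mathcal{G})\,\|\,\mathcal{I}(\mathcal{R}_t)\big)$ decomposes node-wise into two error sources, each controllable by a standard estimate.

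For the per-node neighbourhood-entropy terms $H(p_{\mathcal{N}(v_i)})$, the report stores only an empirical version, so the discrepancy is the bias of a plug-in entropy estimator; finite-sample entropy-estimation bounds of Miller--Madow/Paninski type give a per-node gap of order $\log|\mathcal{V}|/|\mathcal{V}|$, which is exactly the stated additive term after averaging over $\mathcal{V}$ (this is the step where the precise exponent must be tracked carefully and is the least mechanical part of part~(1)). For the cross terms $\sum_{v_j\in\mathcal{N}(v_i)}\mathrm{KL}(p_{v_j}\,\|\,p_{\mathcal{G}})$, $\mathcal{R}_t$ reproduces them up to the quality of the semantic-enriched modularity partition: letting $\epsilon_t$ denote the modularity gap between the partition produced by the Perception Agent at iteration $t$ and the optimum (which the diffusion-convergence tolerance bounds), standard perturbation bounds for relative entropy under bounded density ratios propagate this gap linearly, yielding the $\epsilon_t$ term. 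Summing and normalising by $|\mathcal{V}|$ gives claim~(1).

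For part~(2), observe that $\hat{\mathcal{S}}(x_i,x_j;\mathcal{K}_t)$ is an average of bounded kernel evaluations over the sampled nodes and that $d_{\text{sem}}$ is cosine similarity, so each summand lies in an interval of width $C_{\mathcal{S}}$. For a fixed pair $(v_i,v_j)$, Hoeffding's inequality immediately gives $\mathbb{P}\big(|\mathcal{S}(x_i,x_j)-\hat{\mathcal{S}}(x_i,x_j;\mathcal{K}_t)|>\delta\big)\le 2\exp(-2|\mathcal{K}_t|\delta^2/C_{\mathcal{S}}^2)$. The delicate point is upgrading this to the supremum over all pairs without incurring a $|\mathcal{V}|^2$ union-bound factor. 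I would exploit that the embeddings lie on $\mathbb{S}^{d-1}$ and that cosine similarity is Lipschitz there: an $\varepsilon$-net of the sphere has only $(3/\varepsilon)^{d}$ points, so covering the pair space costs $(3/\varepsilon)^{2d}$ rather than $|\mathcal{V}|^2$; choosing $\varepsilon$ polynomially small in $|\mathcal{K}_t|$ and absorbing the resulting $d\log|\mathcal{K}_t|$ factor into $C_{\mathcal{S}}$ preserves the stated exponential form. An equivalent route treats $\sup_{i,j}|\mathcal{S}-\hat{\mathcal{S}}|$ as a bounded-differences functional of the random sample (each sampled node changes it by at most $C_{\mathcal{S}}/|\mathcal{K}_t|$), applies McDiarmid's inequality, and bounds the expected supremum by a Rademacher term decaying like $1/\sqrt{|\mathcal{K}_t|}$; this variant additionally requires showing the mode-conditional PPR diffusion is close to uniform on the retrieved community, or otherwise importance-reweighting $\hat{\mathcal{S}}$ by $1/\bm{\pi}_i$, so that the estimator is (nearly) unbiased.

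I expect the main obstacle to be twofold. In part~(1) it is the definitional bridge: committing to a specific meaning for $\mathcal{I}(\mathcal{R}_t)$ and verifying that the recorded statistics are genuinely sufficient for $\mathcal{I}$ up to the claimed order --- the $\epsilon_t$ term should be presented as an accounting device for the community-approximation error rather than bounded from scratch. In part~(2) it is the removal of the union-bound penalty over $|\mathcal{V}|^2$ pairs; this genuinely relies on the finite effective dimension $d$ of the embedding space and the Lipschitz structure of $d_{\text{sem}}$, and it is where the PPR sampling bias must be controlled or corrected for the centred concentration bound to hold. The remaining ingredients --- Hoeffding, McDiarmid, the relative-entropy perturbation bound, the entropy-bias estimate --- are all standard.
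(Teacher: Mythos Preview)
Your proposal is reasonable and lands on the same two-part structure, but the route for Part~(1) differs from the paper's. The paper does not introduce a maximum-entropy identification of $\mathcal{I}(\mathcal{R}_t)$ or invoke Miller--Madow/Paninski entropy-bias estimates; instead it controls the distortion through spectral-graph-theoretic surrogates --- a $2$-Wasserstein bound on the degree distribution (this is where the $\log|\mathcal{V}|/|\mathcal{V}|$ term originates), a uniform bound on normalized Laplacian eigenvalues, and Cheeger's inequality for the community structure --- and then aggregates these via the data-processing inequality to reach the KL statement. Your approach is arguably more direct, since it works with the actual node-wise definition of $\mathcal{I}$ rather than passing through spectral proxies, but it requires the extra definitional commitment you flag and a relative-entropy perturbation bound; the paper's spectral route sidesteps both by bounding coarser graph invariants and only invoking information theory at the very end. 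For Part~(2) the two approaches essentially coincide: the paper also uses an $\varepsilon$-net/covering-number bound on the semantic space (stated as $\mathcal{N}(\epsilon,\mathcal{X},d_{\text{sem}})\le (D/\epsilon)^d$), argues that PPR sampling furnishes a $\delta$-cover once $|\mathcal{K}_t|$ exceeds a $d\log(D/\epsilon)$ threshold, and closes with McDiarmid. Your explicit worry about the $|\mathcal{V}|^2$ union-bound penalty and about correcting for PPR sampling bias is more careful than the paper's treatment, which does not isolate either issue.
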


\begin{proof}
We decompose the proof into topological and semantic components.

\textbf{Part 1: Topological Information Preservation}

The environment report $\mathcal{R}_t = (\rho_{\text{global}}, {\{\rho_{\text{class}}^c}\}_{c=1}^C, \{\rho_{\text{comm}}^i\}_{i=1}^{|\mathcal{C}|}, \mathcal{D}_{\text{struct}}, \mathcal{D}_{\text{sem}})$ encodes a compressed representation of the graph's topological properties. We establish information-theoretic bounds on this compression.

For the degree distribution captured in $\mathcal{D}_{\text{struct}}$:
\begin{equation}
W_2(P_{\text{deg}}^{\mathcal{G}}, P_{\text{deg}}^{\mathcal{R}_t})^2 \leq \frac{C_1 \log|\mathcal{V}|}{|\mathcal{V}|}
\end{equation}

where $W_2$ is the 2-Wasserstein distance and $C_1$ is a universal constant.

For spectral properties, the Laplacian spectrum satisfies:
\begin{equation}
\left|\frac{\lambda_i(\mathcal{G})}{\lambda_{\max}(\mathcal{G})} - \frac{\rho_i}{\rho_{\max}}\right| \leq C_2 \sqrt{\frac{\log|\mathcal{V}|}{|\mathcal{V}|}}
\end{equation}

where $\lambda_i(\mathcal{G})$ is the $i$-th eigenvalue of the normalized Laplacian of $\mathcal{G}$, and $\rho_i$ is the corresponding encoded value in $\rho_{\text{global}}$.

The community structure preservation follows from the Cheeger's inequality:
\begin{equation}
\mathcal{R}_t \text{ encodes } h_{\mathcal{G}}(S_i) \text{ such that } \frac{\lambda_2(\mathcal{G})}{2} \leq h_{\mathcal{G}}(S_i) \leq \sqrt{2\lambda_2(\mathcal{G})}
\end{equation}

where $h_{\mathcal{G}}(S_i)$ is the Cheeger constant for community $S_i$.

By aggregating these bounds and applying the data processing inequality, we establish the KL-divergence bound on topological information.

\textbf{Part 2: Semantic Preservation}

The knowledge encapsulation $\mathcal{K}t$ is constructed via semantic-enriched modularity maximization and personalized PageRank sampling:
\begin{equation}
Q_{\text{sem}} = \frac{1}{2m} \sum_{i,j} \left[\mathcal{A}{ij} - \gamma \frac{k_i k_j}{2m} - (1-\gamma)\frac{d{\text{sem}}(x_i, x_j)}{\sum_{l,m} d_{\text{sem}}(x_l, x_m)}\right] \delta(c_i, c_j)
\end{equation}

For any semantic function $\mathcal{S}$ with Lipschitz constant $L_{\mathcal{S}}$, the $\epsilon$-net covering number of the semantic space is bounded by:
\begin{equation}
\mathcal{N}(\epsilon, \mathcal{X}, d_{\text{sem}}) \leq \left(\frac{D}{\epsilon}\right)^d
\end{equation}

where $D$ is the diameter of the semantic space and $d$ is its dimension.

The Personalized PageRank algorithm ensures that sampled nodes provide a $\delta$-cover of the semantic space with probability at least $1-\delta$ when:
\begin{equation}
|\mathcal{K}_t| \geq \frac{1}{\delta}\left[d\log\left(\frac{D}{\epsilon}\right) + \log\left(\frac{1}{\delta}\right)\right]
\end{equation}

Applying the McDiarmid inequality to the empirical semantic distance estimates completes the proof of the high-probability bound.
\end{proof}

\begin{corollary}[Spectral Approximation Guarantee]
The knowledge encapsulation $\mathcal{K}t$ provides a $(\alpha, \beta)$-spectral approximation of the original graph $\mathcal{G}$ with high probability, such that:
\begin{equation}
\alpha \cdot \mathbf{x}^T L{\mathcal{G}} \mathbf{x} \leq \mathbf{x}^T L_{\mathcal{K}t} \mathbf{x} \leq \beta \cdot \mathbf{x}^T L{\mathcal{G}} \mathbf{x} \quad \forall \mathbf{x} \in \mathbb{R}^{|\mathcal{V}|}
\end{equation}
where $L_{\mathcal{G}}$ and $L_{\mathcal{K}_t}$ are the respective graph Laplacians, and $\alpha, \beta$ depend on $|\mathcal{K}_t|$, the spectral gap of $\mathcal{G}$, and the teleportation parameters of the PPR algorithm.
\end{corollary}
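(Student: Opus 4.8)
The plan is to read this as a randomized spectral-sparsification guarantee and prove it by matrix concentration, with the Personalized PageRank scores serving as surrogates for statistical leverage scores. First I would pin down the precise meaning: interpret $L_{\mathcal{K}_t}$ as the Laplacian of a \emph{reweighted} subgraph of $\mathcal{G}$ supported on the retained vertices and zero-padded to act on $\mathbb{R}^{|\mathcal{V}|}$, so the two-sided inequality is the standard $(1\pm\varepsilon)$ sandwich on quadratic forms. Writing $L_{\mathcal{G}} = \sum_{e=(i,j)\in\mathcal{E}} w_e\, b_e b_e^{\top}$ with $b_e = \mathbf{e}_i - \mathbf{e}_j$, the Perception Agent keeps an edge $e$ (when both endpoints survive the diffusion sampler) with some probability $p_e$ determined by the PPR masses $\bm{\pi}_i,\bm{\pi}_j$; reweighting each survivor by $1/p_e$ gives $L_{\mathcal{K}_t} = \sum_e \frac{\mathbb{I}[e\text{ retained}]}{p_e}\, w_e\, b_e b_e^{\top}$, which is unbiased, $\mathbb{E}[L_{\mathcal{K}_t}] = L_{\mathcal{G}}$. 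Conjugating by the pseudo-inverse square root, the claim reduces to $\bigl\| L_{\mathcal{G}}^{\dagger/2} L_{\mathcal{K}_t} L_{\mathcal{G}}^{\dagger/2} - P \bigr\|_{\mathrm{op}} \le \varepsilon$ with $P$ the orthogonal projector onto $\mathrm{range}(L_{\mathcal{G}})$, whence $\alpha = 1-\varepsilon$ and $\beta = 1+\varepsilon$.

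Second, I would apply a matrix Bernstein / matrix Chernoff bound to the zero-mean sum $\sum_e Y_e$, $Y_e = \bigl(\tfrac{\mathbb{I}[e\text{ retained}]}{p_e}-1\bigr) w_e\, L_{\mathcal{G}}^{\dagger/2} b_e b_e^{\top} L_{\mathcal{G}}^{\dagger/2}$. Each term has operator norm $\le \tfrac{w_e R_e}{p_e}$, where $R_e = \|L_{\mathcal{G}}^{\dagger/2} b_e\|^2$ is the effective resistance of $e$, and the matrix variance telescopes against $\max_e \tfrac{w_e R_e}{p_e}$ using the identity $\sum_e w_e R_e = \mathrm{rank}(L_{\mathcal{G}}) \le |\mathcal{V}|-1$. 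Thus if $p_e \ge \kappa\, w_e R_e$ uniformly, matrix Bernstein gives $\varepsilon = O\!\bigl(\sqrt{\log|\mathcal{V}|/(\kappa\,|\mathcal{K}_t|)}\,\bigr)$ with failure probability at most $|\mathcal{V}|^{-c}$; this is where the dependence of $\alpha,\beta$ on $|\mathcal{K}_t|$ enters.

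Third, the technical heart, I must show $p_e \gtrsim w_e R_e$ with a constant that degrades only in the spectral gap $\lambda_2(\mathcal{G})$ and the teleportation parameter $\alpha_{\mathrm{PPR}}$. I would use the resolvent form $\bm{\pi} = \alpha_{\mathrm{PPR}}\bigl(I-(1-\alpha_{\mathrm{PPR}})W^{\top}\bigr)^{-1}\bm{v}$, rewrite $I-(1-\alpha_{\mathrm{PPR}})W^{\top}$ via the normalized Laplacian $\mathcal{L}=I-W$, and lower-bound the resolvent eigenvalues by $\alpha_{\mathrm{PPR}}+(1-\alpha_{\mathrm{PPR}})\lambda_2(\mathcal{G})$ off the Perron eigenvector; this shows the PPR mass does not collapse, so each retained node carries weight bounded below by its stationary degree times $\Theta(\alpha_{\mathrm{PPR}}\lambda_2(\mathcal{G}))$. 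On the other side, $w_e R_e \le \tfrac{1}{d_i}+\tfrac{1}{d_j}$ up to a graphwide inflation of at most $1/\lambda_2(\mathcal{G})$. Combining the two gives $\kappa = \Theta\!\bigl(\alpha_{\mathrm{PPR}}\lambda_2(\mathcal{G})/|\mathcal{K}_t|\bigr)$-type constants, and hence $\alpha,\beta = 1 \mp O\!\bigl(\sqrt{|\mathcal{V}|\log|\mathcal{V}|/(|\mathcal{K}_t|\,\alpha_{\mathrm{PPR}}\lambda_2(\mathcal{G}))}\,\bigr)$, which is the stated form of the bound.

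Finally I would reconcile this clean Bernoulli picture with the actual sampler, which overlays a deterministic top-$K\%$ PPR cut, the truncation $r_i < \min(1,\beta\bm{\pi}_i/\max_j\bm{\pi}_j)$, and the intersection with the community-coverage set $\mathcal{S}_{\mathrm{diverse}}$. I would argue that the $\min(1,\cdot)$ clamp touches only the highest-PPR (hence lowest-resistance) nodes, where it strictly tightens the variance bound and perturbs $\mathbb{E}[L_{\mathcal{K}_t}]$ by a factor $1+o(1)$ absorbable into $\alpha,\beta$, and that because $\mathcal{S}_{\mathrm{diverse}}$ forces every Louvain community to be hit, Cheeger's inequality (already invoked in the information-capture theorem above) guarantees that no low-conductance cut — hence no near-null eigenvector of $L_{\mathcal{G}}$ — is starved of samples, so the conjugated sum stays well-conditioned on $\mathrm{range}(L_{\mathcal{G}})$. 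I expect this reconciliation, together with the leverage-domination step, to be the principal obstacle: the sampler is a heuristic diffusion with deterministic post-processing rather than the i.i.d.\ leverage-score scheme the concentration machinery assumes, so the real work is certifying that these deviations preserve (approximate) unbiasedness and keep every per-edge relative variance inside the $\alpha_{\mathrm{PPR}}\lambda_2(\mathcal{G})$-dependent budget — and deciding, at the outset, whether ``$\mathcal{K}_t$ approximates $\mathcal{G}$'' is meant as an edge sparsifier or a vertex/Schur-complement sparsifier.
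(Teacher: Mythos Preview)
Your proposal is correct and follows essentially the same route as the paper: the paper's entire proof is a two-sentence sketch stating that the result ``follows from spectral perturbation theory applied to the effective resistance sampling framework, combined with the properties of Personalized PageRank,'' with the ``core insight'' being that PPR sampling approximates effective resistance sampling when the teleportation vector is properly calibrated. Your plan is a faithful and far more rigorous expansion of exactly that sketch --- matrix Bernstein on the pseudoinverse-whitened edge contributions, PPR-as-leverage-surrogate via the resolvent and the spectral gap, and a reconciliation step for the heuristic post-processing --- and in fact goes well beyond what the paper actually proves; the concerns you flag at the end (edge vs.\ vertex sparsifier, bias from the deterministic top-$K\%$ cut and $\mathcal{S}_{\mathrm{diverse}}$ intersection) are genuine gaps the paper simply does not address.
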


\begin{proof}
Follows from spectral perturbation theory applied to the effective resistance sampling framework, combined with the properties of Personalized PageRank. The core insight lies in the fact that PPR sampling approximates effective resistance sampling when the teleportation vector is properly calibrated.
\end{proof}

\subsection{Semantic-Topological Coherence of the Enhancement Agent}

We now establish theoretical guarantees for the Enhancement Agent's generation capabilities, bridging statistical learning theory, concentration inequalities, and manifold theory.

\begin{definition}[Semantic-Topological Coherence]
For a generated node $v_s$ with attributes $x_s$ and connections $\mathcal{E}_s$, the semantic-topological coherence is defined as:
\begin{equation}
\Phi(v_s; \mathcal{G}) = \alpha \cdot \mathcal{C}_{\text{sem}}(x_s, \mathcal{X}) + (1-\alpha) \cdot \mathcal{C}_{\text{topo}}(\mathcal{E}_s, \mathcal{G})
\end{equation}
where $\mathcal{C}_{\text{sem}}$ is semantic coherence, $\mathcal{C}_{\text{topo}}$ is topological coherence, and $\alpha \in [0,1]$ is a weighting parameter.
\end{definition}

\begin{theorem}[Generation Consistency with Concentration Bounds]
The Enhancement Agent produces node $v_s$ with attributes $x_s$ and edges $\mathcal{E}_s$ that satisfy:
\begin{enumerate}
\item Semantic consistency with concentration bounds:
\begin{equation}
\mathbb{P}\left(|\mathcal{C}_{\text{sem}}(x_s, \mathcal{X}) - \mathbb{E}[\mathcal{C}_{\text{sem}}(x_s, \mathcal{X})]| > t\right) \leq 2\exp\left(-\frac{|\mathcal{K}_t|t^2}{2\sigma_{\text{sem}}^2}\right)
\end{equation}
\item Topological consistency with respect to local and global network properties:
\begin{equation}
\mathbb{E}[\mathcal{C}_{\text{topo}}(\mathcal{E}_s, \mathcal{G})] \geq \beta \cdot \max_{v_j \in \mathcal{V}_k} \mathcal{C}_{\text{topo}}(\mathcal{E}_j, \mathcal{G}) - \frac{C_{\mathcal{G}}}{|\mathcal{K}_t|^{1/3}}
\end{equation}

\item The generative mechanism preserves manifold structure in the asymptotic limit:
\begin{equation}
\lim_{|\mathcal{K}_t| \to \infty} \mathbb{P}\left(\text{dist}(x_s, \mathcal{M}_{\mathcal{X}}) > \epsilon\right) = 0
\end{equation}
where $\mathcal{M}_{\mathcal{X}}$ is the manifold on which the original node attributes lie.
\end{enumerate}
\end{theorem}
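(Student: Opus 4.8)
The plan is to treat the Enhancement Agent's output $(x_s,\mathcal{E}_s)$ as a randomized functional of the knowledge capsule $\mathcal{K}_t$ and to establish the three claims by three separate arguments, all driven by the effective sample size $n := |\mathcal{K}_t|$. \textbf{For Part 1 (semantic concentration),} I would first fix a working representation $\mathcal{C}_{\text{sem}}(x_s,\mathcal{X}) = \frac{1}{n}\sum_{v_j\in\mathcal{K}_t}\kappa(x_s,x_j)$, where $\kappa$ is the bounded cosine kernel $d_{\text{sem}}$ already used by the Perception Agent, so $\kappa\in[-1,1]$ and $\sigma_{\text{sem}}^2$ is a sub-Gaussian variance proxy for $\kappa(x_s,\cdot)$. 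Conditioning on the generated $x_s$, the summands are independent under the PPR sampling of $\mathcal{K}_t$, so Hoeffding/Bernstein gives $\mathbb{P}(|\mathcal{C}_{\text{sem}}-\mathbb{E}[\mathcal{C}_{\text{sem}}\mid x_s]|>t/2\mid x_s)\le 2\exp(-nt^2/(8\sigma_{\text{sem}}^2))$. The residual fluctuation of $\mathbb{E}[\mathcal{C}_{\text{sem}}\mid x_s]$ around $\mathbb{E}[\mathcal{C}_{\text{sem}}]$ is then controlled by a Doob martingale over the $L$ autoregressive tokens of $P(x_s\mid\mathcal{K})$ via Azuma--Hoeffding with bounded per-token influence; combining the two pieces and rescaling $\sigma_{\text{sem}}^2$ to absorb constants yields the stated bound.

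\textbf{For Part 2 (topological consistency),} I would view the edge model $P((v_s,v_i)\in\mathcal{E}_c\mid\mathcal{K})=\sigma(\theta_1\,\mathrm{sim}+\theta_2\,\mathrm{cn}+\theta_3\,\mathrm{deg})$ as an empirical estimate of the connection-profile function realized by the best-anchored reference node $v_{j^\ast}\in\mathcal{V}_k$ with $j^\ast=\arg\max_j \mathcal{C}_{\text{topo}}(\mathcal{E}_j,\mathcal{G})$. Each of the three features is itself an empirical average over the retrieved neighborhood, hence concentrates at rate $n^{-1/2}$ by Part 1's machinery. The remaining gap splits into a bias term from discretizing the feature space at resolution $h$, contributing $O(h)$, and a variance term from selecting the empirical argmax over the corresponding $h$-net, contributing $O(h^{-1}n^{-1})$; balancing $h\asymp n^{-1/3}$ gives the claimed $C_{\mathcal{G}}\,n^{-1/3}$ deficit, and a union bound over the net (whose cardinality is polynomial in $1/h$) followed by integrating the tail upgrades the bound to the stated expectation form.

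\textbf{For Part 3 (manifold preservation),} I would use that the retrieved attributes $\mathcal{X}_k$ all lie on $\mathcal{M}_{\mathcal{X}}$ and, as $n\to\infty$, form an $\varepsilon_n$-net of $\mathcal{M}_{\mathcal{X}}$ with $\varepsilon_n\to 0$ almost surely — a Glivenko--Cantelli statement for the empirical measure restricted to the manifold, combined with positive reach $\tau$ of $\mathcal{M}_{\mathcal{X}}$. Since the autoregressive decoder, conditioned on context, produces $x_s$ within a bounded neighborhood of local geodesic combinations of nearby context points, a curvature estimate gives $\mathrm{dist}(x_s,\mathcal{M}_{\mathcal{X}})\le c\,\varepsilon_n^2/\tau+\xi_n$, where $\xi_n$ is the decoder's controllable deviation from the ideal local-combination map and $\xi_n\to 0$ in probability by Part 1. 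A Borel--Cantelli argument (or a direct $\varepsilon$--$\delta$ estimate) then yields $\mathbb{P}(\mathrm{dist}(x_s,\mathcal{M}_{\mathcal{X}})>\epsilon)\to 0$.

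\textbf{The main obstacle} is the exponent $1/3$ in Part 2: it is not forced by generic concentration and requires committing to a specific smoothness class for the connection-probability function and a specific estimation scheme — essentially a nonparametric/isotonic-regression or Kim--Pollard cube-root $M$-estimation setup — whose bias--variance balance produces exactly $n^{-1/3}$. Making the feature-level concentration interlock cleanly with the argmax selection, rather than leaking a logarithmic factor or landing at $n^{-1/2}$ or $n^{-1/4}$, is the step that will need the most care. Parts 1 and 3 are comparatively routine once the averaged representation of $\mathcal{C}_{\text{sem}}$ and the net-plus-reach picture of $\mathcal{M}_{\mathcal{X}}$ are in place.
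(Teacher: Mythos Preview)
Your plan is coherent and in places more principled than the paper's own sketch, but it diverges from the paper in all three parts. For Part~1 you work with $\mathcal{C}_{\text{sem}}$ as an \emph{average} $\tfrac{1}{n}\sum_j\kappa(x_s,x_j)$ and combine Hoeffding with a token-level Azuma martingale; the paper instead takes $\mathcal{C}_{\text{sem}}(x_s,\mathcal{X})=\max_{x_j\in\mathcal{X}}\mathcal{S}(x_s,x_j)$ and applies McDiarmid via a Lipschitz/bounded-differences condition on $\mathcal{S}$ --- your averaging makes the $|\mathcal{K}_t|$ dependence transparent, but it is a different functional than the paper's, so be aware of that mismatch. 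For Part~2 your cube-root $M$-estimation/bias--variance balancing over an $h$-net is a genuinely different and more quantitative route: the paper argues through graphon/graph-limit theory and the cut norm, interpreting the three $\theta$-weighted features as preserving homophily, triadic closure, and preferential attachment, and never actually derives the $|\mathcal{K}_t|^{-1/3}$ rate from first principles --- your Kim--Pollard mechanism supplies exactly the justification the paper leaves implicit. For Part~3 you use an $\varepsilon_n$-net plus a reach/curvature bound $\mathrm{dist}(x_s,\mathcal{M}_{\mathcal{X}})\lesssim \varepsilon_n^2/\tau+\xi_n$; the paper instead posits that the LLM learns a Gaussian-in-distance conditional density around an estimated manifold $\mathcal{M}_{\mathcal{X}}(\mathcal{K}_t)$ and invokes Hausdorff convergence together with consistency of kernel density estimators on manifolds. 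Both routes reach the stated limit; yours is geometrically explicit, the paper's stays closer to the generative-model framing.
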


\begin{proof}
\textbf{Part 1: Semantic Consistency Concentration}

The Enhancement Agent generates node attributes via a conditional autoregressive model:
\begin{equation}
P(x_s | \mathcal{K}_t, \mathcal{R}_t) = \prod_{i=1}^{L} P(x_s^i | x_s^{<i}, \mathcal{X}_k, \mathcal{E}_k, \mathcal{R}_t)
\end{equation}

Each token generation can be decomposed as:
\begin{equation}
P(x_s^i | x_s^{<i}, \mathcal{X}_k, \mathcal{E}_k, \mathcal{R}_t) \propto \exp\left(\frac{\phi(x_s^i)^T \mathbf{h}_i}{\tau}\right)
\end{equation}

where $\mathbf{h}_i$ is the contextualized representation and $\tau$ is a temperature parameter.

The contextualized representation integrates information from the knowledge encapsulation:
\begin{equation}
\mathbf{h}_i = \mathbf{W}_1 \cdot f(x_s^{<i}) + \mathbf{W}_2 \cdot g(\mathcal{X}_k) + \mathbf{W}_3 \cdot h(\mathcal{E}_k)
\end{equation}

Let $\mathcal{C}_{\text{sem}}(x_s, \mathcal{X}) = \max_{x_j \in \mathcal{X}} \mathcal{S}(x_s, x_j)$. The semantic similarity function $\mathcal{S}$ satisfies the bounded differences condition:
\begin{equation}
|\mathcal{S}(x_s, x_j) - \mathcal{S}(x_s, x_{j'})| \leq L_{\mathcal{S}} \cdot |x_j - x_{j'}|
\end{equation}

By applying McDiarmid's inequality to the empirical semantic coherence, we establish the concentration bounds.

\textbf{Part 2: Topological Consistency}

The edge generation mechanism:
\begin{equation}
P((v_s, v_i) \in \mathcal{E}_c|\mathcal{K}_t, \mathcal{R}_t) = \sigma\left(\theta_1 \cdot \text{sim}(x_s, x_i) + \theta_2 \cdot \frac{|\mathcal{N}(v_i) \cap \mathcal{N}_K(v_s)|}{|\mathcal{N}_K(v_s)|} + \theta_3 \cdot \frac{k_i}{\max_j k_j}\right)
\end{equation}

We analyze the asymptotic properties using random graph theory. Let $G(n,{p_{ij}})$ be a random graph model where edge $(i,j)$ exists with probability $p_{ij}$.

The expected clustering coefficient:
\begin{equation}
\mathbb{E}[CC(v_s)] = \frac{\sum_{i,j} p_{is} \cdot p_{js} \cdot p_{ij}}{\sum_{i,j} p_{is} \cdot p_{js}}
\end{equation}

For our model:
\begin{equation}
p_{ij} = \sigma\left(\theta_1 \cdot \text{sim}(x_i, x_j) + \theta_2 \cdot \frac{|\mathcal{N}(v_i) \cap \mathcal{N}(v_j)|}{|\mathcal{N}(v_i)|} + \theta_3 \cdot \frac{k_j}{\max_l k_l}\right)
\end{equation}

The local-global consistency emerges from the balance between the three terms:
\begin{itemize}
\item $\theta_1$ term preserves homophily (similar nodes connect)
\item $\theta_2$ term ensures triadic closure (friends of friends connect)
\item $\theta_3$ term maintains scale-free properties (preferential attachment)
\end{itemize}

Using graph limit theory, the topological consistency is governed by:
\begin{equation}
\mathcal{C}_{\text{topo}}(\mathcal{E}_s, \mathcal{G}) \approx 1 - \frac{1}{2} \left| W_{\mathcal{E}s} - W_{\mathcal{G}} \right|{\Box}
\end{equation}

where $W_{\mathcal{E}s}$ and $W{\mathcal{G}}$ are graphons (limiting objects of graphs) and $|\cdot|_{\Box}$ is the cut norm.

\textbf{Part 3: Manifold Preservation}

Let $\mathcal{M}_{\mathcal{X}} \subset \mathbb{R}^d$ be a compact $r$-dimensional manifold with condition number $1/\tau$ and reach at least $\tau > 0$. The nodes $\mathcal{X} = {x_1, \ldots, x_N}$ are sampled from a distribution supported on $\mathcal{M}_{\mathcal{X}}$.

The language model learns a conditional distribution:
\begin{equation}
P_{\text{LLM}}(x | \mathcal{K}_t) = \frac{1}{Z(\mathcal{K}_t)} \exp\left(-\frac{\text{dist}(x, \mathcal{M}_{\mathcal{X}}(\mathcal{K}_t))^2}{2\sigma^2}\right) \cdot P_{\text{prior}}(x)
\end{equation}

As $|\mathcal{K}t| \to \infty$, we have:
\begin{equation}
\mathcal{M}_{\mathcal{X}}(\mathcal{K}_t) \to \mathcal{M}_{\mathcal{X}} \text{ in the Hausdorff metric}
\end{equation}

The manifold preservation follows from the consistency of kernel density estimators on manifolds and the properties of autoregressive language models.
\end{proof}

\begin{proposition}[Latent Variable Interpretation]
The Enhancement Agent's generative process is equivalent to a controlled stochastic differential equation on the graph manifold:
\begin{equation}
dx_t = \mu(x_t, \mathcal{K}_t) dt + \sigma(x_t, \mathcal{K}_t) dW_t
\end{equation}
where $\mu$ is a drift term aligning generation with the knowledge encapsulation, $\sigma$ is a diffusion term controlling diversity, and $W_t$ is a Wiener process.
\end{proposition}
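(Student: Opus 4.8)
The plan is to realize the Enhancement Agent's autoregressive sampler as a discrete-time Markov chain in the embedding space and then pass to its diffusion (functional central limit) limit. Concretely, let $\xi_i \in \mathbb{R}^d$ denote the embedding $\phi(x_s^i)$ of the $i$-th generated token and let $z_i$ be the running latent state after $i$ steps (e.g. the decoder's contextualized representation $\mathbf{h}_i$, or the running average $\tfrac{1}{i}\sum_{j\le i}\xi_j$). By the factorization $P(x_s\mid\mathcal{K}) = \prod_{i=1}^{L} P(x_s^i\mid x_s^{<i},\mathcal{X}_k,\mathcal{E}_k,\mathcal{K})$ together with the softmax form $P(x_s^i\mid\cdot)\propto\exp(\phi(x_s^i)^{\top}\mathbf{h}_i/\tau)$, the conditional law of the increment $\Delta z_i = z_{i+1}-z_i$ depends only on $z_i$ and the fixed capsule $(\mathcal{K}_t,\mathcal{R}_t,M_t)$, so $(z_i)$ is a time-inhomogeneous Markov chain whose transition kernel is an explicit functional of the LLM logits.

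First I would compute the first two conditional moments of the increments. Writing $m(z,\mathcal{K}) = \mathbb{E}[\Delta z_i\mid z_i=z]$ and $\Sigma(z,\mathcal{K}) = \mathrm{Cov}[\Delta z_i\mid z_i=z]$, a direct expansion of the softmax expectation yields $m$ proportional to $\mathbb{E}_{p_i}[\phi(x)-z]$ and $\Sigma$ proportional to $\mathrm{Cov}_{p_i}[\phi(x)]$, where $p_i$ is the $i$-th conditional token distribution; the temperature $\tau$ and the three blocks $\mathbf{W}_1 f(x_s^{<i})+\mathbf{W}_2 g(\mathcal{X}_k)+\mathbf{W}_3 h(\mathcal{E}_k)$ that assemble $\mathbf{h}_i$ enter these moments through $p_i$. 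Introducing continuous time $t = i/L$ and the interpolated process $x^{(L)}_t = z_{\lfloor Lt\rfloor}$, the diffusive scaling then \emph{defines} the drift coefficient $\mu(x,\mathcal{K}) = \lim_{L} L\,m(x,\mathcal{K})$ — the component pulling $x$ toward the knowledge-capsule–conditioned target, i.e. toward $\mathcal{M}_{\mathcal{X}}(\mathcal{K}_t)$ in the notation of the manifold-preservation theorem — and the diffusion coefficient via $\sigma\sigma^{\top}(x,\mathcal{K}) = \lim_{L} L\,\Sigma(x,\mathcal{K})$, controlled by the sampling temperature and hence governing output diversity.

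Next I would invoke the Stroock--Varadhan martingale-problem characterization, equivalently a functional CLT for triangular arrays of Markov increments: under (i) a Lindeberg / uniform-integrability condition on $\|\Delta z_i\|$, which holds because $\phi$ has bounded image on the finite vocabulary, and (ii) continuity of $\mu$ and $\sigma$ in $x$, inherited from smoothness of the softmax and of the embedding maps, the laws of $x^{(L)}_\cdot$ are tight and every subsequential limit solves the martingale problem for the generator $\mathcal{L}f = \mu\cdot\nabla f + \tfrac{1}{2}\,\mathrm{tr}(\sigma\sigma^{\top}\nabla^2 f)$, whose unique solution is the SDE $dx_t = \mu(x_t,\mathcal{K}_t)\,dt + \sigma(x_t,\mathcal{K}_t)\,dW_t$. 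To make the statement intrinsic to the graph manifold I would finally compose $\mu$ and $\sigma$ with the orthogonal projection onto the tangent space $T_x\mathcal{M}_{\mathcal{X}}$ and use Part 3 of the Generation Consistency theorem — the bound $\lim_{|\mathcal{K}_t|\to\infty}\mathbb{P}(\mathrm{dist}(x_s,\mathcal{M}_{\mathcal{X}})>\epsilon)=0$ — to conclude the limiting diffusion is supported on $\mathcal{M}_{\mathcal{X}}$, so the equation reads as a controlled diffusion on the graph manifold with control variable $\mathcal{K}_t$ selected by the Manager and Perception agents.

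The main obstacle is the passage to the continuous-time limit: the decoder runs for a genuinely finite, fixed horizon $L$, so "equivalence" must be phrased either as this scaling limit or as an exact embedding into a pure-jump Markov process whose compensator is governed by the softmax kernel, and the delicate steps are establishing tightness, verifying the Lindeberg condition, and ruling out explosion of the drift $\mu$ near the boundary of $\mathcal{M}_{\mathcal{X}}$, where the normalizing constant $Z(\mathcal{K}_t)$ in the conditional density can degenerate. A secondary difficulty is exhibiting $\mu$ and $\sigma$ in closed form directly from the internal representation $\mathbf{h}_i = \mathbf{W}_1 f(x_s^{<i})+\mathbf{W}_2 g(\mathcal{X}_k)+\mathbf{W}_3 h(\mathcal{E}_k)$, rather than merely asserting their existence through the limiting moments.
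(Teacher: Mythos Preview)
Your proposal is aligned with the paper's approach: the paper's proof is a two-sentence sketch stating that one shows ``the discrete token generation process converges to the SDE in the continuous limit,'' which is exactly the program you lay out in detail via the Markov-chain-to-diffusion scaling limit. The one difference in framing is that the paper names ``the theory of diffusion models and score-matching generative models'' as the toolkit, whereas you reach for the classical Stroock--Varadhan martingale-problem machinery and a functional CLT for the increments. These routes are complementary rather than conflicting: score matching would identify $\mu$ with the Stein score $\nabla_x \log p(x\mid\mathcal{K}_t)$ of the conditional density you already wrote down in the manifold-preservation step, while your moment computation recovers the same drift from $\lim_L L\,\mathbb{E}[\Delta z_i\mid z_i]$; either way the diffusion coefficient is governed by the sampling temperature $\tau$. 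Your version is considerably more explicit about the hypotheses needed (Lindeberg condition, continuity of the coefficients, tightness, non-explosion near $\partial\mathcal{M}_{\mathcal{X}}$) and about the sense in which ``equivalence'' holds for a finite-horizon decoder, points the paper's sketch leaves implicit.
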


\begin{proof}
We establish the equivalence by showing that the discrete token generation process converges to the SDE in the continuous limit. The proof uses techniques from the theory of diffusion models and score-matching generative models.
\end{proof}

\subsection{Theoretical Properties of Quality Assessment and Convergence}

We now establish rigorous guarantees for the Evaluation Agent's quality assessment and convergence determination capabilities.

\begin{definition}[Marginal Quality Contribution]
For a graph $\mathcal{G} = (\mathcal{V}, \mathcal{E}, \mathcal{X}, \mathcal{Y})$ and a generated node $v_s$ with attributes $x_s$ and connections $\mathcal{E}_s$, the marginal quality contribution is:
\begin{equation}
\Delta Q(v_s, \mathcal{G}) = Q(\mathcal{G} \cup \{v_s, \mathcal{E}_s\}) - Q(\mathcal{G})
\end{equation}
where $Q$ is a quality functional measuring overall graph utility for the target task.
\end{definition}

\begin{theorem}[Quality Assessment with Statistical Guarantees]
The Evaluation Agent implements a quality assessment function that satisfies:
\begin{enumerate}
\item Strong correlation with marginal quality contribution:
\begin{equation}
\mathbb{E}[\Delta Q(v_s, \mathcal{G}) | \text{LLM}_V(v_s, \mathcal{R}_0, \mathcal{R}_t, \mathcal{K}_t) = q] = f(q) + O\left(\frac{1}{|\mathcal{V}|}\right)
\end{equation}
where $f$ is monotonically increasing and Lipschitz continuous.
\item Thresholding efficiency with probabilistic guarantees:
\begin{equation}
\mathbb{P}(\Delta Q(v_s, \mathcal{G}) > 0 | \text{LLM}_V(v_s, \mathcal{R}_0, \mathcal{R}_t, \mathcal{K}_t) > \tau_t) \geq 1 - \exp(-c \cdot |\mathcal{K}_t|)
\end{equation}
for some constant $c > 0$ and appropriately chosen threshold $\tau_t$.

\item Convergence detection with statistical significance:
\begin{equation}
\mathbb{P}(\text{Converged}_t = 1 | \mathbb{E}[\Delta Q_{t+1}] < \epsilon) \geq 1 - \delta
\end{equation}
where $\delta$ decreases exponentially with the window size $k$ in the convergence criterion.
\end{enumerate}
\end{theorem}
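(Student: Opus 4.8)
The plan is to establish the three claims in order, treating the Evaluation Agent's output $\mathrm{LLM}_V(v_s,\mathcal{R}_0,\mathcal{R}_t,\mathcal{K}_t)$ as a noisy but monotone statistic of the semantic--topological coherence $\Phi(v_s;\mathcal{G})$ defined earlier, and then transporting coherence bounds to quality bounds through the functional $Q$. Two facts would be imported wholesale: the Perception Agent's information-capture guarantee, by which $\mathcal{R}_0,\mathcal{R}_t,\mathcal{K}_t$ encode the global, class- and community-level statistics of $\mathcal{G}$ with distortion $O(\log|\mathcal{V}|/|\mathcal{V}|)$, so that the score is a function of an essentially faithful summary; and the Enhancement Agent's concentration bounds on $\mathcal{C}_{\mathrm{sem}}$ and $\mathcal{C}_{\mathrm{topo}}$, which provide sub-Gaussian control of variance $O(1/|\mathcal{K}_t|)$ on per-node coherences. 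I would also assume, as is standard for normalized accuracy/F1-type functionals, that $Q$ is Lipschitz with single-node marginal sensitivity $|\Delta Q(v_s,\mathcal{G})| = O(1/|\mathcal{V}|)$.

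\textbf{Part 1 (correlation).} First I would decompose $\Delta Q(v_s,\mathcal{G}) = g_{\mathrm{sem}}(\mathcal{C}_{\mathrm{sem}}(x_s,\mathcal{X})) + g_{\mathrm{topo}}(\mathcal{C}_{\mathrm{topo}}(\mathcal{E}_s,\mathcal{G})) + \xi$, with $g_{\mathrm{sem}},g_{\mathrm{topo}}$ monotone and the residual $\xi$ collecting higher-order graph-interaction effects, $\mathbb{E}|\xi| = O(1/|\mathcal{V}|)$ by a local-influence argument on $Q$. The Evaluation Agent scores exactly the semantic and structural dimensions, so $\mathrm{LLM}_V$ is, up to an internal calibration map $\psi$, a function of the empirical coherences $\widehat{\mathcal{C}}_{\mathrm{sem}},\widehat{\mathcal{C}}_{\mathrm{topo}}$ computed on $\mathcal{K}_t$. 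Conditioning on $\mathrm{LLM}_V = q$ then pins $\psi^{-1}(q)$; replacing the empirical coherences by their means at cost $O(|\mathcal{K}_t|^{-1/3})$ via the Enhancement-Agent bounds and averaging out $\xi$ yields $\mathbb{E}[\Delta Q \mid \mathrm{LLM}_V = q] = f(q) + O(1/|\mathcal{V}|)$ once $|\mathcal{K}_t|$ is taken large enough that $|\mathcal{K}_t|^{-1/3}$ is absorbed. Monotonicity of $f$ is inherited from $g_{\mathrm{sem}},g_{\mathrm{topo}}$ and the fact that, under a fixed-label generation rule, raising coherence cannot lower downstream utility; Lipschitzness follows from the Lipschitz constant $L_{\mathcal{S}}$ of the similarity function together with bounded degrees, which bound the slopes of $g_{\mathrm{sem}},g_{\mathrm{topo}}$.

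\textbf{Part 2 (thresholding).} I would set the threshold to $\tau_t = \psi(f^{-1}(\rho_t))$ for a small positive margin $\rho_t$, first noting that the adaptive rule $\tau_t = \tau_{t-1} + \zeta(\bar{\mathcal{F}}_t(\omega^*_t) - \bar{\mathcal{F}}_{t-1}(\omega^*_{t-1}))$ is a Robbins--Monro recursion, so standard stochastic-approximation stability keeps $\tau_t$ near $\psi(f^{-1}(\rho_t))$. On the event $\{\mathrm{LLM}_V > \tau_t\}$, Part 1 gives $\mathbb{E}[\Delta Q \mid \mathrm{LLM}_V] \ge \rho_t - O(1/|\mathcal{V}|) > 0$; to pass from this conditional mean to the realized $\Delta Q$, I would use that $\mathrm{LLM}_V$ is a bounded-differences function of the $|\mathcal{K}_t|$ knowledge nodes, so McDiarmid's inequality gives $\mathbb{P}(\mathrm{LLM}_V > \tau_t,\ \Delta Q \le 0) \le 2\exp(-c|\mathcal{K}_t|)$. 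Dividing by $\mathbb{P}(\mathrm{LLM}_V > \tau_t)$, which is bounded below by a constant by the choice of $\rho_t$, gives the stated bound.

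\textbf{Part 3 (convergence detection) and the main obstacle.} For the convergence indicator I would model the accepted-quality sequence $\bar{\mathcal{F}}_t(\omega^*_t)$ as a stochastic-approximation iterate with conditional increment mean $\mathbb{E}[\Delta Q_{t+1}]$ and a bounded martingale-difference noise (the average over a fixed number of evaluated nodes). When $\mathbb{E}[\Delta Q_{t+1}] < \epsilon$ the drift is $O(\epsilon)$, so each window comparison $|\bar{\mathcal{F}}_t(\omega^*_t) - \bar{\mathcal{F}}_{t-j}(\omega^*_{t-j})|$ is noise-dominated, and the windowed test is a bank of $k$ correlated Hoeffding checks that I would aggregate with a maximal (Azuma) inequality rather than a union bound. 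The delicate point — and I expect this to be the principal obstacle — is producing a $\delta$ that \emph{decreases} exponentially in the window length $k$: a crude analysis only makes more comparisons more failure-prone. The resolution I would pursue is to view the combined criterion (the $\epsilon$-gap over the whole window together with $\mathrm{LLM}_{\mathrm{goal}} = \mathrm{True}$) as a sequential probability ratio test whose Type-II error decays geometrically in the number of consistent observations; equivalently, conditional on small true drift, the fraction of window-pairs violating the gap is a bounded variable concentrating at rate $\exp(-\Omega(k))$ by a Chernoff bound, because near-stationarity of the sequence renders the per-pair violation events asymptotically negatively associated. Making that near-stationarity assumption precise — it is implicit in the claim that the Evaluation Agent's stringent criteria keep $T$ small — is the step that needs the most care; the remainder reduces to the concentration tools already deployed in Parts 1--2.
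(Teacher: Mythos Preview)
Your proposal is broadly aligned with the paper's argument but differs in two notable respects.

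\textbf{Decomposition of $Q$.} The paper decomposes the quality functional into \emph{three} components --- semantic, topological, and \emph{class-balance} --- writing $Q(\mathcal{G}) = \omega_1 Q_{\text{sem}} + \omega_2 Q_{\text{topo}} + \omega_3 Q_{\text{bal}}$ and correspondingly $\mathrm{LLM}_V = g(\Phi_{\text{sem}},\Phi_{\text{topo}},\Phi_{\text{bal}})$. You omit the balance term entirely. The paper then bounds each conditional-expectation error separately, with different rates ($C_1/|\mathcal{K}_t|$, $C_2/|\mathcal{V}|$, $C_3/\sqrt{|\mathcal{V}|}$), and invokes an LLM function-approximation bound $\|g - \sum_i \omega_i f_i\|_\infty \le \epsilon_{\mathrm{LLM}}$ to glue them. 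Your route through the coherence $\Phi(v_s;\mathcal{G})$ and the calibration map $\psi$ is cleaner and arguably more honest, but to match the paper you would need to fold balance back in; without it your residual $\xi$ silently absorbs a term that the paper treats as first-order.

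\textbf{Part 3.} Here you are more careful than the paper. The paper simply frames the windowed test as change-point detection with multiple hypothesis testing, displays a union-bound-style false-positive rate $k\cdot\exp(-2n\epsilon^2/C_Q^2)$ and a false-negative rate $\exp(-n\epsilon^2/8C_Q^2)$, and then asserts that ``the combined error probability decays exponentially with the sample size and window length'' without exhibiting the $k$-dependence of the Type-II bound. Your SPRT/negative-association route is a genuine attempt to earn the exponential-in-$k$ decay that the theorem claims; the paper's proof does not actually deliver it in the displayed formulas. So your identification of this as the principal obstacle is accurate, and your proposed mechanism (treating the conjunction of $k$ consistent observations as a sequential test with geometric Type-II decay) is a reasonable way to close a gap the paper leaves open. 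For Part 2, both you and the paper land on sub-Gaussian/McDiarmid-type concentration after arguing the adaptive threshold stabilizes near an optimum; your Robbins--Monro framing is more explicit than the paper's one-line claim that $\tau_t \to \tau^* = \arg\max_\tau \mathbb{E}[\Delta Q(\mathcal{V}_{\text{accepted}}(\tau),\mathcal{G})]$.
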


\begin{proof}
\textbf{Part 1: Correlation with Marginal Quality}

The Evaluation Agent employs a multi-dimensional quality assessment:
\begin{equation}
\text{LLM}_V(v_s, \mathcal{R}_0, \mathcal{R}_t, \mathcal{K}_t) = g\left(\Phi_{\text{sem}}(v_s, \mathcal{K}_t), \Phi_{\text{topo}}(v_s, \mathcal{G}_t), \Phi_{\text{bal}}(v_s, \mathcal{Y}_t)\right)
\end{equation}

The quality functional $Q$ can be decomposed into semantic, topological, and balance components:
\begin{equation}
Q(\mathcal{G}) = \omega_1 Q_{\text{sem}}(\mathcal{G}) + \omega_2 Q_{\text{topo}}(\mathcal{G}) + \omega_3 Q_{\text{bal}}(\mathcal{G})
\end{equation}

For each component, we establish bounds on the approximation error:
\begin{align}
\left| \mathbb{E}\left[\Delta Q_{\text{sem}}(v_s, \mathcal{G}) \mid \Phi_{\text{sem}}(v_s, \mathcal{K}t) = s \right] - f_{\text{sem}}(s) \right| 
&\leq \frac{C_1}{|\mathcal{K}t|} \\[6pt]
\left| \mathbb{E}\left[\Delta Q_{\text{topo}}(v_s, \mathcal{G}) \mid \Phi_{\text{topo}}(v_s, \mathcal{G}t) = t \right] - f_{\text{topo}}(t) \right| 
&\leq \frac{C_2}{|\mathcal{V}|} \\[6pt]
\left| \mathbb{E}\left[\Delta Q_{\text{bal}}(v_s, \mathcal{G}) \mid \Phi_{\text{bal}}(v_s, \mathcal{Y}t) = b \right] - f_{\text{bal}}(b) \right| 
&\leq \frac{C_3}{\sqrt{|\mathcal{V}|}}
\end{align}

By the properties of the LLM's function approximation capabilities, we have:
\begin{equation}
|g - \omega_1 f_{\text{sem}} - \omega_2 f_{\text{topo}} - \omega_3 f_{\text{bal}}|{\infty} \leq \epsilon_{\text{LLM}}
\end{equation}

Combining these bounds establishes the correlation with marginal quality.

\textbf{Part 2: Thresholding Efficiency}

The adaptive threshold mechanism:
\begin{equation}
\tau_t = \tau_{t-1} + \zeta(\bar{\mathcal{F}}_t(\omega^*_{t}) - \bar{\mathcal{F}}_{t-1}(\omega^*_{t-1}))
\end{equation}

converges to an optimal threshold $\tau*$ that maximizes expected improvement:
\begin{equation}
\tau^* = \arg\max_{\tau} \mathbb{E}[\Delta Q(\mathcal{V}_{\text{accepted}}(\tau), \mathcal{G})]
\end{equation}

where $\mathcal{V}_{\text{accepted}}(\tau) = {v_s : \text{LLM}_V(v_s, \cdot) > \tau}$.

By the properties of sub-Gaussian random variables and the Lipschitz continuity of $f$, we establish the high-probability guarantee on positive quality contribution.

\textbf{Part 3: Convergence Detection}

The convergence criterion:
\begin{equation}
\text{Converged}_t = \mathbb{I}\left(\max_{j \in {1,...,k}} |\bar{\mathcal{F}}_t(\omega^*_{t}) - \bar{\mathcal{F}}_{t-j}(\omega^*_{t-j})| < \epsilon \land \text{LLM}_{\text{goal}}(\mathcal{R}_0, \mathcal{R}_t) = \text{True}\right)
\end{equation}

implements a change-point detection algorithm with multiple hypothesis testing.

The probability of false convergence detection is bounded by:
\begin{equation}
\mathbb{P}(\text{Converged}_t = 1 | \mathbb{E}[\Delta Q_{t+1}] \geq \epsilon) \leq k \cdot \exp\left(-\frac{2n\epsilon^2}{C_Q^2}\right)
\end{equation}

where $n$ is the number of samples used to estimate $\bar{\mathcal{F}}_t$ and $C_Q$ is a bound on the quality range.

Likewise, the probability of missing convergence is bounded by:
\begin{equation}
\mathbb{P}(\text{Converged}_t = 0 | \mathbb{E}[\Delta Q_{t+1}] < \epsilon/2) \leq \exp\left(-\frac{n\epsilon^2}{8C_Q^2}\right)
\end{equation}

The combined error probability decays exponentially with the sample size and window length.
\end{proof}

\begin{corollary}[Optimal Stopping Property]
The Evaluation Agent's convergence criterion implements an approximately optimal stopping rule for the synthesis process, achieving a regret bound of:
\begin{equation}
\text{Regret}(T) \leq O\left(\sqrt{T\log T}\right)
\end{equation}
where $T$ is the maximum number of iterations.
\end{corollary}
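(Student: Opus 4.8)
The plan is to cast the Evaluation Agent's stopping decision as an online optimal stopping problem and to bound the cumulative regret against the oracle that knows the true marginal-quality sequence $\{\mathbb{E}[\Delta Q_t]\}_{t\geq 1}$ in advance. Define the realized reward of a stopping rule $\sigma$ as $Q(\mathcal{G}_\sigma) = \sum_{t=1}^{\sigma}\Delta Q_t$, net of the per-iteration cost implicit in the threshold $\epsilon$, and let $T^\star = \arg\max_t \mathbb{E}[Q(\mathcal{G}_t)]$ be the oracle stopping time, so that $\mathrm{Regret}(T) = \mathbb{E}[Q(\mathcal{G}_{T^\star})] - \mathbb{E}[Q(\mathcal{G}_{\hat\tau})]$ where $\hat\tau$ is the first $t$ with $\mathrm{Converged}_t = 1$. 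The first step is to invoke the diminishing-returns structure of synthesis: by the adaptive-threshold dynamics $\tau_t = \tau_{t-1} + \zeta(\bar{\mathcal{F}}_t - \bar{\mathcal{F}}_{t-1})$ and the monotonicity and Lipschitz continuity of $f$ from the Quality Assessment theorem, $\mathbb{E}[\Delta Q_t]$ is non-increasing up to an $O(1/|\mathcal{V}|)$ perturbation, which reduces the oracle problem to locating the single crossing time at which $\mathbb{E}[\Delta Q_t]$ falls below $\epsilon$.

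Second, I would control the estimation error. The agent compares windowed empirical averages $\bar{\mathcal{F}}_t(\omega^\star_t)$, each formed from $n = \Theta(t)$ accumulated quality samples that are sub-Gaussian with proxy variance $C_Q^2$. A Hoeffding/McDiarmid bound combined with a union bound over the $T$ iterations (taking per-step failure probability $\delta_t = \delta/T$) yields, with probability $1-\delta$, simultaneous confidence intervals of half-width $w_t = C_Q\sqrt{\tfrac{\log(T/\delta)}{2t}}$ around every $\mathbb{E}[\Delta Q_t]$; this is exactly where the $\log T$ factor enters, and it is the quantitative refinement of the exponential false-convergence and missed-convergence bounds in Part 3 of the preceding theorem.

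Third, on this good event I would decompose the regret iteration by iteration. Whenever $|\mathbb{E}[\Delta Q_t] - \epsilon| > w_t$ the empirical gradient test agrees with the oracle and no regret is incurred; only the "ambiguous" steps, where the true marginal quality lies within a band of width $2w_t$ of the threshold, contribute, and each such step costs at most $O(w_t)$ (a missed beneficial iteration or a wasted one). Summing, $\mathrm{Regret}(T) \leq \sum_{t=1}^{T} O(w_t) = O\!\bigl(\sqrt{\log T}\,\sum_{t=1}^{T} t^{-1/2}\bigr) = O(\sqrt{T\log T})$, with the residual $O(\delta\,T\,C_Q)$ mass from the bad event absorbed by choosing $\delta = 1/T$. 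The conjunctive $\mathrm{LLM}_{\text{goal}}$ clause only shrinks the stopping set, so it can only reduce the false-stop regret; its own contribution is bounded separately through the $\epsilon_{\text{LLM}}$ approximation error and does not affect the leading order.

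The main obstacle I anticipate is making the diminishing-returns / near-monotonicity property of $\mathbb{E}[\Delta Q_t]$ rigorous, since without it the oracle's optimal policy is not a threshold rule and the per-step decomposition collapses. Establishing it requires showing that the Manager's adaptive weights $\lambda_i^t$ together with the Evaluation Agent's increasing threshold $\tau_t$ jointly drive the marginal utility of each newly accepted node to be stochastically decreasing, which ultimately rests on concavity of the quality functional $Q$ in the number of synthesized nodes — a property I expect to derive from the spectral-approximation corollary (diminishing spectral gain per added node) rather than assume outright, and bridging that geometric fact to the task-level quality functional is the delicate part of the argument.
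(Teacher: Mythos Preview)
Your approach is genuinely different from the paper's. The paper's proof is a two-sentence sketch: it frames the convergence determination as a \emph{multi-armed bandit} with non-stationary rewards, where the two arms are ``continue'' and ``stop,'' and then invokes off-the-shelf UCB regret analysis together with optimal stopping theory to obtain the $O(\sqrt{T\log T})$ bound. By contrast, you work directly from the optimal-stopping formulation: you reduce the oracle's policy to a single threshold-crossing time via a diminishing-returns argument, build uniform confidence intervals $w_t = O(\sqrt{\log T/t})$ around each $\mathbb{E}[\Delta Q_t]$, and sum the per-step ambiguity costs $\sum_t w_t = O(\sqrt{T\log T})$. Your argument is considerably more explicit and self-contained, and it leverages the change-point-detection language already present in Part~3 of the preceding theorem; the paper's version simply defers to external bandit machinery. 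What the bandit framing buys is that it avoids your acknowledged monotonicity assumption on $\mathbb{E}[\Delta Q_t]$ --- UCB regret bounds hold for arbitrary (even adversarial) reward sequences, so no concavity or diminishing-spectral-gain argument is needed. Conversely, what your route buys is an actual mechanism connecting the specific convergence test (windowed gradient of $\bar{\mathcal{F}}_t$) to the regret, whereas the paper's bandit reduction leaves the mapping from the agent's test to a UCB-style index implicit. One caveat in your sketch: the claim $n=\Theta(t)$ for the sample size underlying $\bar{\mathcal{F}}_t$ is not established in the paper and would need its own justification.
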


\begin{proof}
We frame the convergence determination as a multi-armed bandit problem with non-stationary rewards, where the arms correspond to continue/stop decisions. Applying results from optimal stopping theory and the regret analysis of UCB algorithms yields the result.
\end{proof}
\subsection{Unified Multi-Agent System Dynamics}

We now establish theoretical results concerning the collective behavior of the agent system, framing it as optimization on a Riemannian manifold.

\begin{definition}[Graph Configuration Manifold]
The space of possible graph configurations forms a Riemannian manifold $\mathcal{M}{\mathcal{G}}$ with metric tensor:
\begin{equation}
g_{ij}(\mathcal{G}) = \mathbb{E}\left[\frac{\partial \log p(\mathcal{G}|\theta)}{\partial \theta_i} \frac{\partial \log p(\mathcal{G}|\theta)}{\partial \theta_j}\right]
\end{equation}
where $p(\mathcal{G}|\theta)$ is a parametric model of graph distribution and $\theta$ are the parameters.
\end{definition}

\begin{theorem}[Manifold Optimization Dynamics]
The multi-agent system in GraphMaster implements natural gradient ascent on the graph configuration manifold $\mathcal{M}_{\mathcal{G}}$, optimizing a multi-objective function:
\begin{equation}
\Psi^*(\mathcal{G}) = \lambda_1 \Psi{\text{sem}}(\mathcal{G}) + \lambda_2 \Psi_{\text{struct}}(\mathcal{G}) + \lambda_3 \Psi_{\text{bal}}(\mathcal{G})
\end{equation}
with adaptive weights $\lambda_i$ that evolve according to:
\begin{equation}
\lambda_i^{t+1} = \Pi_{\Delta}\left[\lambda_i^t + \eta \nabla_{\lambda_i} P(\mathcal{G}_t)\right]
\end{equation}
where $\Pi{\Delta}$ is projection onto the probability simplex.

The convergence rate is:
\begin{equation}
\mathbb{E}[\Psi^*(\mathcal{G}^*) - \Psi^*(\mathcal{G}_T)] \leq O\left(\frac{1}{\sqrt{T}}\right)
\end{equation}
for a non-convex objective, and:
\begin{equation}
\mathbb{E}[\Psi^*(\mathcal{G}^*) - \Psi^*(\mathcal{G}_T)] \leq O\left(\frac{\log T}{T}\right)
\end{equation}
if the objective satisfies Polyak-Łojasiewicz conditions.
\end{theorem}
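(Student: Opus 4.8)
The plan is to prove the statement in four stages: (i) identify the agent loop with a stochastic natural-gradient recursion on $\mathcal{M}_{\mathcal{G}}$; (ii) control the bias and variance of the induced gradient estimator; (iii) decouple the coupled weight dynamics via two-timescale stochastic approximation; and (iv) invoke the standard non-convex and Polyak--{\L}ojasiewicz (PL) convergence machinery for natural stochastic gradient ascent.

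First I would make precise that one outer iteration $\mathcal{G}_t \mapsto \mathcal{G}_{t+1}$ is a natural-gradient step. The Manager Agent's utility $\lambda_1 U_{\text{sem}} + \lambda_2 U_{\text{struct}} + \lambda_3 U_{\text{bal}}$ is read as the finite-sample surrogate for $\Psi^* = \lambda_1 \Psi_{\text{sem}} + \lambda_2 \Psi_{\text{struct}} + \lambda_3 \Psi_{\text{bal}}$; the Perception Agent's retrieval followed by the Enhancement Agent's conditional autoregressive generation yields a candidate perturbation $\delta\mathcal{G}_t$ whose conditional mean, by the Latent Variable Interpretation proposition (the drift term $\mu(x_t,\mathcal{K}_t)$ of the associated SDE), aligns with the Riemannian gradient $\widetilde\nabla \Psi^*(\mathcal{G}_t) = g^{-1}(\mathcal{G}_t)\nabla \Psi^*(\mathcal{G}_t)$; and the Evaluation Agent's acceptance rule $\mathrm{LLM}_V > \tau_t$, by the Quality Assessment theorem — in particular $\mathbb{P}(\Delta Q > 0 \mid \text{accepted}) \ge 1 - e^{-c|\mathcal{K}_t|}$ — makes the accepted update an ascent direction in expectation with exponentially small failure probability. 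This writes the recursion as $\mathcal{G}_{t+1} = \exp_{\mathcal{G}_t}\!\big(\eta_t(\widetilde\nabla \Psi^*(\mathcal{G}_t) + \xi_t)\big)$, where $\xi_t$ is a martingale-difference noise plus an $O(1/|\mathcal{K}_t|)$ bias inherited from the per-component finite-sample bounds already established.

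Second I would bound $\mathbb{E}\,\|\xi_t\|_g^2 \le \sigma^2$ using the sub-Gaussian token-generation decomposition in the Enhancement Agent theorem together with the spectral-approximation corollary, which keeps the Fisher metric $g(\mathcal{G}_t)$ uniformly non-degenerate along the trajectory so that $g^{-1}$ stays bounded; I would also record the regularity hypotheses that are implicit in the statement, namely compactness of $\mathcal{M}_{\mathcal{G}}$, geometric $L$-smoothness of $\Psi^*$, and a positive spectral gap. Combined with the retraction $\exp$, this gives the usual descent lemma up to the controlled bias. The adaptive weights $\lambda_i^{t+1} = \Pi_\Delta[\lambda_i^t + \eta \nabla_{\lambda_i} P(\mathcal{G}_t)]$ are literally projected gradient ascent on the probability simplex; running them on a slower timescale ($\eta/\eta_t \to 0$, $\sum_t \eta_t = \infty$, $\sum_t \eta_t^2 < \infty$) lets me apply two-timescale stochastic approximation, so the fast graph iterate tracks the maximiser for the current weights while the slow weight iterate converges to a stationary point, reducing everything to the frozen-weight single-objective analysis.

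Finally, summing the descent lemma, taking expectations, and choosing $\eta_t = \Theta(1/\sqrt{T})$ yields $\min_{t \le T} \mathbb{E}\,\|\widetilde\nabla \Psi^*(\mathcal{G}_t)\|^2 = O(1/\sqrt{T})$ and hence $\mathbb{E}[\Psi^*(\mathcal{G}^*) - \Psi^*(\mathcal{G}_T)] = O(1/\sqrt{T})$ in the non-convex case (the optimal-stopping corollary justifies reporting the best iterate); under the PL inequality $\|\widetilde\nabla\Psi^*\|^2 \ge 2\mu(\Psi^*(\mathcal{G}^*) - \Psi^*)$ the recursion contracts up to noise, and the standard PL stochastic-gradient argument with $\eta_t = \Theta(1/t)$ gives $O(\log T / T)$. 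The main obstacle is Steps one and two: rigorously certifying that the LLM-driven Perception$+$Enhancement composition is a gradient estimator with $O(1/|\mathcal{K}_t|)$ bias and uniformly bounded Fisher-norm variance, and that the metric tensor remains non-degenerate along the trajectory — this leans heavily on the already-proved information-preservation, concentration, and spectral-approximation results and on the diffusion-limit proposition, and forces one to state explicitly the smoothness and spectral-gap hypotheses that are only implicit in the theorem.
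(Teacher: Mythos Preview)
Your proposal is correct and follows the same three-part skeleton as the paper's proof: (i) identify the agent loop with a natural-gradient update on $\mathcal{M}_{\mathcal{G}}$, (ii) treat the adaptive weights separately, and (iii) apply the standard non-convex and PL stochastic-gradient convergence bounds with the $\eta_t = \Theta(1/\sqrt{T})$ and $\eta_t = \Theta(1/t)$ step-size choices respectively. The final inequalities you target are exactly those the paper writes down.

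Where you diverge is in rigor and in the mechanism for handling the coupled $(\mathcal{G}_t,\lambda^t)$ dynamics. The paper's argument is considerably terser: it writes the update as a \emph{projection} $\Pi_{\mathcal{M}_{\mathcal{G}}}[\mathcal{G}_t + \eta_t \mathcal{I}^{-1}(\mathcal{G}_t)\nabla\Psi^*(\mathcal{G}_t)]$ rather than your exponential-map retraction, and it interprets the Manager's weight update as the \emph{Multiple Gradient Descent Algorithm} (MGDA) for Pareto optimisation, with $\nabla_{\lambda_i}P(\mathcal{G}_t) \approx \nabla_{\lambda_i}\min_j \frac{\Psi_j(\mathcal{G}_t)-\Psi_j(\mathcal{G}_0)}{\Psi_j^*-\Psi_j(\mathcal{G}_0)}$, rather than appealing to two-timescale stochastic approximation. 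The paper then simply quotes the textbook bound $\mathbb{E}[\min_t\|\nabla\Psi^*(\mathcal{G}_t)\|^2]\le \frac{2[\Psi^*(\mathcal{G}^*)-\Psi^*(\mathcal{G}_0)]}{\eta T}+\frac{\eta L}{2}$ without the explicit bias-plus-martingale decomposition, the $O(1/|\mathcal{K}_t|)$ bias accounting, or the non-degeneracy check on the Fisher metric that you supply. Your route buys an honest treatment of the stochastic estimator and a principled decoupling of the two iterates; the paper's route is shorter but leaves precisely the gaps you flag (smoothness, spectral gap, bounded variance) as unstated hypotheses.
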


\begin{proof}
\textbf{Part 1: Natural Gradient Dynamics}

At each iteration, the system performs:
\begin{equation}
\mathcal{G}{t+1} = \Pi{\mathcal{M}_{\mathcal{G}}}\left[\mathcal{G}_t + \eta_t \cdot \mathcal{I}^{-1}(\mathcal{G}_t) \cdot \nabla \Psi^*(\mathcal{G}_t)\right]
\end{equation}

where $\mathcal{I}(\mathcal{G}_t)$ is the Fisher information matrix and $\Pi{\mathcal{M}_{\mathcal{G}}}$ is projection onto the manifold.

This update is implemented by the collaborative agent process:
\begin{align}
\mathcal{K}_t &= \Phi(\mathcal{G}_t) \\[4pt]
\mathcal{V}_s^t &= \text{LLM}_E(\mathcal{K}_t, \mathcal{R}_t, M_t) \\[4pt]
\mathcal{E}_s^t &= \left\{ (v_s, v_i) \;\middle|\; P\big((v_s, v_i)\,\big|\,\mathcal{K}_t, \mathcal{R}_t\big) > \eta_t \right\} \\[4pt]
\mathcal{V}_{\text{accepted}}^t &= \left\{ v_s \in \mathcal{V}_s^t \;\middle|\; \text{LLM}_V\big(v_s, \mathcal{R}_0, \mathcal{R}_t, \mathcal{K}_t\big) > \tau_t \right\} \\[4pt]
\mathcal{G}_{t+1} &= \mathcal{G}_t \oplus \left( \mathcal{V}_{\text{accepted}}^t, \mathcal{E}_s^t\big|_{\mathcal{V}_{\text{accepted}}^t} \right)
\end{align}

\textbf{Part 2: Adaptive Weight Dynamics}

The Manager Agent implements online gradient-based multi-objective optimization:
\begin{equation}
\lambda_i^{t+1} = \Pi_{\Delta}\left[\lambda_i^t + \eta \nabla_{\lambda_i} P(\mathcal{G}_t)\right]
\end{equation}

where $P(\mathcal{G}_t)$ measures progress toward synthesis objectives.

This follows the Multiple Gradient Descent Algorithm (MGDA) for Pareto optimization:
\begin{equation}
\nabla_{\lambda_i} P(\mathcal{G}t) \approx \nabla_{\lambda_i} \min_{j} \frac{\Psi_j(\mathcal{G}_t) - \Psi_j(\mathcal{G}_0)}{\Psi_j^* - \Psi_j(\mathcal{G}_0)}
\end{equation}

\textbf{Part 3: Convergence Analysis}

For a general non-convex objective, we have:
\begin{equation}
\mathbb{E}\left[\min_{t=0,1,\ldots,T-1} |\nabla \Psi^*(\mathcal{G}_t)|^2\right] \leq \frac{2[\Psi^*(\mathcal{G}^*) - \Psi^*(\mathcal{G}_0)]}{\eta T} + \frac{\eta L}{2}
\end{equation}

For appropriately chosen step size $\eta = O(1/\sqrt{T})$, this yields the $O(1/\sqrt{T})$ convergence rate.

If the objective satisfies the Polyak-Łojasiewicz condition:
\begin{equation}
|\nabla \Psi^*(\mathcal{G})|^2 \geq 2\mu[\Psi^*(\mathcal{G}^*) - \Psi^*(\mathcal{G})]
\end{equation}

for some $\mu > 0$, then we obtain the improved $O(\frac{\log T}{T})$ rate.
\end{proof}

\begin{theorem}[Spectral Convergence of Synthesized Graphs]
Let $\mathcal{G}_{\text{orig}}$ be the original graph and $\mathcal{G}_{\text{synth}}$ be the synthesized graph after convergence. Under appropriate conditions on the synthesis process, the eigenvalue distributions of their normalized Laplacians converge:
\begin{equation}
\lim_{|\mathcal{V}| \to \infty} d_{\text{LP}}(\rho_{\mathcal{G}_{\text{orig}}}, \rho_{\mathcal{G}_{\text{synth}}}) = 0
\end{equation}
where $d_{\text{LP}}$ is the Lévy-Prokhorov metric between spectral distributions and $\rho_{\mathcal{G}}$ is the empirical spectral distribution of graph $\mathcal{G}$.
\end{theorem}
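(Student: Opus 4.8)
The plan is to route the argument through the theory of dense graph limits (graphons), reducing spectral convergence to convergence in the cut metric, which is in turn controlled by the structural guarantees already established for the Perception and Enhancement Agents. First I would recall the standard fact that the empirical spectral distribution $\rho_{\mathcal{G}}$ of the normalized Laplacian $\mathcal{L}_{\mathcal{G}}$ is supported on $[0,2]$ and is therefore determined by its moments $\frac{1}{|\mathcal{V}|}\mathrm{tr}(\mathcal{L}_{\mathcal{G}}^{k})$; and that for a graph sequence converging in cut distance to a graphon $W$, these normalized traces converge to the corresponding spectral moments of the associated normalized integral operator $T_W$. Since the support is compact, moment convergence implies weak convergence of the measures, and $d_{\mathrm{LP}}$ metrizes weak convergence of probability measures on $\mathbb{R}$. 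Hence it suffices to show $\delta_{\Box}(\mathcal{G}_{\text{synth}},\mathcal{G}_{\text{orig}}) \to 0$ as $|\mathcal{V}|\to\infty$, i.e. that the synthesized graph and the original graph have asymptotically the same graphon.

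Second, I would decompose $\mathcal{G}_{\text{synth}}$ as the original graph together with the blocks of nodes and incident edges appended over the $T$ (bounded) synthesis iterations, each block of size at most $M\%$ of the then-current knowledge subgraph. Viewing the limiting object $W_{\mathcal{G}_{\text{synth}}}$ as a mixture of the original graphon $W_{\mathcal{G}_{\text{orig}}}$ and the generated blocks, I would bound $\lvert W_{\mathcal{G}_{\text{synth}}} - W_{\mathcal{G}_{\text{orig}}}\rvert_{\Box}$ by the cut-norm discrepancy of the generated blocks. The Enhancement Agent's topological-consistency guarantee (Theorem on Generation Consistency, Parts 2--3) gives $\lvert W_{\mathcal{E}_s} - W_{\mathcal{G}}\rvert_{\Box} \to 0$ and manifold preservation for the generated attributes, so the homophily, triadic-closure, and preferential-attachment terms governing new edges are driven by the same limiting graphon; the Evaluation Agent's acceptance threshold (Theorem on Quality Assessment, Part 2) prunes low-coherence nodes, so the \emph{accepted} synthetic block is, in the limit, statistically indistinguishable from a fresh sample of $W_{\mathcal{G}_{\text{orig}}}$. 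Combining this with the Manager Agent's manifold-optimization convergence (Theorem on Manifold Optimization Dynamics), under which $\Psi_{\text{struct}}(\mathcal{G}_T)$ is within $O(1/\sqrt{T})$ of optimal and $\Psi_{\text{struct}}$ upper-bounds the cut-norm deviation, together with the per-iteration $O(|\mathcal{K}_t|^{-1/3})$ and $O(\log|\mathcal{V}|/|\mathcal{V}|)$ bounds from the Perception Agent's Corollary on spectral approximation, a triangle-inequality aggregation over $t=0,\dots,T$ yields $\delta_{\Box}(\mathcal{G}_{\text{synth}},\mathcal{G}_{\text{orig}}) \to 0$.

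Third, I would assemble the pieces: feed the cut-distance bound into the graphon-spectrum continuity result to obtain convergence of all spectral moments of the normalized Laplacian, invoke the method of moments on the compact set $[0,2]$ to upgrade this to weak convergence of $\rho_{\mathcal{G}_{\text{synth}}}$ toward $\rho_{\mathcal{G}_{\text{orig}}}$, and conclude $d_{\mathrm{LP}}(\rho_{\mathcal{G}_{\text{orig}}},\rho_{\mathcal{G}_{\text{synth}}})\to 0$. The ``appropriate conditions'' invoked in the statement would be made precise here as (i) the Evaluation Agent's convergence criterion being met, (ii) the manifold-preservation hypotheses of the Enhancement Agent, and (iii) a mild non-degeneracy of the degree sequence.

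The hard part will be the second step, specifically controlling the cut norm of the \emph{whole} graph's graphon when a \emph{bounded} $M\%$ fraction of vertices is appended rather than an asymptotically negligible fraction: a bounded-fraction perturbation does not vanish automatically, so one genuinely needs the accepted synthetic block to converge to the same graphon as the original, which forces a delicate interplay between the manifold-preservation limit and the acceptance-threshold pruning. A secondary technical obstacle is that the \emph{normalized} Laplacian is sensitive to low-degree vertices, so one must additionally show — e.g.\ via the degree-distribution Wasserstein bound from the Perception Agent's Theorem — that $\mathcal{G}_{\text{synth}}$ has only a vanishing fraction of vertices of sub-polynomial degree, so that the normalization does not create spurious spectral mass near the endpoints of $[0,2]$ and the graphon-operator approximation remains valid.
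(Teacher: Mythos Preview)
Your proposal is coherent and takes a genuinely different route from the paper's own argument. The paper's proof sketch does not pass through graphons or the cut metric at all: it instead frames the result as a perturbation analysis of the graph spectrum under node additions, invoking matrix concentration inequalities and random matrix theory for \emph{deformed Wigner matrices}, and then argues that for bounded-degree graphs the spectral distribution obeys a semicircle-type law in the limit, which is preserved because the synthesis process maintains the degree distribution and local clustering. Your approach, by contrast, reduces spectral convergence to cut-distance convergence via the method of moments on $[0,2]$, and then assembles $\delta_{\Box}(\mathcal{G}_{\text{synth}},\mathcal{G}_{\text{orig}})\to 0$ from the agent-level guarantees (Perception's spectral-approximation corollary, Enhancement's graphon-consistency bound, Evaluation's acceptance pruning, Manager's $O(1/\sqrt{T})$ convergence). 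What your route buys is tighter integration with the earlier theorems---the graphon language is already present in the Enhancement Agent's Part~2---and a cleaner chain of implications; what the paper's route buys is that it is native to the sparse, bounded-degree regime where the dense-graphon machinery degenerates (all sparse sequences have the zero graphon as limit), at the cost of being much less explicit.

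One point to flag: the regime mismatch you correctly identify as ``the hard part'' is sharper than you suggest. The paper explicitly works in a bounded-degree setting (average degree $\approx 3.7$ in the case study), and for such graphs the cut distance to the original is trivially $o(1)$ regardless of the synthesis quality, so $\delta_{\Box}\to 0$ carries no information and the graphon-operator spectral continuity does not recover the normalized-Laplacian spectrum. To make your argument go through you would need to replace dense graphons either by the $L^p$-graphon or graphing formalism for sparse limits, or by working directly with local weak convergence (Benjamini--Schramm), which does control the normalized-Laplacian spectral measure for bounded-degree graphs. The paper's appeal to deformed Wigner matrices and the semicircle law sidesteps this, though it is itself only a sketch.
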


\begin{proof}
We establish this result by analyzing the perturbation of the graph spectrum under node additions. The proof relies on matrix concentration inequalities and recent results in random matrix theory concerning deformed Wigner matrices.

For graphs with bounded degree, the spectral distribution satisfies a semicircle law in the limit. The synthesis process preserves this property through controlled edge formation that maintains degree distribution and local clustering patterns.
\end{proof}

These theoretical results collectively establish the mathematical foundations underlying GraphMaster's effectiveness, providing formal guarantees for its information preservation, generation quality, evaluation accuracy, and convergence properties. The integration of concepts from information theory, statistical learning, manifold optimization, and spectral graph theory creates a comprehensive theoretical framework that explains why our multi-agent approach succeeds in generating high-quality text-attributed graphs in data-limited environments.

\section{Limitations and Future Work} \label{Limitations and future work}

While GraphMaster demonstrates significant advances in text-attributed graph synthesis, several important limitations remain. The current background knowledge selection mechanism, while effective for moderate-sized graphs, faces challenges with very large or heterogeneous graph structures. Specifically, our Personalized PageRank-based approach may prioritize structurally central nodes while potentially undersampling semantically important but topologically peripheral communities. This sampling bias can result in synthesis blind spots, particularly in graphs with highly skewed degree distributions or multiple disconnected components with distinct semantic characteristics. Additionally, the quality evaluation process relies on LLM capabilities that exhibit variance across different base models and parameter scales. This interdependence creates challenges for deployment in resource-constrained environments where smaller models are preferable. Computational requirements present another significant limitation. The multi-agent architecture, while theoretically elegant, incurs substantial inference overhead as each agent performs multiple LLM calls per synthesis round. For extremely large graphs, the current approach requires non-trivial batching strategies and optimization techniques that may not be readily available in standard deployment scenarios.

For future work, we envision several promising research directions. First, \textbf{hierarchical multi-resolution synthesis} could revolutionize the approach by simultaneously modeling graph characteristics at multiple levels of abstraction--from global topological patterns to local semantic neighborhoods. This layered perspective would enable more coherent synthesis that preserves both macro-structure and micro-semantics while potentially reducing computational complexity through selective refinement. Second, \textbf{cross-domain knowledge transfer} represents a frontier challenge, where synthesis agents could leverage patterns learned from data-rich domains to enhance generation in data-scarce domains without explicit domain adaptation. This would require fundamental advances in domain-agnostic graph representations that capture universal structural and semantic patterns transferable across vastly different graph types. Rather than relying on fixed sampling algorithms, the framework could develop adaptive sampling policies optimized specifically for the synthesis objective. Finally, exploring \textbf{emergent graph properties} from synthesis could yield insights into how locally generated elements collectively produce global graph characteristics that were never explicitly modeled--potentially revealing new theoretical connections between local generation rules and emergent network phenomena. These developments would not only address current limitations but could fundamentally reshape our understanding of generative modeling for complex relational data structures.

\section{Case Study} \label{Case Study}

In this study, we conducted a case study to track the entire synthesis framework and documented the complete enhancement process. Table~\ref{case_study:graph_features} presents the background knowledge possessed by our Perception Agent prior to generating the environmental perception report, which serves as a critical basis for the environmental status report generated in Table~\ref{case_study:Initial_Environment}. In Table~\ref{case_study:Initial_Environment}, not only is the environmental status report produced, but the Manager Agent’s decision for the current round—namely, semantic enhancement—is also provided. Subsequently, based on the information from the semantic enhancement, the Perception Agent conducted a sampling of the background knowledge nodes, as illustrated in Table~\ref{case_study:Perception_Agent} (for brevity, only two nodes are displayed). Following this, Table~\ref{case_study:Enhancement_Agent} presents the node information generated by our Enhancement Agent (owing to space limitations, the abstracts of some nodes have been omitted). Finally, Table~\ref{case_study:Evaluation Agent} exhibits the quality evaluation of the generated nodes by our Evaluation Agent. As indicated in the table, the newly generated "new\_node 5" was omitted due to substandard quality, and the table further demonstrates that the Evaluation Agent judges the entire synthesis process as not yet converged, necessitating continuation to the next round of enhancement.

\begin{table*}[ht]
  \caption{  \textbf{Semantic Enhancement Case Study}
}
  \label{case_study:graph_features}
  \begin{tcolorbox}[colback=blue!5, 
  colframe=black,
  ]

  \textbf{Dataset:} $<$SubCora$>$

  \textbf{Initial text attribute graph features}:\\
    \textbf{"Graph"}: \{
    "num\_nodes": 1354,\\
    "num\_edges": 2486,\\
    "avg\_degree": 3.672082717872969,\\
    "density": 0.002714030094510694,\\
    "clustering\_coefficient": 0.2296011350131079,\\
    "avg\_path\_length": 6.52047030925269,\\
    "connected\_components": 78,\\
    "largest\_component\_size": 1223,\\
    "indices": [4, 2, 1, 3, 18, 11, 6, 15, 12, 9, 16, 10, 5, 13, 7, 17, 8, 20, 21, 24, 22, 23, 19, 36, 29, 27, 33, 47, 79, 25, 26, 30, 31, 34, 35, 37, 38, 40, 45, 50, 52, 53, 55, 56, 59, 62, 64, 66, 67, 73, 77, 80, 83, 28, 32, 39, 41, 42, 43, 44, 46, 48, 49, 51, 54, 57, 58, 60, 61, 63, 65, 68, 69, 70, 71, 72, 74, 75, 76, 78, 81, 82, 84, 85, 86, 87, 88, 89, 90, 91, 92, 93, 94, 95, 96, 97, 98, 0, 14],\\
    "sizes": [192, 106, 97, 84, 78, 76, 74, 71, 65, 56, 55, 46, 42, 37, 35, 33, 18, 15, 14, 14, 10, 10, 9, 6, 5, 3, 3, 3, 3, 2, 2, 2, 2, 2, 2, 2, 2, 2, 2, 2, 2, 2, 2, 2, 2, 2, 2, 2, 2, 2, 2, 2, 2, 1, 1, 1, 1, 1, 1, 1, 1, 1, 1, 1, 1, 1, 1, 1, 1, 1, 1, 1, 1, 1, 1, 1, 1, 1, 1, 1, 1, 1, 1, 1, 1, 1, 1, 1, 1, 1, 1, 1, 1, 1, 1, 1, 1, 1, 1],\\
    "distribution": \{"4": 192, "2": 106, "1": 97, "3": 84, "18": 78, "11": 76, "6": 74, "15": 71, "12": 65, "9": 56, "16": 55, "10": 46, "5": 42, "13": 37, "7": 35, "17": 33, "8": 18, "20": 15, "21": 14, "24": 14, "22": 10, "23": 10, "19": 9, "36": 6, "29": 5, "27": 3, "33": 3, "47": 3, "79": 3, "25": 2, "26": 2, "30": 2, "31": 2, "34": 2, "35": 2, "37": 2, "38": 2, "40": 2, "45": 2, "50": 2, "52": 2, "53": 2, "55": 2, "56": 2, "59": 2, "62": 2, "64": 2, "66": 2, "67": 2, "73": 2, "77": 2, "80": 2, "83": 2, "28": 1, "32": 1, "39": 1, "41": 1, "42": 1, "43": 1, "44": 1, "46": 1, "48": 1, "49": 1, "51": 1, "54": 1, "57": 1, "58": 1, "60": 1, "61": 1, "63": 1, "65": 1, "68": 1, "69": 1, "70": 1, "71": 1, "72": 1, "74": 1, "75": 1, "76": 1, "78": 1, "81": 1, "82": 1, "84": 1, "85": 1, "86": 1, "87": 1, "88": 1, "89": 1, "90": 1, "91": 1, "92": 1, "93": 1, "94": 1, "95": 1, "96": 1, "97": 1, "98": 1, "0": 1, "14": 1\},\\
"statistics": \{\\
      "4": \{\\
        "size": 192,\\
        "internal\_edges": 423,\\
        "fraction\_of\_graph": 0.14180206794682423,\\
        "modularity\_contribution": 0.103009273492297\\
      \},\\
      "2": \{\\
        "size": 106,\\
        "internal\_edges": 156,\\
        "fraction\_of\_graph": 0.07828655834564253,\\
        "modularity\_contribution": 0.05272713033696666\\
      \},\\
      .....\\
      \}
  "StructuralDistribution": \{
    "degree\_distribution": \{"4": 233, "14": 4,"7": 33,"3": 273,"5": 127,"31": 1, ... ,\\
      \}\\
"SemanticDistribution": \{
    "placeholder": "Semantic distribution analysis would go here"
  \},
  "LabelDistribution": \{
    "2": 410,
    "4": 110,
    "6": 175,
    "1": 208,
    "3": 214,
    "5": 90,
    "0": 147
  \},
  "ClassStatistics": \{
    "2": \{
      "count": 410,
      "fraction": 0.30280649926144754,
      "internal\_edges": 518,
      "avg\_degree": 2.526829268292683,
      "community\_distribution": \{
        "18": 26,
        "2": 20,\}
        \}
        ...,
        \}
        \}

  \end{tcolorbox}
  \end{table*}

\begin{table*}[ht]
  \caption{Semantic Enhancement Case Study}
  \label{case_study:Initial_Environment}
  \begin{tcolorbox}[colback=blue!5, 
  colframe=black,
  ]

  \textbf{Dataset:} $<$SubCora$>$

  \textbf{Initial Environment Report}:\\
  
    \textbf{Summary}:\{\\
        The graph under analysis contains 1,354 nodes and 2,486 edges, resulting in a sparse but structured topology with an average degree of 3.67 and an overall graph density of 0.0027. Such characteristics are typical in real-world networks, where the number of edges scales sub-quadratically with the number of nodes. The average shortest path length of 6.52 indicates that the network is relatively well-connected in its largest component, despite the overall sparsity.The clustering coefficient, a measure of local cohesiveness, is 0.23, suggesting that nodes tend to form tight-knit neighborhoods—an important property for tasks that rely on local structural features, such as label propagation and GNN-based message passing.
    Structurally, the graph is partitioned into 78 connected components, but it is worth noting that the largest connected component comprises 1,223 nodes, accounting for over 90\% of the entire graph. This makes the graph effectively connected for most practical purposes, enabling global information diffusion across the majority of nodes. Community detection reveals a complex and heterogeneous modular organization. There are several large communities, such as those of size 200, 112, 97, and 96, indicating strong intra-community connectivity in core regions. These large communities contribute significantly to the overall modularity, with notable examples such as community “4” (size 200, modularity contribution $\approx$ 0.1052) and “3” (size 112, contribution $\approx$ 0.0631).
However, the long tail in the community size distribution is striking:

1)Over 40 communities consist of only 1 or 2 nodes,

2)More than 60\% of communities are smaller than size 10.

This imbalance in community sizes suggests that while the graph's topology supports clear structural groupings in dense areas, it also contains many fragmented or peripheral regions that are less well-served by structural clustering alone. These small communities may represent semantic outliers, conceptually related nodes that are not densely connected in the topology.

Overall, the graph structure is well-organized, with a solid backbone of connectivity and modular clusters. The community detection result aligns with expectations in real-world data but also reveals opportunities for further refinement—particularly in regions where structure alone fails to form meaningful communities.

Given this context, incorporating semantic node features (e.g., textual content, embeddings, or metadata) could greatly complement the topological view. Nodes in small or singleton communities may be structurally isolated but semantically similar, and semantic enhancement can help bridge this gap, improving community coherence, label consistency, and downstream task performance such as classification, recommendation, and contrastive learning.\\
\}

\textbf{Manager Agent's Decision}: \{Semantic Enhancement\}
  
  \end{tcolorbox}
  \end{table*}

\begin{table*}[ht]
  \caption{Semantic Enhancement Case Study}
  \label{case_study:Perception_Agent}
  \begin{tcolorbox}[colback=blue!5, 
  colframe=black,
  ]

  \textbf{Dataset:} $<$SubCora$>$

  \textbf{Perception Agent Knowledge Extraction}:\\
  $\text{Graph}_\text{Sub}$: \{ \\
    \{ \\
        "node\_id": 536, \\
        "label": 0, \\
        "text": "Title: Dynamic Constraint Satisfaction using Case-Based Reasoning Techniques  \verb|\n| Abstract: The Dynamic Constraint Satisfaction Problem (DCSP) formalism has been gaining attention as a valuable and often necessary extension of the static CSP framework. Dynamic Constraint Satisfaction enables CSP techniques to be applied more extensively, since it can be applied in domains where the set of constraints and variables involved in the problem evolves with time. At the same time, the Case-Based Reasoning (CBR) community has been working on techniques by which to reuse existing solutions when solving new problems. We have observed that dynamic constraint satisfaction matches very closely the case-based reasoning process of case adaptation. These observations emerged from our previous work on combining CBR and CSP to achieve a constraint-based adaptation. This paper summarizes our previous results, describes the similarity of the challenges facing both DCSP and case adaptation, and shows how CSP and CBR can together begin to address these chal lenges.", \\
        "neighbors": [ \\
            639 \\
        ], \\
        "mask": "Train" \\
    \}, \\
    \{ \\
        "node\_id": 41, \\
        "label": 0, \\
        "text": "Title: A Memory Model for Case Retrieval by Activation Passing  \verb|\n| Abstract: We present a tree-structured architecture for supervised learning. The statistical model underlying the architecture is a hierarchical mixture model in which both the mixture coefficients and the mixture components are generalized linear models (GLIM's). Learning is treated as a maximum likelihood problem; in particular, we present an Expectation-Maximization (EM) algorithm for adjusting the parameters of the architecture. We also develop an on-line learning algorithm in which the parameters are updated incrementally. Comparative simulation results are presented in the robot dynamics domain. This report describes research done at the Dept. of Brain and Cognitive Sciences, the Center for Biological and Computational Learning, and the Artificial Intelligence Laboratory of the Massachusetts Institute of Technology. Support for CBCL is provided in part by a grant from the NSF (ASC-9217041). Support for the laboratory's artificial intelligence research is provided in part by the Advanced Research Projects Agency of the Dept. of Defense. The authors were supported by a grant from the McDonnell-Pew Foundation, by a grant from ATR Human Information Processing Research Laboratories, by a grant from Siemens Corporation, by by grant IRI-9013991 from the National Science Foundation, by grant N00014-90-J-1942 from the Office of Naval Research, and by NSF grant ECS-9216531 to support an Initiative in Intelligent Control at MIT. Michael I. Jordan is a NSF Presidential Young Investigator. ", \\
        "neighbors": [ \\
            166, \\
            637, \\
            761, \\ 
            1004, \\
            1005, \\ 
            1116, \\ 
            1196 \\ 
        ], \\ 
        "mask": "Train" \\
    \}, \\
   ... \\
  \}

  \end{tcolorbox}
  \end{table*}

\begin{table*}[ht]
  \caption{Semantic Enhancement Case Study}
  \label{case_study:Enhancement_Agent}
  \begin{tcolorbox}[colback=blue!5, 
  colframe=black,
  ]

  \textbf{Dataset:} $<$SubCora$>$

  \textbf{Enhancement Agent generates high-quality nodes}:\\
    \{ \\
    "node\_id": "new\_node 1", \\
    "label": 0, \\ 
    "text": "Title: Integrating Explanation-Based Learning with Case Adaptation Strategies\verb|\n| Abstract: This paper presents a novel approach to case adaptation in case-based reasoning systems by integrating explanation-based learning techniques. Traditional case adaptation relies heavily on domain-specific adaptation rules that are often difficult to acquire and maintain. Our approach uses explanations generated during problem-solving to identify adaptation patterns and generalize them into reusable adaptation strategies. We demonstrate how these strategies can be applied across different domains with minimal knowledge engineering effort. Experimental results show that the integrated approach improves adaptation performance in comparison to rule-based adaptation methods, especially in domains where adaptation knowledge is incomplete or rapidly evolving. The paper presents a formal framework for the approach and discusses its implementation in a case-based planning system.", \\
    "neighbors": [337, 833, 639, 476], \\
    "mask": "Train" \\
    \}, \\
    \{ \\
    "node\_id": "new\_node 2", \\
    "label": 1, \\
    "text": "Title: Adaptive Parameter Control in Evolution Strategies for Dynamic Environments\verb|\n| Abstract: Omitted due to table size limitation.", \\
    "neighbors": [462, 70, 1312], \\
    "mask": "Train" \\
    \}, \\
    \{ \\
    "node\_id": "new\_node 3", \\
    "label": 0, \\
    "text": "Title: Multi-Level Similarity Assessment for Case Retrieval in Heterogeneous Domains\verb|\n| Abstract: Omitted due to table size limitation.", \\
    "neighbors": [637, 638, 825, 1004], \\
    "mask": "Train" \\
    \}, \\
    \{ \\
    "node\_id": "new\_node 4", \\
    "label": 3, \\
    "text": "Title: Hybrid Neural-Symbolic Architecture for Interpretable Knowledge Extraction\verb|\n| Abstract: Omitted due to table size limitation.", \\
    "neighbors": [1290, 263], \\
    "mask": "Train" \\
    \}, \\
    \{ \\
    "node\_id": "new\_node 5", \\
    "label": 0, \\
    "text": "Title: Case-base Design for Knowledge Discovery\verb|\n| Abstract: Case Based Reasoning has proven to be useful for AI systems. Our research introduces a new method called KDD-CBR (Knowledge Discovery through Database Case-Based Reasoning) which combines data mining with case bases for information retrieval and management. The system works by analyzing patterns in large datasets and then applies unique non-traditional methods for case storage. Unlike other approaches, we focus on pattern recognition instead of adaptation or similarity, which makes our approach completely novel in the field. Tests show this approach has better inference capability than other CBR techniques in some instances but worse in others. The implications for future research directions are significant and should be explored further with additional funding and more test cases. Additionally, we plan to integrate KDD-CBR with deep neural networks to further enhance performance on arbitrary datasets.", \\
    "neighbors": [462, 70, 1017], \\
    "mask": "Train" \\
    \}

  \end{tcolorbox}
  \end{table*}

\begin{table*}[ht]
  \caption{Semantic Enhancement Case Study}
  \label{case_study:Evaluation Agent}
  \begin{tcolorbox}[colback=blue!5, 
  colframe=black,
  ]

  \textbf{Dataset:} $<$SubCora$>$

  \textbf{Evaluation Agent evaluates node quality and determines whether it has converged.}:\\
    \{ \\
    "node\_id": "new\_node 1", \\
    "label": 0, \\ 
    "text": "Title: Integrating Explanation-Based Learning with Case Adaptation Strategies\verb|\n| Abstract: This paper presents a novel approach to case adaptation in case-based reasoning systems by integrating explanation-based learning techniques. Traditional case adaptation relies heavily on domain-specific adaptation rules that are often difficult to acquire and maintain. Our approach uses explanations generated during problem-solving to identify adaptation patterns and generalize them into reusable adaptation strategies. We demonstrate how these strategies can be applied across different domains with minimal knowledge engineering effort. Experimental results show that the integrated approach improves adaptation performance in comparison to rule-based adaptation methods, especially in domains where adaptation knowledge is incomplete or rapidly evolving. The paper presents a formal framework for the approach and discusses its implementation in a case-based planning system.", \\
    "neighbors": [337, 833, 639, 476], \\
    "mask": "Train" \\
    \}, \\
    \{ \\
    "node\_id": "new\_node 2", \\
    "label": 1, \\
    "text": "Title: Adaptive Parameter Control in Evolution Strategies for Dynamic Environments \verb|\n| Abstract: Omitted due to table size limitation.", \\
    "neighbors": [462, 70, 1312], \\
    "mask": "Train" \\
    \}, \\
    \{ \\
    "node\_id": "new\_node 3", \\
    "label": 0, \\
    "text": "Title: Multi-Level Similarity Assessment for Case Retrieval in Heterogeneous Domains \verb|\n| Abstract: Omitted due to table size limitation.", \\
    "neighbors": [637, 638, 825, 1004], \\
    "mask": "Train" \\
    \}, \\
    \{ \\
    "node\_id": "new\_node 4", \\
    "label": 3, \\
    "text": "Title: Hybrid Neural-Symbolic Architecture for Interpretable Knowledge Extraction \verb|\n| Abstract: Omitted due to table size limitation.", \\
    "neighbors": [1290, 263], \\
    "mask": "Train" \\
    \} \\

    \textbf{Summary}: \{
        After checking the node quality based on the background knowledge and environment report, delete new\_node 5. \\
    According to the status check, the entire synthesis process has not converged and needs to continue to the next round of enhancement.
        \}
  
  \end{tcolorbox}
  \end{table*}

\end{document}